\documentclass[11pt]{article}

\usepackage[a4paper,margin=1in]{geometry}
\usepackage{amsmath,amssymb,amsthm,mathtools,bm}
\usepackage{booktabs,longtable,array}
\usepackage{enumitem}
\usepackage{xcolor}
\usepackage{hyperref}
\usepackage{algorithm}
\usepackage{algpseudocode}
\usepackage{float}
\usepackage{microtype}
\usepackage{ulem}
\usepackage{graphicx}
\usepackage{subcaption}

\usepackage{caption}
\usepackage{hyperref}
\hypersetup{
  colorlinks=true,
  linkcolor=blue!60!black,
  citecolor=blue!60!black,
  urlcolor=blue!60!black
}

\newcommand{\NA}{--}

\newtheorem{assumption}{Assumption}[section]
\newtheorem{theorem}{Theorem}[section]
\newtheorem{proposition}[theorem]{Proposition}
\newtheorem{lemma}[theorem]{Lemma}

\newtheoremstyle{remarkitalic}
  {3pt}        
  {3pt}        
  {\itshape}   
  {}           
  {\bfseries}  
  {.}          
  {.5em}       
  {}           
\theoremstyle{remarkitalic}
\newtheorem{definition}[theorem]{Definition}
\theoremstyle{remarkitalic}
\newtheorem{remark}[theorem]{Remark}

\newcommand{\R}{\mathbb R}

\newcommand{\E}{\mathbb E}
\newcommand{\Prob}{\mathbb P}

\newcommand{\Sspace}{\mathcal S}
\newcommand{\dd}{\,\mathrm d}

\newcommand{\norm}[1]{\left\lVert #1 \right\rVert}
\newcommand{\abs}[1]{\left\lvert #1\right\rvert}
\newcommand{\Tr}{\operatorname{Tr}}

\usepackage{verbatim} 
\title{Continuous-Time Reinforcement Learning for Controlled Hawkes Jump-Diffusions}
\author{Tomasz R. \textsc{Bielecki}\footnote{Hinsdale, Illinois, USA, \url{tuslecki@gmail.com}}\and Thibaut \textsc{Mastrolia}\footnote{UC Berkeley, Department of Industrial Engineering and Operations Research, USA, \url{mastrolia@berkeley.edu}}\and Haoze \textsc{Yan}\footnote{UC Berkeley, Department of Industrial Engineering and Operations Research, USA, \url{haoze.yan@berkeley.edu}} }
\date{\today}

\begin{document}
\maketitle

\begin{abstract}
We study stochastic control of multivariate Hawkes-driven stochastic differential equations with machine learning algorithms in a non-Markovian setting. Due to the path dependence of the memory of the Hawkes intensity, this problem does not fall within classical stochastic control theory outside particular Markovian kernels. We first develop a finite-dimensional Markovianization procedure and algorithm to approximate multivariate Hawkes processes with mixtures of exponential kernels. We prove the convergence of the Markovianized approximation of the Hawkes process, its intensity, and the value of the problem to the original non-Markovian processes and the value of the primal problem. We then formulate continuous-time deterministic policy gradient learning on the Markovianized approximation of the problem, called Hawkes-CT DDPG. We propose a model-free algorithm to solve the non-Markovian Hawkes-driven optimization by observing only the event times of the process, the realization of the solution to the SDE, and a chosen set of decay filters, while the Hawkes kernel coefficients remain unknown. We compare our continuous time reinforcement learning Hawkes-CT DDPG method with discrete time reinforcement learning techniques under three different types of kernels: simple exponential, Erlang, and power-law kernels.
\end{abstract}

\section{Introduction}

Cybersecurity has become a major operational concern for large and highly interconnected organizations, where a single security incident can rapidly disrupt essential services and expose sensitive information. A recent illustration is the May 2026 cybersecurity incident affecting Canvas, a cloud-based learning management system used by thousands of schools and universities. The breach led for example UC Berkeley to temporarily restrict access to the platform and potentially exposed information including names, email addresses, student identification numbers, and messages exchanged through Canvas, highlighting the vulnerability of organizations to attacks propagated through third-party digital infrastructure. Such events also illustrate an important feature of cyber risk: attacks rarely occur as isolated and independent events, but may instead arrive in clusters, with previous incidents increasing the likelihood of subsequent malicious activity. Hawkes processes provide a natural probabilistic framework for representing this self-exciting behavior \cite{baldwin2017contagion}. At the same time, defenders must dynamically decide how to allocate limited security resources as the threat environment evolves. This motivates the combination of Hawkes-process models of cyber-event arrivals with reinforcement learning, which provides a flexible framework for learning adaptive defense strategies in an uncertain and dynamically changing environment. The objective of this study is twofold: study the controlled system of a Hawkes-driven system for general self-excitation processes and develop reinforcement techniques in continuous time to minimize costs related to cascade incidents in an unknown environment.  
\subsection{Hawkes processes: theory, applications, and control}

In his pioneering work \cite{hawkes1971spectra}, Alan G. Hawkes introduced the theory of self-exciting point processes, in which the occurrence of an event raises the likelihood of subsequent events; the intensity then decays over time before returning to a baseline level. Since this seminal work, the theory and applications of it have known a growing interest. We refer to for example some recent studies focusing on multivariate processes  \cite{Bielecki2022, Bielecki2022a}, parameters estimation for linear Hawkes process with general kernel \cite{cartea2021gradient}, limit and convergence theorems and approximation results \cite{jaisson2015limit,khabou2024normal,coutin2025quantification,pace2024convergence}, expansion formulas and Malliavin calculus \cite{hillairet2025explicit,hillairet2025multivariate} or extensions to rough diffusions \cite{jaisson2016rough,el2019characteristic,bondi2024rough}. These processes have found a remarkably wide range of applications, including earthquake modeling \cite{van2010connecting,kwon2023flexible}, finance and microstructure \cite{embrechts2011multivariate,horst2022microstructure}, market order flow modeling \cite{jusselin2020no,jusselin2021optimal,graf2026learning,gennaro2026signature}, see also the review papers \cite{bacry2015hawkes,hawkes2018hawkes}, credit risk \cite{errais2010affine}, epidemiology \cite{rizoiu2018sir}, soccer timing of threat events and ball touches of players \cite{baouan2025crediting}, more recently, cyber risk \cite{baldwin2017contagion,dubois2022cyber,bessy2021multivariate, callegaro2025stochastic,hillairet2023expansion} or cascade accidents in power plant investment \cite{agostino2026approximation}. This last application leads naturally to the question of controlling Hawkes processes, in which a hacker and a defender each seek to optimally manage their losses over the course of an attack episode.\vspace{0.3em}

Numerical schemes for simulating Hawkes processes via the so-called thinning method were first investigated in \cite{lewis1976simulation,ogata1981lewis}, and later rigorously formalized through a two-dimensional Poisson measure by Br\'emaud and Massouli\'e \cite{bremaud1996stability}. Numerical methods for generalized Hawkes have also been recently developed in \cite{zhang2009rare}. The emergence of cyber risk has renewed the mathematical community's interest in control of Hawkes processes, inducing new challenges related to the control of the intensity process. In its simplest form, a Hawkes process $N$ is a counting process whose self-exciting intensity $\lambda$ is given by
\[
    \lambda_t = \mu_\infty(t) + \int_0^{t-} \Phi(t-s)\,dN_s,
\]
where $\mu_\infty$ is the baseline intensity and $\Phi$ is a kernel that measures the magnitude of the self-excited jumps. The aim of this paper is to study the control of such dynamics in a multidimensional framework, namely
\[
    \dd X_t
    =
    b(t,X_t,a_t)\,\dd t
    +
    \sigma(t,X_t,a_t)\,\dd W_t
    +
    \sum_{i=1}^m \gamma_i(t,X_{t-},a_t)\,\dd N_t^i,
\]
where the intensity of $N = (N^1,\dots,N^m)$ is denoted by
$ \lambda_t = (\lambda_t^1,\dots,\lambda_t^m)^\top \in \R_+^m,$
and is assumed to be defined componentwise by
\[
    \lambda^i_t
    =
    \mu_i(t,X_{t-},a_t)
    +
    \sum_{j=1}^m
    \int_0^{t-}\Phi_{ij}(t-s)\,\dd N_s^j,
\]
where $\Phi$ is a mutual excitation matrix between each component of $N$.
This kernel lies at the heart of the complexity of Hawkes processes, and developing a self-contained control theory for them remains an open challenge, with only partial preliminary results available so far. When $m=1$ and $\Phi$ takes the exponential form $\Phi(s) = q\,e^{-\beta s}$, known as the exponential kernel, the intensity $\lambda$ admits an explicit expression and, in particular, becomes a Markov process. \cite{Bensoussan2024} exploited this structure to reduce the stochastic control of a Hawkes-driven diffusion with an exponential kernel to a standard Markovian optimization problem with state variables $(N,\lambda)$ later extended in \cite{callegaro2025stochastic,ongarato2026hawkes} to the control of cyber systems, again restricting attention to a single exponential kernel. To the best of our knowledge, the main recent breakthrough is due to \cite{KhabouTalbi2025}, who approximate the value of a stochastic control problem driven by a general-kernel Hawkes process by that of a regularized version driven by a mixture of exponentials. This construction, known as the Markovianization of the generalized Hawkes model, has also been studied for Volterra processes in \cite{abi2019multifactor}.

\subsection{Hawkes CT-DDPG: the big picture}

The first ingredient of our Hawkes CT-DDPG method is to reconstruct the approximate, Markovianized control problem using only the observed realizations of the controlled diffusion together with its event (jump) times, and to store in a replay buffer the new state, composed of the updated diffusion value and its jump components. More precisely, the control process $a$ at time $t$ governs how the existing Hawkes memory is read at that time, while past jumps continue to contribute through a classical Hawkes kernel. The Markovian approximation of a kernel $\Phi$ controlled by $a$ takes the form
\[
    \Phi(\tau,a) \approx \Phi_K(\tau,a) = \sum_{k=1}^K Q_k(a)\,e^{-\beta_k \tau}.
\]
 Note that mixtures of exponentials with possibly negative weights are dense in $L^p$, see \cite{KAMMLER1976384}. We use this fundamental result for the Markovianization of the kernel $\Phi$ above through the matrices $Q_k(a)$ which may have non-positive entries. This may produce negative raw memory values. To avoid that, we define the componentwise positive part to the approximated memory term:
\[
    \lambda_t^K
    =
    \mu(t,X_{t-}^K,a_t)
    +
    \left(\sum_{k=1}^K Q_k(a_t)\,Z_{t-}^{K,k}\right)_+,
\]
where $Z^{K,k}=(Z^{K,k,j})_{j=1}^m$, driven by the jumps of $N^K$, componentwise by
\begin{equation*}
Z_{t-}^{K,k,j}
:=
\int_{(0,t)} e^{-\beta k(t-s)}\,\dd N_s^{K,j},
\qquad j=1,\dots,m.
\end{equation*}  

A key technical point is that the memory variables entering the intensity must be predictable. For this reason, throughout the paper the approximating intensity is evaluated at the left-limit memory $Z_{t-}^{K,k}$. The process $Z_t^{K,k}$ itself is c\`adl\`ag and jumps at event times; this convention keeps the intensity predictable while preserving the Markov property of the lifted process. This is the purpose of our first layer, summarized in Algorithm~\ref{alg:state_update} below.\\

The second ingredient builds on the recent works on continuous time reinforcement learning, investigated in \cite{doya2000reinforcement} and used in stochastic control in \cite{wang2020reinforcement} with a relaxed-control formulation and more recently in \cite{ChengGuoZhang2026} which uses a Continuous-Time Deep Deterministic Policy Gradient (CT-DDPG) method to update both the control (actor) and the value function (critic) with neural networks. The central idea is to define the training losses for the network weights by extracting as much information as possible from the martingale property of stochastic integrals (the so-called generalized moment method), together with an advantage rate designed to learn the Hamiltonian of the corresponding HJB equation, which is the only quantity containing the unknown parameters. This second layer is summarized in Algorithm~\ref{alg:hawkes_ctddpg} below.\\

Finally, the convergence of our algorithm rests on two main contributions: the Markovianization approximation and the error incurred when training the actor and critic networks via the Hawkes CT-DDPG method. The former can be analyzed theoretically, whereas the latter remains poorly understood within current reinforcement learning theory and can only be assessed numerically, by benchmarking our algorithm against an oracle and against analytical solutions, where these exist, as a sanity check.\\

The paper is organized as follows. Section~\ref{sec:model} introduces the controlled Hawkes jump--diffusion model together with the main standing assumptions. Section~\ref{sec:markov_approx} develops the exponential Markov approximation based on mixtures of exponentials, establishes the corresponding convergence theorem, and presents the online Markov-state update algorithm. Section~\ref{sec:ctddpg} formulates the Hawkes CT-DDPG method, including the Bellman equation, the advantage rate, the deterministic policy gradient identity, the martingale characterization, and the implementation algorithm. Finally, Section~5 presents numerical experiments under three kernels:
a single-exponential kernel, an Erlang kernel, and a power-law kernel.
For the first two cases, the exact finite-dimensional Markov
representations allow us to construct known-parameter DGM/HJB oracle
benchmarks. For the power-law case, which has no exact
finite-dimensional Markov representation, we instead use a
DGM/HJB benchmark obtained after approximating the kernel by a finite
mixture of exponentials. We compare Hawkes CT-DDPG with these
model-based benchmarks, a validation-selected static policy, and the
discrete-time reinforcement-learning methods SAC and DDPG, and
we show, in particular, that Hawkes CT-DDPG remains the closest approximator of the oracle in all scenarios and consistently outperforms the other discrete RL methods, thereby extending the findings of \cite{ChengGuoZhang2026} to Hawkes-driven diffusions. 

\section{Mathematical framework and controlled Hawkes jump-diffusions}
\label{sec:model}

In the full paper we set a probability space $(\Omega,\mathcal F,\mathbb P)$ endowed with a $d_W$-dimensional Browninan motion $W$. 
Let $T>0$ be a finite horizon. 
Let $\Pi:=(\Pi^i)_{i\leq m}$ be a family of $m$ independent Poisson random measures on $(0,T]\times\mathbb{R}_+$ each with compensator $ds\,d\theta$. We set
\[
\mathcal{F}_t
:=
\sigma\!\left(
X_0,
W_s,
\Pi^i((0,s]\times B):
0\leq s\leq t,\ i=1,\ldots,m,\
B\in\mathcal{B}(\mathbb{R}_+),\ \operatorname{Leb}(B)<\infty
\right),
\]
and let $\mathbb{F}$ be the usual augmentation of
$\mathbb{F}^0=(\mathcal{F}^{0}_t)_{0\leq t\leq T}$.
We call $\mathbb{F}$ the environmental filtration.

Let $A\subset\R^{d_a}$ be the action space with $d_a>0$ supposed to be compact. An admissible control, also called a policy in the optimization problem,
is an $A$-valued $\mathbb{F}$-predictable process
$
a=(a_t)_{0\leq t\leq T}.$
The random variable $a_t$ is the local action applied at time $t$. We set
\[
\mathcal{A}
:=
\left\{
 a:[0,T]\times\Omega\longrightarrow A:
 a\text{ is }\mathbb{F}\text{-predictable}
\right\}.
\]
\begin{remark}
    The class $\mathcal{A}$ is fixed independently of the kernel
approximation index $K$ defined in the following section. In particular, when the original and
approximating systems are compared later,
the same value $a_t(\omega),; \omega\in \Omega$ is inserted into both systems.
\end{remark} Let $b,\sigma,\gamma$ be the drift, volatility and jump severity coefficients defined for $d_x>0$ by

\[ b:[0,T]\times\R^{d_x}\times A\to\R^{d_x},
    \qquad
    \sigma:[0,T]\times\R^{d_x}\times A\to\R^{d_x\times d_W},
\]
\[
    \gamma_i:[0,T]\times\R^{d_x}\times A\to\R^{d_x},
    \qquad i=1,\dots,m.
\] We now define the controlled Hawkes-driven SDE system. Let $(\Omega,\mathcal F,\mathbb F,\mathbb P)$ support an
$\mathbb R^{d_x}$-valued initial condition $X_0$, a
$d_W$-dimensional Brownian motion $W$, and mutually independent
Poisson random measures $\Pi^1,\ldots,\Pi^m$, independent of
$(X_0,W)$, each with intensity $dt\,d\theta$. Let $a\in\mathcal{A}$. A controlled Hawkes jump--diffusion under $a$
is a strong solution
$
(X^a,N^a,\lambda^a)$
of the coupled Brownian-Poisson measure system
\begin{equation}
\label{eq:poisson-embedded-hawkes-sde}
\left\{
\begin{aligned}
X_t^a
={}&X_0
+\int_0^t b(s,X_{s-}^a,a_s)\,ds
+\int_0^t \sigma(s,X_{s-}^a,a_s)\,dW_s
\\
&\quad
+\sum_{i=1}^m
\int_{(0,t]}\int_0^\infty
\gamma_i(s,X_{s-}^a,a_s)
\mathbf{1}_{\{\theta\leq\lambda_s^{a,i}\}}
\,\Pi^i(ds,d\theta),\; X_0\in \mathbb R^{d_x},
\\[0.25cm]
N_t^{a,i}
={}&
\int_{(0,t]}\int_0^\infty
\mathbf{1}_{\{\theta\leq\lambda_s^{a,i}\}}
\,\Pi^i(ds,d\theta),
\qquad i=1,\ldots,m,
\\[0.25cm]
\lambda_t^{a,i}
={}&
\mu_i(t,X_{t-}^a,a_t)
\\
&\quad
+\sum_{j=1}^m
\int_{(0,t)}\int_0^\infty
\Phi_{ij}(t-s,a_t)
\mathbf{1}_{\{\theta\leq\lambda_s^{a,j}\}}
\,\Pi^j(ds,d\theta),
\qquad i=1,\ldots,m.
\end{aligned}
\right.
\end{equation}
\begin{remark}
  The control inside the kernel $\Phi$ is given by the current action $a_t$, not the historical action $(a_s)_{s\leq t}$. The coefficient $\Phi_{ij}(\tau,a)$ represents the lag-$\tau$ excitation of component $i$ caused by a past jump in component $j$, as read under the current action $a_t$. 
\end{remark}

\noindent Here $X^a$ and $N^a$ are c\`adl\`ag, and $\lambda^a$ is required to be
nonnegative and $\mathbb{F}$-predictable. Equivalently,
\begin{equation}
\label{eq:controlled-hawkes-intensity}
\lambda_t^a
=
\mu(t,X_{t-}^a,a_t)
+
\int_{(0,t)}\Phi(t-s,a_t)\,dN_s^a,
\end{equation}
and the state equation satisfies
\[
dX_t^a
=
b(t,X_{t-}^a,a_t)\,dt
+\sigma(t,X_{t-}^a,a_t)\,dW_t
+\sum_{i=1}^m
\gamma_i(t,X_{t-}^a,a_t)\,dN_t^{a,i}.
\]

For a generic current action $a\in A$, define the current-action memory
readout by
\begin{equation}
\label{eq:current-action-memory}
H_t^a(a)
:=
\int_{(0,t)}\Phi(t-s,a_t)\,dN_s^a.
\end{equation}
Therefore
$\lambda_t^a
=
\mu(t,X_{t-}^a,a_t)+H_t^a(a_t).$
Thus the action used to read the accumulated Hawkes memory at time $t$
is the current action $a_t$, rather than the historical actions applied
when the past events occurred.

\begin{remark}
The strict upper limit $(0,t)$ in
\eqref{eq:controlled-hawkes-intensity} ensures that
$\lambda_t^a$ depends only on accepted events strictly before $t$.
Together with the predictability of $a_t$ and $X_{t-}^a$, this avoids ill-posedness of the equation and any algebraic loop at time $t$.
\end{remark}
We define the compensated Poisson measure by
$
\widetilde\Pi^i(ds,d\theta)
:=
\Pi^i(ds,d\theta)-ds\,d\theta.$
Note that whenever $\int_0^T\lambda_s^{a,i}\,ds<\infty$ almost surely,
\begin{equation}
\label{eq:compensated-hawkes-martingale}
\begin{aligned}
M_t^{a,i}
&:=N_t^{a,i}-\int_0^t\lambda_s^{a,i}\,ds=
\int_{(0,t)}\int_0^\infty
\mathbf{1}_{\{\theta\leq\lambda_s^{a,i}\}}
\,\widetilde\Pi^i(ds,d\theta)
\end{aligned}
\end{equation}
is an $\mathbb{F}$-local martingale. Consequently,
$\int_0^t\lambda_s^{a,i}\,ds$ is the $\mathbb{F}$-predictable
compensator of $N^{a,i}$, and $\lambda^{a,i}$ is its intensity.
\begin{remark}[Environmental and observation filtrations]
\label{rem:environment-observation-filtrations}
For a fixed admissible control $a$, one may define the observation
filtration $\mathbb{F}^{a,\mathrm{obs}}$ as the usual augmentation of
\[
\mathcal{F}^{a,\mathrm{obs}}_t
:=
\sigma\!\left(
X_s^a,N_s^a:0\leq s\leq t
\right).
\]
Then $\mathbb{F}^{a,\mathrm{obs}}\subseteq\mathbb{F}$. This
policy-dependent filtration is useful for describing what is supplied
to the learning algorithm, but it is not used to define the common
admissible class $\mathcal A$ in the approximation theorem.
\end{remark} 

\noindent Throughout, $\|\cdot\|$ denotes the Euclidean norm on
$\mathbb R^{d_x}$ and $\mathbb R^{d_a}$, and
$\|\cdot\|_{\mathrm F}$ denotes the Frobenius norm on
$\mathbb R^{d_x\times d_W}$. For vectors in $\mathbb R^m$,
we use the standard $\ell^1$-norm $\|\cdot\|_1$, while for
$M\in\mathbb R^{m\times m}$,
\[
    \|M\|_{1\to1}
    :=
    \sup_{v\ne0}\frac{\|Mv\|_1}{\|v\|_1}
    =
    \max_{1\le j\le m}\sum_{i=1}^m |M_{ij}|.
\]
This norm is natural for the Hawkes memory because a jump in component
$j$ contributes the column $Me_j$, and
\[
\|M\,\Delta N_s\|_1
\leq
\|M\|_{1\to1}\,\|\Delta N_s\|_1.
\]
\noindent We now set the standing assumption, enforced along this study.

\begin{assumption}[Standing assumptions]
\label{ass:standing}
The following conditions hold.
\begin{enumerate}

    \item[(S1)] The maps $b$, $\sigma$, and $\gamma_i$, $i=1,\ldots,m$, are
    Borel measurable and continuous in the time and action variable. There exists
    $L>0$ such that, uniformly in $(t,u)\in[0,T]\times A$,
    \[
    \begin{aligned}
    &\|b(t,x,u)-b(t,x',u)\|
    +\|\sigma(t,x,u)-\sigma(t,x',u)\|
    \\
    &\qquad
    +\sum_{i=1}^m
    \|\gamma_i(t,x,u)-\gamma_i(t,x',u)\|
    \leq L\|x-x'\|,
    \end{aligned}
    \]
    and
    \[
    \|b(t,x,u)\|+\|\sigma(t,x,u)\|
    +\sum_{i=1}^m\|\gamma_i(t,x,u)\|
    \leq L(1+\|x\|).
    \]

    \item[(S2)] The baseline map $\mu$ is Borel measurable, nonnegative, and
    continuous in the time and action variable. Uniformly in
    $(t,u)\in[0,T]\times A$,
    \[
    \|\mu(t,x,u)-\mu(t,x',u)\|_1
    \leq L\|x-x'\|,
    \qquad
    \|\mu(t,x,u)\|_1\leq L(1+\|x\|).
    \]
    \item[(S3)] The kernel $\Phi$ is Borel measurable and entrywise
    nonnegative. For every $T>0$,
    \[
    \overline\phi_T(\cdot)
    :=
    \sup_{u\in A}\|\Phi(\cdot,u)\|_{1\to1}
    \in L^1([0,T]).
    \]
    Moreover, for every $a\in\mathcal A$, the process
    $t\mapsto H_t^a(a_t)$ defined by \eqref{eq:current-action-memory} admits an
    $\mathbb F$-predictable version.

    \item[(S4)] For every $a\in\mathcal A$, the system
    \eqref{eq:poisson-embedded-hawkes-sde} admits a pathwise unique strong solution $(X^a,N^a,\lambda^a)$ on $[0,T]$.
    For every $p\geq1$, there exists $C_{p,T}<\infty$, independent of
    $a$, such that
    \[
    \sup_{a\in\mathcal A}
    \mathbb E\!\left[
       \sup_{0\leq t\leq T}\|X_t^a\|^p
       +\|N_T^a\|_1^p
       +\left(\int_0^T\|\lambda_t^a\|_1\,dt\right)^p
    \right]
    \leq C_{p,T}.
    \]
    \end{enumerate}
\end{assumption}

\begin{remark}[Scope of the well-posedness assumption]
This assumption isolates strong existence,
pathwise uniqueness, nonexplosion, and uniform moment control from the
Markov-approximation argument. All these properties can be verified under
appropriate Lyapunov and stability conditions for state-dependent
Hawkes intensities, see for example \cite[Proposition 1.4]{KhabouTalbi2025}. The focus of this paper is the finite-dimensional
kernel approximation and its use in continuous-time actor--critic
learning, rather than a complete existence theory for the nonlinear
closed-loop Hawkes system. In particular, these properties are satisfied in the numerical example we investigate below, see Section \ref{sec:numerics}.
\end{remark}

When no confusion can arise, we suppress the superscript $a$ and write
$(X,N,\lambda)$ for the system controlled by a fixed
$a\in\mathcal A$.

\section{Finite-Dimensional Markov Approximation}
\label{sec:markov_approx}


In order to develop an RL method in continuous time extending \cite{ChengGuoZhang2026} to Hawkes process, we first need to reduce the study to a Markovian optimization with its associated HJB equation. The key idea to to approach any integrable kernel potentially memory dependent with a family of exponential kernels, known as a mixture of exponentials, by using a density argument and convergence of value functions associated to this Markovianized version. This process has been previously developed in \cite{abi2019multifactor} for Volterra processes and \cite{KhabouTalbi2025} for Hawkes processes. In our model however, we face a technical challenge since the Markovianized kernel is itself controlled and must approximated the general one unformly with respect to the control. We first set the following assumption to approach any kernel $\Phi\in L^p$ for some $p\geq 1$. Then we turn to the Markovianization of the state process and finally to the convergence of  the value and objective of the Markovianization controlled process and Hawkes diffusion to the general one.  
\subsection{Signed exponential approximation}

We enforce the following assumption in this study.

\begin{assumption}
\label{ass:kernel_approx_strong}
Fix a decay scale $\beta>0$. For each $K\geq 1$, there exists a kernel $\Phi_K$ and a measurable continuous matrix coefficient functions
\[
    Q_k:A\to\R^{m\times m},
    \qquad k=1,\dots,K,
\]
defined by $
    \Phi_K(\tau,a)
    :=
    \sum_{k=1}^K Q_k(a)e^{-\beta k\tau}.
$ such that

\[\delta_K(T)
    :=
    \int_0^T \bar\varepsilon_K(\tau)\,\dd\tau
    \underset{K\to\infty}{\longrightarrow}0,\] where \[\bar\varepsilon_K(\tau)
    :=
    \sup_{a\in A}
    \norm{\Phi_K(\tau,a)-\Phi(\tau,a)}_{1\to 1}.\]
\end{assumption}

\begin{lemma}\label{lemma:bounds}
Suppose Assumption~\ref{ass:standing}(S3) and
Assumption~\ref{ass:kernel_approx_strong} hold.
\begin{itemize}
    \item[(i)]
    Let
    \[
        \bar\phi_K(\tau)
        :=
        \sup_{a\in A}
        \norm{\Phi_K(\tau,a)}_{1\to1}.
    \]
    Then
    \[
        \sup_{K\ge1}
        \int_0^T \bar\phi_K(\tau)\,\dd\tau
        <\infty.
    \]

    \item[(ii)]
    For every $K\ge1$, the scalar Volterra equation
    \begin{equation}\label{eq:volterra}
        R_K(t)
        =
        \bar\phi_K(t)
        +
        \int_0^t
        \bar\phi_K(t-s)R_K(s)\,\dd s
    \end{equation}
    admits a unique solution $R_K\in L^1(0,T)$. This solution is
    nonnegative and satisfies
    \[
        \sup_{K\ge1}
        \int_0^T R_K(t)\,\dd t
        <\infty.
    \]
\end{itemize}
\end{lemma}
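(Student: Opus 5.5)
Part (i) follows from the triangle inequality. For every $\tau$ and $a$,
\[
\norm{\Phi_K(\tau,a)}_{1\to1}\le \norm{\Phi(\tau,a)}_{1\to1}+\norm{\Phi_K(\tau,a)-\Phi(\tau,a)}_{1\to1}.
\]
Taking the supremum over $a\in A$ gives $\bar\phi_K\le\overline\phi_T+\bar\varepsilon_K$. Integrating over $[0,T]$ yields $\int_0^T\bar\phi_K\le\norm{\overline\phi_T}_{L^1}+\delta_K(T)$. By Assumption~\ref{ass:kernel_approx_strong}, $\delta_K(T)\to0$, so the sequence $(\delta_K(T))_K$ is bounded. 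Each $\delta_K(T)$ is finite for every fixed $K$: each $\Phi_K(\tau,\cdot)$ is continuous on the compact set $A$, so $\bar\phi_K(\tau)\le\sum_k\sup_A\norm{Q_k}_{1\to1}$ is bounded. This gives the uniform bound in (i).

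A measurability point needs checking: $\bar\phi_K$ and $\bar\varepsilon_K$ are suprema over an uncountable set. For $\bar\phi_K$ there is no issue, because continuity of the $Q_k$ lets us take the supremum over a countable dense subset of $A$. For $\bar\varepsilon_K$ and $\overline\phi_T$, measurability is implicitly assumed in the hypotheses, since their integrals appear there. We will state this explicitly.

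For (ii), fix $K$ and write $\phi=\bar\phi_K\in L^1(0,T)$, which is nonnegative. Existence, uniqueness, nonnegativity and an explicit bound all come from the Neumann series $R=\sum_{n\ge1}\phi^{*n}$, where $*$ denotes convolution on $[0,T]$. The difficulty is that $\norm{\phi}_{L^1}$ need not be less than $1$. The standard remedy is an exponential weight. For $\rho\ge0$, set $\phi_\rho(t)=e^{-\rho t}\phi(t)$. Convolution commutes with this weighting, $(\phi*\psi)_\rho=\phi_\rho*\psi_\rho$, so the equation $R=\phi+\phi*R$ is equivalent to $R_\rho=\phi_\rho+\phi_\rho*R_\rho$. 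By dominated convergence, $\norm{\phi_\rho}_{L^1(0,T)}\to0$ as $\rho\to\infty$. We therefore choose $\rho$ with $\norm{\phi_\rho}_{L^1}\le 1/2$. The map $R_\rho\mapsto\phi_\rho+\phi_\rho*R_\rho$ is then a contraction on $L^1(0,T)$ by Young's inequality. It has a unique fixed point, given by the convergent series $\sum_n\phi_\rho^{*n}$, which is nonnegative since $\phi\ge0$. Setting $R=e^{\rho t}R_\rho$ gives the unique solution, and $\int_0^TR\le e^{\rho T}\int_0^T R_\rho\le 2e^{\rho T}\norm{\phi_\rho}_{L^1}\le e^{\rho T}$.

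The main obstacle is uniformity in $K$, since $\rho$ must be chosen independently of $K$. Uniform boundedness of $\int_0^T\bar\phi_K$ alone does not prevent the mass of $\bar\phi_K$ from concentrating near $0$, where the weighting is ineffective. We will use the domination $\bar\phi_K\le\overline\phi_T+\bar\varepsilon_K$ established in (i):
\[
\norm{e^{-\rho\cdot}\bar\phi_K}_{L^1}\le\norm{e^{-\rho\cdot}\overline\phi_T}_{L^1}+\delta_K(T).
\]
First choose $K_0$ so that $\delta_K(T)\le1/4$ for all $K\ge K_0$. Then choose $\rho$ so that the first term is at most $1/4$. This gives $\int_0^TR_K\le e^{\rho T}$ uniformly for $K\ge K_0$. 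The finitely many indices $K<K_0$ each have finite $\int_0^TR_K$ by the fixed-$K$ argument, so the supremum over all $K\ge1$ is finite.
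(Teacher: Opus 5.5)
Your proposal is correct and follows essentially the same route as the paper. In (i) you use the domination $\bar\phi_K\le\overline\phi_T+\bar\varepsilon_K$. In (ii) your exponential weighting $\phi_\rho$ is the paper's weighted norm $\|\cdot\|_{1,\alpha}$ in another form, and it yields a contraction; uniformity in $K$ comes, as in the paper, from choosing the weight using $\overline\phi_T$ alone, absorbing $\delta_K(T)$ for $K\ge K_0$, and handling the finitely many $K<K_0$ separately. Your remarks on measurability of the suprema, and on finiteness of $\int_0^T\bar\phi_K$ for each fixed $K$ via continuity of the $Q_k$ on compact $A$, are small refinements that the paper leaves implicit.
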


\begin{proof}
Let
\[
    \varphi_T(\tau)
    :=
    \sup_{a\in A}
    \norm{\Phi(\tau,a)}_{1\to1}.
\]
For almost every $\tau\in[0,T]$, the reverse triangle inequality and
the elementary inequality
\[
    \left|
        \sup_{a\in A} f(a)-\sup_{a\in A}g(a)
    \right|
    \le
    \sup_{a\in A}|f(a)-g(a)|
\]
give
\begin{align*}
    \left|
        \bar\phi_K(\tau)-\varphi_T(\tau)
    \right|
    &\le
    \sup_{a\in A}
    \left|
        \norm{\Phi_K(\tau,a)}_{1\to1}
        -
        \norm{\Phi(\tau,a)}_{1\to1}
    \right|\\
    &\le
    \sup_{a\in A}
    \norm{
        \Phi_K(\tau,a)-\Phi(\tau,a)
    }_{1\to1}\\
    &=
    \bar\varepsilon_K(\tau).
\end{align*}
Consequently,
\begin{equation}\label{eq:phi-envelope-convergence}
    \norm{
        \bar\phi_K-\varphi_T
    }_{L^1(0,T)}
    \le
    \delta_K(T)
    \longrightarrow 0.
\end{equation}
In particular,
\[
    \int_0^T\bar\phi_K(\tau)\,\dd\tau
    \le
    \int_0^T\varphi_T(\tau)\,\dd\tau
    +
    \delta_K(T).
\]
Since $\varphi_T\in L^1(0,T)$ by
Assumption~\ref{ass:standing}(S3), and the convergent sequence
$(\delta_K(T))_{K\ge1}$ is bounded, we obtain
\[
    \sup_{K\ge1}
    \int_0^T\bar\phi_K(\tau)\,\dd\tau
    <\infty.
\]
This proves part~\textup{(i)}.

\noindent We next prove part~\textup{(ii)}. For $\alpha>0$, define the weighted
$L^1$-norm
\[
    \norm{f}_{1,\alpha}
    :=
    \int_0^T e^{-\alpha t}|f(t)|\,\dd t.
\]
Since
\[
    e^{-\alpha T}\norm{f}_{L^1(0,T)}
    \le
    \norm{f}_{1,\alpha}
    \le
    \norm{f}_{L^1(0,T)},
\]
this norm is equivalent to the usual $L^1(0,T)$-norm.

\noindent For $f,g\in L^1(0,T)$, Fubini's theorem and the change of variable
$u=t-s$ yield
\begin{align}
    \norm{f*g}_{1,\alpha}
    &\le
    \int_0^T\int_0^t
    e^{-\alpha(t-s)}|f(t-s)|
    e^{-\alpha s}|g(s)|
    \,\dd s\,\dd t \notag\\
    &=
    \int_0^T
    e^{-\alpha s}|g(s)|
    \left(
        \int_0^{T-s}
        e^{-\alpha u}|f(u)|\,\dd u
    \right)
    \dd s \notag\\
    &\le
    \norm{f}_{1,\alpha}
    \norm{g}_{1,\alpha}.
    \label{eq:weighted-convolution}
\end{align} Fix $K\ge1$. By part~\textup{(i)},
$\bar\phi_K\in L^1(0,T)$. Moreover, dominated convergence gives
\[
    \norm{\bar\phi_K}_{1,\alpha}
    =
    \int_0^T
    e^{-\alpha t}\bar\phi_K(t)\,\dd t
    \longrightarrow 0
    \qquad\text{as }\alpha\to\infty.
\]
Therefore, one may choose $\alpha_K>0$ such that
\[
    q_K
    :=
    \norm{\bar\phi_K}_{1,\alpha_K}
    <1.
\] Consider the affine map
\[
    \mathcal T_K f
    :=
    \bar\phi_K+\bar\phi_K*f
\]
on the Banach space $L^1(0,T)$ equipped with
$\norm{\cdot}_{1,\alpha_K}$. By
\eqref{eq:weighted-convolution},
\[
    \norm{
        \mathcal T_K f-\mathcal T_K g
    }_{1,\alpha_K}
    \le
    q_K
    \norm{f-g}_{1,\alpha_K}.
\]
Thus $\mathcal T_K$ is a contraction. The Banach fixed-point theorem
therefore gives a unique $R_K\in L^1(0,T)$ satisfying
\eqref{eq:volterra} almost everywhere on $(0,T)$. The solution is nonnegative. Indeed, starting from
$R_K^{(0)}=0$ and defining
\[
    R_K^{(n+1)}
    :=
    \mathcal T_K R_K^{(n)},
\]
all the iterates are nonnegative, because $\bar\phi_K\ge0$, and
$R_K^{(n)}\to R_K$ in the weighted $L^1$-norm. Equivalently,
   $ R_K^{(n)}
    =
    \sum_{j=1}^{n}
    \bar\phi_K^{\ast j},$
so that $
    R_K
    =
    \sum_{j=1}^{\infty}
    \bar\phi_K^{\ast j}$
with convergence in the weighted $L^1$-norm
$\norm{\cdot}_{1,\alpha_K}$.  It remains to prove the uniform bound in $K$. Since
$\varphi_T\in L^1(0,T)$, dominated convergence gives
\[
    \norm{\varphi_T}_{1,\alpha}
    =
    \int_0^T
    e^{-\alpha t}\varphi_T(t)\,\dd t
    \longrightarrow 0
    \qquad\text{as }\alpha\to\infty.
\]
Choose $\alpha>0$ and $\rho\in(0,1)$ such that
\[
    \norm{\varphi_T}_{1,\alpha}<\rho<1.
\]
By \eqref{eq:phi-envelope-convergence},
\[
    \norm{
        \bar\phi_K-\varphi_T
    }_{1,\alpha}
    \le
    \norm{
        \bar\phi_K-\varphi_T
    }_{L^1(0,T)}
    \le
    \delta_K(T)
    \longrightarrow0.
\]
Hence, after enlarging $K_0$ if necessary,
\[
    \norm{\bar\phi_K}_{1,\alpha}
    \le
    \rho,
    \qquad K\ge K_0.
\] Taking the weighted $L^1$-norm in \eqref{eq:volterra} and using
\eqref{eq:weighted-convolution}, we obtain, for $K\ge K_0$,
\[
    \norm{R_K}_{1,\alpha}
    \le
    \norm{\bar\phi_K}_{1,\alpha}
    +
    \norm{\bar\phi_K}_{1,\alpha}
    \norm{R_K}_{1,\alpha}.
\]
Therefore,
\[
    \norm{R_K}_{1,\alpha}
    \le
    \frac{
        \norm{\bar\phi_K}_{1,\alpha}
    }{
        1-\norm{\bar\phi_K}_{1,\alpha}
    }
    \le
    \frac{\rho}{1-\rho}.
\]
Using the equivalence of the weighted and unweighted norms gives
\[
    \int_0^T R_K(t)\,\dd t
    \le
    e^{\alpha T}
    \norm{R_K}_{1,\alpha}
    \le
    e^{\alpha T}
    \frac{\rho}{1-\rho},
    \qquad K\ge K_0.
\]
For the finitely many indices $K<K_0$,
$\int_0^T R_K(t)\,\dd t<\infty$ by the existence argument above.
Combining these finitely many quantities with the preceding bound
yields
\[
    \sup_{K\ge1}
    \int_0^T R_K(t)\,\dd t
    <\infty.
\]
\end{proof}

\begin{remark}[Example: compact action set, separability and continuity of $Q_k(\cdot)$]
    Assume that  $m=1$ and $A$ is a compact set; the kernel $\Phi(\tau,a)$ is linearly separable $\Phi(\tau,a)=\hat{\Phi}(\tau)Q(a)$; the application $a\longmapsto Q(a)$ is continuous. Then we choose $\Phi_K(\tau,a)=Q(a)\hat{\Phi}_K(\tau),$ with $\hat{\Phi}_K(\tau):=\sum_{k=1}^K q_k e^{-\beta k\tau}$ for some coefficient $q_k$ such that $\hat{\Phi}_K\longrightarrow \hat{\Phi}$. Therefore, Assumption \ref{ass:kernel_approx_strong} is satisfied and so Lemma \ref{lemma:bounds}. This is in particular satisfied in the numerical section by choosing
    $\Phi(\tau,a)=Q_{Hill}(a) \hat{\Phi}(\tau),$
    with $A=[0,1]$ and
    $Q_{Hill}(a)=1-\frac{c_{eff} a}{a_{half}+a},\; c_{eff}>0,a_{half}\in A.$
\end{remark}

\subsection{Markovianization procedure}

We now turn to introduce the Markovian version of (Hawkes-SDE). We set a level of truncation $K>0$ and define the truncated Markovianized version of \eqref{eq:poisson-embedded-hawkes-sde} by

\begin{equation}
\label{eq:Markov-poisson-embedded-hawkes-sde}
\left\{
\begin{aligned}
X_t^{K,a}
={}&X_0
+\int_0^t b(s,X^{K,a}_{s-},a_s)\,ds
+\int_0^t \sigma(s,X^{K,a}_{s-},a_s)\,dW_s
\\
&\quad
+\sum_{i=1}^m
\int_{(0,t]}\int_0^\infty
\gamma_i(s,X^{K,a}_{s-},a_s)
\mathbf{1}_{\{\theta\leq\lambda_s^{K,a,i}\}}
\,\Pi^i(ds,d\theta),\; X_0\in \mathbb R^{d_x},
\\[0.25cm]
N_t^{K,a,i}
=&
\int_{(0,t]}\int_0^\infty
\mathbf{1}_{\{\theta\leq\lambda_s^{K,a,i}\}}
\,\Pi^i(ds,d\theta),
\qquad i=1,\ldots,m,
\\[0.25cm]
\lambda_t^{K,a,i}
={}&
\mu_i(t,X^{K,a}_{t-},a_t)
\\
&\quad
+\left(\sum_{j=1}^m
\int_{(0,t)}\int_0^\infty
(\Phi_K)_{ij}(t-s,a_t)
\mathbf{1}_{\{\theta\leq\lambda_s^{K,a,j}\}}
\,\Pi^j(ds,d\theta)\right)_+,
\qquad i=1,\ldots,m.
\end{aligned}
\right.
\end{equation} For each $k=1,\dots,K$, define the $\R^m$-valued exponentially-weighted
Hawkes memory factor $Z^{K,k}=(Z^{K,k,j})_{j=1}^m$, driven by the jumps of $N^K$, componentwise by
\[
Z_{t-}^{K,k,j}
:=
\int_{(0,t)} e^{-\beta k(t-s)}\,\dd N_s^{K,j}
=
\int_{(0,t)}\int_0^\infty
e^{-\beta k(t-s)}
\mathbf{1}_{\{\theta\leq\lambda_s^{K,j}\}}
\,\Pi^j(\dd s,\dd\theta),
\qquad j=1,\dots,m.
\]
Equivalently, $Z^{K,k}$ is the $\R^m$-valued c\`adl\`ag solution of the
linear jump-ODE 
\begin{equation}
\label{eq:memory-factor-ode}
\dd Z_t^{K,k}
=
-\beta k\,Z_t^{K,k}\,\dd t
+
\dd N_t^{K},
\qquad
Z_0^{K,k}=0,
\end{equation}
where $N^K=(N^{K,1},\dots,N^{K,m})^\top\in\R^m$; that is, the $j$-th
component decays at rate $\beta k$ and jumps by $1$ at each accepted
event of $N^{K,j}$. Therefore, we define the approximate memory of the Hawkes process by
\[
H_t^K(a_t):=\sum_{k=1}^K Q_k(a_t)\,Z_{t-}^{K,k}
=
\int_{(0,t)}\sum_{k=1}^K Q_k(a_t)e^{-\beta k(t-s)}\,\dd N_s^K
=
\int_{(0,t)}\Phi_K(t-s,a_t)\,\dd N_s^K,
\]
which recovers the Markovianized kernel
$\Phi_K(\tau,a)=\sum_{k=1}^K Q_k(a)e^{-\beta k\tau}$ of
Assumption~\ref{ass:kernel_approx_strong}.We define the c\`adl\`ag exponential memory variables
\[
    Z_t^{K,k}
    :=
    \int_0^t e^{-\beta k(t-s)}\,\dd N_s^K
    \in\R^m,
    \qquad k=1,\dots,K.
\]

Then, we define the signed approximate memory
\[
    H_t^K(a_t)
    :=
    \sum_{k=1}^K Q_k(a_t)Z_{t-}^{K,k}.
\] The approximating Markov Hawkes-SDE is thus given by
\[
(Hawkes-SDE)_K\begin{cases}
     \dd X_t^{K,a}
    =
    b(t,X_t^{K,a},a_t)\,\dd t
    +
    \sigma(t,X_t^{K,a},a_t)\,\dd W_t
    +
    \sum_{i=1}^m \gamma_i(t,X_{t-}^K,a_t)\,\dd N_t^{K,i},\\
    \lambda_t^K
    =
    \mu(t,X_{t-}^K,a_t)
    +
    \left(\sum_{k=1}^K Q_k(a_t)Z_{t-}^{K,k}\right)_+,\\
    \dd Z_t^{K,k}
    =
    -\beta k\,Z_t^{K,k}\,\dd t
    +
    \dd N_t^{K},
\end{cases}
\]
\noindent for a choice of $Q_k$ associated with the kernel $\Phi$
 as defined under Assumption \ref{ass:kernel_approx_strong}
 and where for any vector $u\in \mathbb R^m$, the vector $u_+$ denotes the componentwise positive part:
\[
    u_+=(\max\{u_1,0\},\dots,\max\{u_m,0\})^\top.
\]

\begin{remark}[Signed approximation and positivity]
Note that the positive part is set to preserve the usual density argument for finite linear combinations of $\{e^{-\beta kt}\}_{k\ge1}$ in $L^p(\R_+)$. Since signed approximants may create negative raw memory values, the approximating intensity applies a componentwise positive part only to the approximated memory term. Thus signed approximation and valid intensities are both retained.
\end{remark}

\begin{definition}[State Space]
    The finite-dimensional Markov state is given by 
   \[ S_t^K
    :=
    (t,X_t^{K,a},Z_t^{K,1},\dots,Z_t^{K,K}).\]
    We denote this space by $\Sspace_K$ where
\[
    \Sspace_K:=[0,T]\times\R^{d_x}\times(\R^m)^K.
\]

\end{definition} The process $S_t^K$ is c\`adl\`ag and Markov. Intensities and controls are evaluated using predictable information. Thus, when a Markov control is used in continuous time, the precise convention is
\[
    a_t=\pi(t,Y_{t-}^K),
    \qquad
    Y_t^K=(X_t^{K,a},Z_t^{K,1},\dots,Z_t^{K,K}).
\]
Inside Lebesgue time integrals we often write $\pi(t,Y_t^K)$ for readability, since $Y_t^K=Y_{t-}^K$ outside jump times and jump times have zero Lebesgue measure.

\begin{remark}[Model-free state construction]
The matrices $Q_k(a)$, the baseline $\mu$, and the intensity $\lambda$ are used only in the theoretical Markov approximation. They are not needed to construct the state $S_t^K$. The memory update induced by solving \eqref{eq:memory-factor-ode} only requires observed event times, component labels, and the chosen decay scale $\beta$. Hence the representation is compatible with model-free CT-DDPG.
\end{remark}

Similarly to the previous standing assumption (S4) we assume that the system $(Hawkes-SDE)_K$ admits a solution with enough integrability so that the following assumption is enforced along the study.

\begin{assumption}[Uniform well-posedness and localization moments
for the approximations]
\label{ass:approx_uniform_moments}
For every $K\geq1$ and every admissible control
$a\in\mathcal A$, the approximating Poisson-embedded system admits
a pathwise unique strong solution
\[
    (X^{K,a},N^{K,a},\lambda^{K,a})
\]
on $[0,T]$. Moreover, there exist $p_\star>1$ and $C_{p_\star,T}<\infty$, independent of
$K$ and $a$, such that for any $t\in [0,T]$
\[
\sup_{K\geq1}\sup_{a\in\mathcal A}
\E_{t,x,\mathbf z^K;a}\left[
    \sup_{t\leq u\leq T}\norm{X_u^{K,a}}^{p_\star}
    +
    \norm{N_T^{K,a}}_1^{p_\star}
    +
    \left(
        \int_0^T\norm{\lambda_t^{K,a}}_1\,\dd t
    \right)^{p_\star}
\right]
\leq
C_{p_\star,T},
\]
where $\E_{t,x,\mathbf z^K;a}[\cdot]:=\mathbb E[\cdot|X_t=x, (Z_t^{K,1},\dots,Z_t^{K,K})=\mathbf z^K]$ following the policy $a$.
\end{assumption}
We provide a sufficient condition for
Assumption~\ref{ass:approx_uniform_moments} in the proposition below, also satisfied in our numerical simulations (see Appendix \ref{app:setup}).
\begin{proposition}
\label{prop:sufficient_approx_uniform_moments}
Fix $p_\star>1$. In addition to
Assumption~\ref{ass:standing}(S1)--(S3) and \ref{ass:kernel_approx_strong}, suppose that the following conditions hold.

\begin{enumerate}

    \item[(i)] The baseline intensity is uniformly bounded:
    \[
        \sup_{(t,x,u)\in[0,T]\times\R^{d_x}\times A}
        \norm{\mu(t,x,u)}_1
        \leq \overline\mu
    \]
    for some $\overline\mu<\infty$.

    \item[(ii)] The jump amplitudes are uniformly bounded:
    \[
        \sup_{(t,x,u)\in[0,T]\times\R^{d_x}\times A}
        \sum_{i=1}^m\norm{\gamma_i(t,x,u)}
        \leq \Gamma
    \]
    for some $\Gamma<\infty$.

    \item[(iii)] The nonnegative kernel envelopes satisfy the uniform
    subcriticality condition
    \[
        \rho_T
        :=
        \sup_{K\geq1}
        \int_0^T\overline\phi_K(s)\,\dd s
        <1,
        \qquad
        \overline\phi_K(s)
        :=
        \sup_{u\in A}
        \norm{\Phi_K(s,u)}_{1\to1}.
    \]
\end{enumerate}

Then Assumption~\ref{ass:approx_uniform_moments} holds with exponent
$p_\star$.
\end{proposition}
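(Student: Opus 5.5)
The plan is to construct the approximating system by thinning, event by event. Then we derive a uniform $L^{p_\star}$ bound on $\norm{N_T^{K,a}}_1$ from a linear "renewal" inequality that exploits the subcriticality (iii), and finally transfer the bound to $\int_0^T\norm{\lambda^{K,a}_t}_1\,\dd t$ and to $\sup_u\norm{X^{K,a}_u}$. Throughout we fix $K$ and $a\in\mathcal A$ and track that no constant depends on them.

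\emph{Step 1 (well-posedness).} Between two consecutive accepted events, the memory variables $Z^{K,k}$ evolve deterministically by \eqref{eq:memory-factor-ode}. Hence, given the past, the intensity satisfies
\[
\norm{\lambda^{K}_t}_1\le \overline\mu+\sum_{k=1}^K\norm{Q_k(a_t)}_{1\to1}\norm{Z^{K,k}_{t-}}_1,
\]
a locally bounded quantity. On each inter-event interval, $X^{K,a}$ solves a Lipschitz SDE by (S1), so it has a unique strong solution. The next event is then read off from $\Pi$ via $\mathbf 1_{\{\theta\le\lambda^{K,i}_s\}}$. Iterating this over successive events produces a pathwise unique solution up to the explosion time $\tau_\infty=\lim_n\tau_n$, where $\tau_n$ is the $n$-th event time. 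Nonexplosion, $\tau_\infty>T$ a.s., will follow from Step 2 applied up to $\tau_n$ and letting $n\to\infty$.

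\emph{Step 2 (count bound).} Since $(u)_+\le|u|$ componentwise, we have
\[
\norm{\lambda^K_t}_1\le\overline\mu+\int_{(0,t)}\overline\phi_K(t-s)\,\dd\norm{N^K_s}_1 .
\]
Integrating over $[0,T\wedge\tau_n]$ and using Fubini gives
\[
\Lambda_n:=\int_0^{T\wedge\tau_n}\norm{\lambda^K_t}_1\,\dd t\le\overline\mu T+\rho_T\,\norm{N^K_{T\wedge\tau_n}}_1 .
\]
Writing $\norm{N^K_{T\wedge\tau_n}}_1=\Lambda_n+M_{T\wedge\tau_n}$, with $M=\sum_i M^{i}$ the compensated martingale, and using $\norm{N^K_{T\wedge\tau_n}}_1\le n$ so that all moments are finite, we get, with $x_n:=\norm{\,\norm{N^K_{T\wedge\tau_n}}_1}_{L^{p_\star}}$,
\[
x_n\le \overline\mu T+\rho_T x_n+\norm{M_{T\wedge\tau_n}}_{L^{p_\star}} .
\]
By the BDG inequality, $[M]_T=\norm{N_T}_1$ because jumps have size one. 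Hence
\[
\norm{M_{T\wedge\tau_n}}_{L^{p_\star}}\le C_{p_\star}\,\E\bigl[\norm{N^K_{T\wedge\tau_n}}_1^{p_\star/2}\bigr]^{1/p_\star}\le C_{p_\star}x_n^{1/2}.
\]
Because $\rho_T<1$, the inequality $(1-\rho_T)x_n\le\overline\mu T+C_{p_\star}x_n^{1/2}$ yields $x_n\le C$ uniformly in $n,K,a$. Fatou then gives the bound for $N^K_T$ and $\tau_\infty>T$. The bound on $\int_0^T\norm{\lambda^K_t}_1\,\dd t$ follows immediately from $\Lambda\le\overline\mu T+\rho_T\norm{N_T}_1$.

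\emph{Step 3 (state bound).} The jump part of $X^{K,a}$ is bounded by $\Gamma\norm{N^K_T}_1$ by (ii). The drift and diffusion parts are handled by BDG and the linear growth in (S1). Gronwall's inequality then gives
\[
\E\Bigl[\sup_{u}\norm{X^{K,a}_u}^{p_\star}\Bigr]\le C\bigl(1+\norm{x}^{p_\star}+\Gamma^{p_\star}\E\norm{N_T^K}_1^{p_\star}\bigr),
\]
uniformly in $K$ and $a$.

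The main obstacle is the conditioning on an arbitrary initial memory $\mathbf z^K$ at time $t$. A generic $\mathbf z^K\in(\R^m)^K$ contributes a term $\sum_kQ_k(a_u)e^{-\beta k(u-t)}z^k$ that need not be controlled uniformly in $K$ by (iii). I would handle this by restricting to memory states reachable from $Z_0=0$, that is, of the form $z^k=\int_{(0,t]}e^{-\beta k(t-s)}\,\dd N_s$. For such states the inherited contribution is again bounded by $\int\overline\phi_K(u-s)\,\dd\norm{N_s}_1$ over past events. This allows Step 2 to be rerun on $[0,T]$ with the past events included, so the constants depend only on the number of prior events, hence on $\norm{\mathbf z^K}$ through this representation, and not on $K$.
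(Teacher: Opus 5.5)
Your proof is correct, and for the central step it takes a genuinely different route from the paper.

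\textbf{Count bound.} The paper bounds $\norm{N_T^{K,a}}_1$ by pathwise domination. From the envelope $\ell_t\le\overline\mu+\int_{(0,t)}\overline\phi_K(t-s)\,\dd N_s^{K,a,\Sigma}$ it asserts $N^{K,a,\Sigma}\le\overline N^K$, where $\overline N^K$ is a scalar linear Hawkes process with baseline $\overline\mu$ and kernel $\overline\phi_K$. It then uses the cluster representation of $\overline N^K$:
\begin{itemize}
\item a Poisson$(\overline\mu T)$ number of immigrants;
\item for each immigrant, a Galton--Watson cluster whose total progeny is Borel distributed;
\item a thinning coupling to the Borel$(\rho_T)$ law, which gives uniformity in $K$.
\end{itemize}
You instead close an $L^{p_\star}$ inequality directly on the stopped count, $(1-\rho_T)x_n\le\overline\mu T+C_{p_\star}x_n^{1/2}$. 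The ingredients are the compensator decomposition, Fubini, and BDG with $[M]=\norm{N}_1$; the latter holds because the independent Poisson measures a.s.\ have no common jump times. Localizing at $\tau_n$ makes $x_n$ finite a priori, and the same uniform-in-$n$ bound gives nonexplosion.

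\textbf{What each route buys.} Your route is more elementary and self-contained. It never needs to place the multivariate, positive-part-truncated process and a scalar linear Hawkes process on a common embedding. The paper invokes that comparison in one line, but it actually requires merging the $m$ independent Poisson measures. Your route also avoids the branching machinery entirely, and gives an explicit constant depending only on $p_\star$, $\overline\mu T$ and $\rho_T$. The paper's route yields more than is needed, namely exponential moments of the dominating count, at the price of that comparison argument. The integrated-intensity bound, the state estimate (BDG plus Gronwall) and the interlacing construction are the same in both.

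\textbf{Conditioning at $(t,x,\mathbf z^K)$.} Your last paragraph addresses something the paper's proof does not. Assumption~\ref{ass:approx_uniform_moments} is stated under $\E_{t,x,\mathbf z^K;a}$, but the paper only treats a start at time $0$ with zero memory, and it implicitly uses $\E\norm{X_0}^{p_\star}<\infty$. You are right that a generic $\mathbf z^K$ is controlled only through $\sum_k\norm{Q_k}_{1\to1}\norm{z^k}_1$, which is not uniform in $K$ for signed approximants. Your restriction to reachable memory states works, and you can make the dependence explicit. The number $n_0$ of prior events satisfies $n_0\le e^{\beta t}\norm{z^1}_1$, so your inequality becomes $(1-\rho_T)x\le\overline\mu T+\rho_T n_0+C_{p_\star}x^{1/2}$. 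The resulting constants depend on $(x,\mathbf z^K)$. This is unavoidable in any case, since $\sup_u\norm{X_u}\ge\norm{x}$.
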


\begin{proof}
Fix $K\geq1$ and $a\in\mathcal A$, and set
\[
    N_t^{K,a,\Sigma}
    :=
    \norm{N_t^{K,a}}_1
    =
    \sum_{i=1}^m N_t^{K,a,i},
\]
and
\[
    \ell_t^{K,a}
    :=
    \norm{\lambda_t^{K,a}}_1
    =
    \sum_{i=1}^m\lambda_t^{K,a,i}.
\] Since the componentwise positive-part map satisfies
\[
    \norm{z_+}_1\leq\norm{z}_1,
\]
we have the estimate
\begin{align}
\ell_t^{K,a}
&\leq
\norm{\mu(t,X_{t-}^{K,a},a_t)}_1
+
\left\|
    \int_{(0,t)}
    \Phi_K(t-s,a_t)\,\dd N_s^{K,a}
\right\|_1
\nonumber\\
&\leq
\overline\mu
+
\int_{(0,t)}
\overline\phi_K(t-s)\,
\dd N_s^{K,a,\Sigma}.
\label{eq:scalar_hawkes_envelope}
\end{align} Let $\overline N^K$ be the scalar linear Hawkes process with
constant baseline $\overline\mu$ and nonnegative kernel
$\overline\phi_K$:
\[
    \overline\lambda_t^K
    =
    \overline\mu
    +
    \int_{(0,t)}
    \overline\phi_K(t-s)\,\dd\overline N_s^K.
\]
With the common Poisson embedding and the monotonicity of the
right-hand side of \eqref{eq:scalar_hawkes_envelope}, we can deduce
\[
    N_t^{K,a,\Sigma}
    \leq
    \overline N_t^K,
    \qquad 0\leq t\leq T, \qquad\mbox{a.s.}
\] Extend $\overline\phi_K$ by zero outside $[0,T]$. In the cluster
representation of $\overline N^K$, each immigrant generates a Galton--Watson cluster whose number of direct offspring has Poisson mean \[
    \rho_K := \int_0^T\overline\phi_K(s)\,\dd s
    \leq\rho_T<1.
\] Let $M_T$ denote the number of immigrants of $ \overline N_t^K$ occurring in $[0,T]$, so we have $M_T\sim \mbox{Pois}(\overline{\mu}T)$. Let $S_{K,1}, S_{K,2},...$ be the independent total cluster sizes associated with these immigrants. Every point of $\overline N^K$ observed by time $T$ belongs to one of those clusters, so we have $$\overline{N}_T^K\le \sum_{j=1}^{M_T}S_{K,j}.$$ By Holder's inequality, we have \begin{equation*}
    \begin{split}
        \E\left[\left(\left.\sum_{j=1}^{M_T}S_{K,j}\right)^p\right|M_T= n\right]&\leq n^{p-1}\sum_{j=1}^n\E[S_{K,j}^p]\\
        &=n^p\E[S_{K}^p].
    \end{split}
\end{equation*} Therefore,\begin{equation*}
    \begin{split}
        \E\left[\left(\overline{N}_T^K\right)^p\right]&\le \E\left[\left(\sum_{j=1}^{M_T}S_{K,j}\right)^p\right]\\
        &\le \E[M_T^p]\E[S_{K}^p].
    \end{split}
\end{equation*} By Hawkes and Oakes~\cite[p.~496]{hawkes1974}, the cluster $S_K$ is
almost surely finite because $\rho_K<1$. In addition, its total
progeny has the Borel distribution; see
\cite{dwass1969total,tanner1961borel}. Since
$\rho_K\leq\rho_T<1$,  a Poisson($\rho_K$) offspring variable can be coupled as a thinning of a Poisson($\rho_T$) offspring variable. Applying this recursively to the Galton–Watson trees gives $S_K\le S_{\rho_T}$ a.s. under a suitable coupling. Then, because of the exponential tail
of the Borel$(\rho_T)$ distribution, we have \[
    \sup_{K\geq1}\E[S_K^{p_\star}]\le \mathbb E[S_{\rho_T}^{p^*}]<\infty.
\] Since the number of immigrants before $T$, $M_T,$ is bounded
by a Poisson random variable with mean $\overline\mu T$, it follows
that
\begin{equation}
\label{eq:uniform_count_moment_sufficient}
    \sup_{K\geq1}\sup_{a\in\mathcal A}
    \E\left[
        \bigl(N_T^{K,a,\Sigma}\bigr)^{p_\star}
    \right]
    <\infty.
\end{equation}
In particular, the approximating counting processes are
nonexplosive. Integrating \eqref{eq:scalar_hawkes_envelope} over time and applying
Fubini's theorem gives
\begin{align*}
\int_0^T\ell_t^{K,a}\,\dd t
&\leq
\overline\mu T
+
\int_{(0,T)}
\left(
    \int_s^T
    \overline\phi_K(t-s)\,\dd t
\right)
\dd N_s^{K,a,\Sigma}
\nonumber\\
&\leq
\overline\mu T
+
\rho_T N_T^{K,a,\Sigma}.
\end{align*}
Consequently, \eqref{eq:uniform_count_moment_sufficient} implies
\[
    \sup_{K\geq1}\sup_{a\in\mathcal A}
    \E\left[
        \left(
            \int_0^T
            \norm{\lambda_t^{K,a}}_1\,\dd t
        \right)^{p_\star}
    \right]
    <\infty.
\] It remains to estimate the physical state. The uniform boundedness
of the jump amplitudes gives
\[
    \sup_{0\leq u\leq t}
    \left\|
        \sum_{i=1}^m
        \int_{(0,u]}
        \gamma_i(s,X_{s-}^{K,a},a_s)
        \,\dd N_s^{K,a,i}
    \right\|
    \leq
    \Gamma N_t^{K,a,\Sigma}.
\]
Using the linear-growth assumptions on $b$ and $\sigma$, the
Burkholder--Davis--Gundy inequality, Young's inequality, we obtain
\[
\begin{aligned}
\E\left[
    \sup_{0\leq u\leq t}
    \norm{X_u^{K,a}}^{p_\star}
\right]
\leq
C_{p_\star,T}
\Bigg(
    1
    +
    \E\norm{X_0}^{p_\star}
    +
    \E\bigl[(
        N_T^{K,a,\Sigma}
    )^{p_\star}\bigr]
    +
    \int_0^t
    \E\left[
        \sup_{0\leq r\leq s}
        \norm{X_r^{K,a}}^{p_\star}
    \right]\dd s
\Bigg).
\end{aligned}
\]
Hence, Gronwall's inequality yields
\[
    \sup_{K\geq1}\sup_{a\in\mathcal A}
    \E\left[
        \sup_{0\leq t\leq T}
        \norm{X_t^{K,a}}^{p_\star}
    \right]
    <\infty.
\] Strong existence and pathwise uniqueness up to explosion follow from
the standard interlacing construction for jump-diffusions; see
\cite[Chapter~V.10, Theorem~57]{protter2005stochastic} and
\cite[Proposition~2.1]{banos2024hestonhawkes}. Between consecutive
accepted events, the physical state solves a globally Lipschitz
diffusion SDE, while at each accepted event the counting-process and
state updates are uniquely determined by the common Poisson
embedding and the pre-jump state. Furthermore, the pathwise
domination
\[
    N_t^{K,a,\Sigma}\leq\overline N_t^K
\]
and the almost-sure finiteness of the subcritical Hawkes count
$\overline N_T^K$ imply that only finitely many accepted events occur
on $[0,T]$. Hence explosion is impossible, and so we have a unique nonexplosive strong solution.
Together with the preceding uniform moment estimates, this proves
Assumption~\ref{ass:approx_uniform_moments}.
\end{proof}

\begin{remark}
    (ii) and (iii) are satisfied, for example if $\mu,\gamma$ are uniformly bounded in $x$ and $A$ is compact.
\end{remark}

\subsection{Objective functions and convergence of value functions}
We assume that the agent is minimizing a running cost $c:[0,T]\times \mathbb R^{d_x}\times A\longrightarrow \mathbb R$ along the time duration $[0,T]$ and a terminal cost $g:\mathbb R^{d_x} \longrightarrow\mathbb R$, satisfying the following assumption.

\begin{assumption}[Cost regularity]
\label{ass:reward}
$c$ is Borel measurable and continuous in time and action variable. In addition, there exist constants $C>0$ and $q\ge0$ such that,  uniformly in
$(t,a)$,
\[
    |c(t,x,a)|+|g(x)|
    \le
    C(1+\|x\|^{q+1}),
\]
and
\[
\begin{aligned}
    |c(t,x,a)-c(t,x',a)|
    &+
    |g(x)-g(x')|  \\
    &\le
    C\left(1+\|x\|^q+\|x'\|^q\right)\|x-x'\|.
\end{aligned}
\] For the objective and value convergence conclusions, the exponent
$p_\star$ in
Assumption~\ref{ass:approx_uniform_moments} is assumed to satisfy
\[
p_\star>q+1.
\]
\end{assumption} For an admissible control $a$, we define the objective function for the problem by
\[
    J(a)
    :=
    \E\left[\int_0^T c(t,X_t,a_t)\,\dd t+g(X_T)\right].
\] We aim at solving the following problem
\[
    V_0:=\inf_{a\in \mathcal A} J(a).
\] We define its Markovian approximation by
\[
    V_0^K:=\inf_{a\in \mathcal A} J_K(a),\quad J_K(a)
    :=
    \E\left[\int_0^T c(t,X_t^{K,a},a_t)\,\dd t+g(X_T^{K,a})\right].
\] Let $e_i\in\R^m$ be the $i$-th coordinate vector. For a smooth test function $f=f(t,x,\mathbf z^K)$, continuously differentiable with respect to the time component, twice continuously differentiable with respect to the diffusion variable $x$ and  one continuously differentiable with respect to each memory variable $z^i$, we define the extended generator 
\begin{align*}
\label{eq:generator}
    \mathcal L_K^a f(t,x,\mathbf z^K)
    &=
    \partial_t f(t,x,\mathbf z^K)
    +
    b(t,x,a)\cdot \nabla_x f(t,x,\mathbf z^K)
    +
    \frac12\Tr\left[\sigma\sigma^\top(t,x,a)\nabla_{xx}^2 f(t,x,\mathbf z^K)\right]
    \nonumber\\
    &\quad
    -
    \sum_{k=1}^K \beta k\, z^k\cdot \nabla_{z^k}f(t,x,\mathbf z^K)
    \nonumber\\
    &\quad
    +
    \sum_{i=1}^m
    \lambda_i^K(t,x,\mathbf z^K,a)
    \Big[
        f(t,x+\gamma_i(t,x,a),z^1+e_i,\dots,z^K+e_i)
        -f(t,x,\mathbf z^K)
    \Big],
\end{align*} where $\lambda_i^K(t,x,\mathbf z^K,a)  =\Big[\mu(t,x,a)
    +
    \left(\sum_{k=1}^K Q_k(a)z^k\right)_+\Big]_i.$

{
\begin{lemma}[Stopped physical-state stability]
\label{lem:stopped_physical_stability}
Suppose that Assumption~\ref{ass:standing}(S1) holds. Fix
$K\geq1$ and $a\in\mathcal A$. Let
\[
    (X^a,N^a,\lambda^a)
    \qquad\text{and}\qquad
    (X^{K,a},N^{K,a},\lambda^{K,a})
\]
be constructed with the same Brownian motion, the same Poisson random
measures, the same initial condition, and the same admissible control
process $a$. For $R\geq1$, define
\[
\begin{aligned}
\tau_R^{K,a}
:=
\inf\Bigl\{
t\in[0,T]:\;&
\norm{N_t^{K,a}}_1+\norm{N_t^a}_1>R
&\text{or }\norm{X_t^{K,a}}>R
\text{ or }\norm{X_t^a}>R
\Bigr\}
\wedge T.
\end{aligned}
\]
Set
\[
D_{X,R}^{K,a}(t)
:=
\E\left[
    \sup_{0\leq u\leq t}
    \norm{
        X_{u\wedge\tau_R^{K,a}}^{K,a}
        -
        X_{u\wedge\tau_R^{K,a}}^a
    }
\right],
\]
and
\[
D_{\lambda,R}^{K,a}(t)
:=
\E\int_0^{t\wedge\tau_R^{K,a}}
\norm{\lambda_s^{K,a}-\lambda_s^a}_1\,\dd s.
\] Then, for every $R\geq1$, there exists
$C_{T,R}<\infty$, independent of $K$ and
$a\in\mathcal A$, such that
\begin{equation}
\label{eq:stopped_x_stability}
D_{X,R}^{K,a}(t)
\leq
C_{T,R}\int_0^tD_{X,R}^{K,a}(s)\,\dd s
+
C_{T,R}D_{\lambda,R}^{K,a}(t),
\qquad
0\leq t\leq T.
\end{equation}
Consequently,
\begin{equation}
\label{eq:stopped_x_stability_closed}
D_{X,R}^{K,a}(t)
\leq
C_{T,R}D_{\lambda,R}^{K,a}(t),
\qquad
0\leq t\leq T.
\end{equation}
\end{lemma}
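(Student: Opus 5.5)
We compare the two stopped trajectories through their integral representations. For $u\le t$, set $\tau:=\tau_R^{K,a}$. Subtracting the two equations in \eqref{eq:poisson-embedded-hawkes-sde} and \eqref{eq:Markov-poisson-embedded-hawkes-sde} (same $X_0$, $W$, $\Pi^i$ and $a$) splits $X^{K,a}_{u\wedge\tau}-X^a_{u\wedge\tau}$ into three parts. The first is a drift difference $\int_0^{u\wedge\tau}(b(s,X^{K,a}_{s-},a_s)-b(s,X^a_{s-},a_s))\,ds$. The second is a stochastic integral of the volatility difference. The third is the jump term
\[
\sum_{i=1}^m\int_{(0,u\wedge\tau]}\int_0^\infty\Bigl(\gamma_i(s,X^{K,a}_{s-},a_s)\mathbf 1_{\{\theta\le\lambda^{K,a,i}_s\}}-\gamma_i(s,X^{a}_{s-},a_s)\mathbf 1_{\{\theta\le\lambda^{a,i}_s\}}\Bigr)\Pi^i(ds,d\theta).
\]
We take $\sup_{u\le t}$ and expectations, and bound each part in $L^1$ so that the right-hand side is linear in $D_{X,R}^{K,a}$ and $D_{\lambda,R}^{K,a}$. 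The final step \eqref{eq:stopped_x_stability_closed} then follows from \eqref{eq:stopped_x_stability}. The map $t\mapsto D_{\lambda,R}^{K,a}(t)$ is nondecreasing, so Gronwall's lemma gives $D_{X,R}^{K,a}(t)\le C_{T,R}e^{C_{T,R}T}D_{\lambda,R}^{K,a}(t)$. We then rename the constant.

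\emph{Drift.} By (S1), the drift part is bounded by $L\int_0^t\E\|X^{K,a}_{s\wedge\tau}-X^a_{s\wedge\tau}\|\,ds\le L\int_0^tD_{X,R}^{K,a}(s)\,ds$.

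\emph{Jumps.} We split the integrand as
\[
\bigl(\gamma_i(\cdot,X^{K,a}_{s-})-\gamma_i(\cdot,X^a_{s-})\bigr)\mathbf 1_{\{\theta\le\lambda^{K,a,i}_s\}}+\gamma_i(\cdot,X^a_{s-})\bigl(\mathbf 1_{\{\theta\le\lambda^{K,a,i}_s\}}-\mathbf 1_{\{\theta\le\lambda^{a,i}_s\}}\bigr).
\]
We bound the supremum by the total variation, that is, the integral of the norms against $\Pi^i$. Taking expectations replaces $\Pi^i$ by its compensator, which is legitimate since the integrands are predictable and nonnegative.

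For the first piece this gives $L\,\E\int_0^{t\wedge\tau}\|X^{K,a}_{s}-X^a_{s}\|\,\lambda^{K,a,i}_s\,ds$. We have $\int|\mathbf 1_{\{\theta\le x\}}-\mathbf 1_{\{\theta\le y\}}|\,d\theta=|x-y|$, and on $[0,\tau)$ we have $\|X^a_{s-}\|\le R$, hence $\|\gamma_i\|\le L(1+R)$. So the second piece is bounded by $L(1+R)D_{\lambda,R}^{K,a}(t)$.

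\emph{Brownian part.} We use the BDG inequality with $p=1$, which gives $C\,\E\bigl(\int_0^{t\wedge\tau}\|X^{K,a}_s-X^a_s\|^2ds\bigr)^{1/2}$. Before $\tau$ both states are bounded by $R$, so $\|X^{K,a}_s-X^a_s\|^2\le 2R\,\|X^{K,a}_s-X^a_s\|$. The square root can then be handled by Young's inequality, $\sqrt{ab}\le \epsilon a+b/(4\epsilon)$. Alternatively we bound $(\int_0^t f)^{1/2}\le \sup_{s\le t} f^{1/2}\,\sqrt t$ and then apply Young with a small coefficient to absorb $\tfrac12 D_{X,R}^{K,a}(t)$ into the left-hand side. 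The latter absorption is legitimate because $D_{X,R}^{K,a}(t)\le 2R<\infty$. Either way the Brownian part is bounded by $\tfrac12D_{X,R}^{K,a}(t)+C_R\int_0^tD_{X,R}^{K,a}(s)\,ds$.

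\textbf{Main obstacle.} The delicate point is the factor $\lambda^{K,a,i}_s$ in the first jump piece. The stopping time controls $N$ and $X$ but not the intensities, so this factor is unbounded. I would first try to avoid it altogether. Before the first jump time at which the two counting processes disagree, the accepted jumps coincide. If one could argue through a coupling event, the first piece would only involve the common jumps. More robustly, we keep the integral against $\Pi^i$ itself: on $[0,\tau]$ the total number of jumps of $N^{K,a}$ is at most $R$. Hence the first piece is at most $L\sum_{\text{jumps }s\le t\wedge\tau}\|X^{K,a}_{s-}-X^a_{s-}\|\le LR\sup_{u<t}\|X^{K,a}_{u\wedge\tau}-X^a_{u\wedge\tau}\|$, and this does not yield an integral in time. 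If that route fails, one can alternatively handle the piece inductively between successive jump times, using at most $R$ of them, and apply Gronwall on each inter-jump interval. The constant then depends on $R$ but not on $K$ or $a$, which is consistent with the claimed $C_{T,R}$. I expect making this localization rigorous to be the technical crux. The remaining bounds only use (S1) and the stopping at level $R$, so all constants are independent of $K$ and $a$.
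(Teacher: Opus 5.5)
You isolate the right obstacle, and your drift bound, the discrepancy piece (via $\int_0^\infty|\mathbf 1_{\{\theta\le x\}}-\mathbf 1_{\{\theta\le y\}}|\,\dd\theta=|x-y|$, giving $\Gamma_R D_{\lambda,R}^{K,a}(t)$ with $\Gamma_R=L(1+R)$) and the closing Gronwall step all match the paper. But the step you flag as the crux is left open, and your announced plan of bounding each part separately in $L^1$ cannot close it. In expectation, the Lipschitz jump piece carries the factor $\lambda^{K,a,i}_s$. Before $\tau$ this is only bounded by $L(1+R)+R\sup_r\bar\phi_K(r)$, which the assumptions do not control uniformly in $K$. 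The pathwise bound $LR\,S_R(t)$ is not absorbable unless $LR<1$.

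The missing idea is a pathwise discrete Gronwall over the jump atoms, done before any expectation is taken. With $\Delta X:=X^{K,a}-X^a$ and $S_R(t):=\sup_{u\le t}\norm{\Delta X_{u\wedge\tau}}$, one has
\[
S_R(t)\le H_R(t)+L\int_{(0,t\wedge\tau]}S_R(s-)\,\dd C_s,\qquad H_R(t):=L\int_0^tS_R(s)\,\dd s+M_R(t)+\Gamma_R\,\Xi_{t\wedge\tau}.
\]
Here $M_R$ is the running supremum of the stopped Brownian integral, $\Xi$ is the discrepancy count, and $C$ counts the atoms charged by the Lipschitz piece. The paper uses common atoms, $\theta\le\lambda^{K,a,i}_s\wedge\lambda^{a,i}_s$; your choice $C=\norm{N^{K,a}}_1$ works equally well. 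Because $H_R$ is nondecreasing, induction over the atoms gives
\[
S_R(t)\le H_R(t)\prod_{s\le t\wedge\tau}(1+L\Delta C_s)\le e^{LC_{t\wedge\tau}}H_R(t).
\]
The stopping rule bounds $C_{t\wedge\tau}$ deterministically, by $R+m$ rather than $R$ because of the overshoot at $\tau$. Only then take expectations, using $\E\,\Xi_{t\wedge\tau}=D_{\lambda,R}^{K,a}(t)$. Your option (c) points this way, but a Gronwall argument ``on each inter-jump interval'' in expectation does not work over random intervals. It must be this pathwise amplification, and the resulting factor $\kappa_R$ then multiplies the Brownian term as well.

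Second, neither of your routes for the Brownian term gives a bound linear in $D_{X,R}^{K,a}$. From $\norm{\Delta X_s}^2\le 2R\norm{\Delta X_s}$ you only get $\sqrt{2R\int_0^tD_{X,R}^{K,a}(s)\,\dd s}$, or an additive constant $2R\epsilon$ after Young. This sublinear term destroys the homogeneous conclusion, since $u(t)\le c\sqrt{\int_0^tu(s)\,\dd s}$ is satisfied by $u(t)=c^2t/2$. The alternative bound $\sqrt t\,S_R(t)$ has coefficient $C_{\mathrm{BDG}}L\sqrt t$, which is absorbable only for small $t$. The estimate that works keeps the random supremum in one factor: $\int_0^{t\wedge\tau}\norm{\Delta X_{s-}}^2\,\dd s\le S_R(t)\int_0^tS_R(s)\,\dd s$. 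Then Cauchy--Schwarz and Young's inequality, with a weight chosen relative to $\kappa_R$, give $\kappa_R\,\E[M_R(t)]\le\frac12D_{X,R}^{K,a}(t)+C_{T,R}\int_0^tD_{X,R}^{K,a}(s)\,\dd s$. After that, your absorption and Gronwall steps go through with constants depending only on $(T,R,L,m)$.
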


\begin{proof}
Fix $K\geq1$ and $a\in\mathcal A$. For readability, suppress the
superscript $a$, and write
\[
    \tau_R:=\tau_R^{K,a},
    \qquad
    \Delta X_t:=X_t^K-X_t.
\]
Define
\[
S_R(t)
:=
\sup_{0\leq u\leq t}
\norm{\Delta X_{u\wedge\tau_R}}.
\]
Thus,
\[
    D_{X,R}(t)=\E[S_R(t)].
\] For each $i=1,\ldots,m$, set
\[
I_s^{K,i}(\theta)
:=
\mathbf 1_{\{\theta\leq\lambda_s^{K,i}\}},
\qquad
I_s^i(\theta)
:=
\mathbf 1_{\{\theta\leq\lambda_s^i\}}.
\]
Define the counting process for the common jump from two processes by
\[
C_t^{K,i}
:=
\int_{(0,t]}\int_0^\infty
\mathbf 1_{\{
    \theta\leq
    \lambda_s^{K,i}\wedge\lambda_s^i
\}}
\,\Pi^i(\dd s,\dd\theta),
\]
and define the counting process for the discrepancy between two processes by
\[
\Xi_t^{K,i}
:=
\int_{(0,t]}\int_0^\infty
\left|
    I_s^{K,i}(\theta)-I_s^i(\theta)
\right|
\,\Pi^i(\dd s,\dd\theta).
\] Indeed, 
\begin{equation*}
        \int_{(0,t]}\int_{\mathbb R^+}\left|I_s^{K,i}(\theta) - I_s^i(\theta)\right|\Pi^i(\dd s,\dd\theta) =\int_{(0,t]}|dN_t^{K.i} - dN_t^i|,
\end{equation*} because for any fixed atom $(s,\theta)$, if $N^{K,i}_s$ and $N^K_s$ accept or reject it simultaneously, then $\left|I_s^{K,i}(\theta) - I_s^i(\theta)\right| = |dN_t^{K.i} - dN_t^i| = 0$; if they disagree on this point, then $\left|I_s^{K,i}(\theta) - I_s^i(\theta)\right| = |dN_t^{K.i} - dN_t^i| = 1$. Furthermore, \[\int_0^\infty
\left|
    I_s^{K,i}(\theta)-I_s^i(\theta)
\right|
\,\dd\theta
=
\abs{\lambda_s^{K,i}-\lambda_s^i},\] so we have 
\begin{equation}
    \label{eq:discrepancy_intensity}
    \mathbb E[\Xi_t^{K,i}] = \mathbb E\int_{(0,t]}\abs{\lambda_s^{K,i}-\lambda_s^i}ds
\end{equation}
Set
\[
C_t^K:=\sum_{i=1}^m C_t^{K,i},
\qquad
\Xi_t^K:=\sum_{i=1}^m \Xi_t^{K,i},
\] 

decomposing the total counts of two processes by common jumps and the discrepancies, we have
\begin{equation}
\label{eq:common_discrepancy_identity}
\norm{N_t^K}_1+\norm{N_t}_1
=
2C_t^K+\Xi_t^K.
\end{equation} Let
\[
    Q_t^K:=\norm{N_t^K}_1+\norm{N_t}_1.
\]
On a set of probability one, each component counting process has
jumps of size at most one. Consequently,
\[
    \Delta Q_t^K\leq 2m,
    \qquad 0\leq t\leq T.
\]
By the definition of $\tau_R$, $Q_{\tau_R-}^K\leq R$.
It follows that, for every $t\in[0,T]$,
\[
    Q_{t\wedge\tau_R}^K\leq R+2m.
\]
Using \eqref{eq:common_discrepancy_identity} and the nonnegativity of
$\Xi^K$, we obtain
\begin{equation}
\label{eq:stopped_common_count_bound}
C_{t\wedge\tau_R}^K
\leq
\frac{R+2m}{2}.
\end{equation} Write
\[
\gamma_s^{K,i}
:=
\gamma_i(s,X_{s-}^K,a_s),
\qquad
\gamma_s^i
:=
\gamma_i(s,X_{s-},a_s).
\]
The jump integrand admits the decomposition
\[
\begin{aligned}
\gamma_s^{K,i}I_s^{K,i}(\theta)
-
\gamma_s^iI_s^i(\theta)
={}&
\left(
    \gamma_s^{K,i}-\gamma_s^i
\right)
\mathbf 1_{\{
    \theta\leq
    \lambda_s^{K,i}\wedge\lambda_s^i
\}}
\\
&+
\gamma_s^{K,i}
\mathbf 1_{\{
    \lambda_s^i<\theta\leq\lambda_s^{K,i}
\}}
\\
&-
\gamma_s^i
\mathbf 1_{\{
    \lambda_s^{K,i}<\theta\leq\lambda_s^i
\}}.
\end{aligned}
\] Let $L$ be the Lipschitz and growth constant in
Assumption~\ref{ass:standing}(S1). By the definition of
$\tau_R$, for every $s\leq\tau_R$,$\norm{X_{s-}^K}\vee\norm{X_{s-}}\leq R$.
Therefore,
\[
\norm{\gamma_s^{K,i}-\gamma_s^i}
\leq
L S_R(s-),
\]
and
\[
\norm{\gamma_s^{K,i}}
\vee
\norm{\gamma_s^i}
\leq
\Gamma_R,
\qquad
\Gamma_R:=L(1+R).
\]
Consequently,
\[
\norm{
\gamma_s^{K,i}I_s^{K,i}(\theta)
-
\gamma_s^iI_s^i(\theta)
}\leq
L_\gamma S_R(s-)
\mathbf 1_{\{
    \theta\leq
    \lambda_s^{K,i}\wedge\lambda_s^i
\}}
+
\Gamma_R
\left|
    I_s^{K,i}(\theta)-I_s^i(\theta)
\right|.
\] Subtracting the two stopped state equations, and using the Lipschitz
continuity of $b$, gives
\begin{equation}
\label{eq:stopped_state_pathwise}
S_R(t)
\leq{}
L\int_0^tS_R(s)\,\dd s
+
M_R(t)
+
\Gamma_R\Xi_{t\wedge\tau_R}^K+
L
\int_{(0,t\wedge\tau_R]}
S_R(s-)\,\dd C_s^K,
\end{equation}
where
\[
M_R(t)
:=
\sup_{0\leq u\leq t}
\left\|
\int_0^{u\wedge\tau_R}
\left[
    \sigma(s,X_{s-}^K,a_s)
    -
    \sigma(s,X_{s-},a_s)
\right]\dd W_s
\right\|.
\] Set
\[
H_R(t)
:=
L\int_0^tS_R(s)\,\dd s
+
M_R(t)
+
\Gamma_R\Xi_{t\wedge\tau_R}^K.
\]
The process $H_R$ is nonnegative and nondecreasing. Iterating
\eqref{eq:stopped_state_pathwise} over the jump times of $C^K$
gives
\[
S_R(t)
\leq
H_R(t)
\prod_{0<s\leq t\wedge\tau_R}
\left(
    1+L\Delta C_s^K
\right).
\]
Since $1+x\leq e^x$ for $x\geq0$,
\[
\prod_{0<s\leq t\wedge\tau_R}
\left(
    1+L\Delta C_s^K
\right)
\leq
\exp\left(
    L C_{t\wedge\tau_R}^K
\right).
\]
Using \eqref{eq:stopped_common_count_bound}, we conclude that
\begin{equation}
\label{eq:finite_jump_amplification}
S_R(t)
\leq
\kappa_R
\left[
    L\int_0^tS_R(s)\,\dd s
    +
    M_R(t)
    +
    \Gamma_R\Xi_{t\wedge\tau_R}^K
\right],
\end{equation}
where
\[
    \kappa_R
    :=
    \exp\left(
        \frac{L(R+2m)}{2}
    \right).
\] By the Burkholder--Davis--Gundy inequality and the Lipschitz
continuity of $\sigma$,
\[
\begin{aligned}
\E[M_R(t)]
&\leq
C_{\mathrm{BDG}}L
\E\left[
    \left(
        \int_0^{t\wedge\tau_R}
        \norm{\Delta X_{s-}}^2\,\dd s
    \right)^{1/2}
\right].
\end{aligned}
\]
Moreover,
\[
\int_0^{t\wedge\tau_R}
\norm{\Delta X_{s-}}^2\,\dd s
\leq
S_R(t)\int_0^tS_R(s)\,\dd s.
\]
Hence, by the Cauchy--Schwarz inequality,
\[
\begin{aligned}
\E[M_R(t)]
&\leq
C_{\mathrm{BDG}}L
\E\left[
    \left(
        S_R(t)\int_0^tS_R(s)\,\dd s
    \right)^{1/2}
\right]
\\
&\leq
C_{\mathrm{BDG}}L
\left(
    D_{X,R}(t)
    \int_0^tD_{X,R}(s)\,\dd s
\right)^{1/2}.
\end{aligned}
\]
Let $A = \kappa_RC_{\mathrm{BDG}}L(\int_0^tD_{X,R}(s)\,\dd s)^{1/2}$, $B = (D_{X,R}(t))^{1/2}$, then
we have $AB\leq \frac{1}{2}A^2 + \frac{1}{2}B^2$, which implies
\begin{equation}
\label{eq:stopped_BDG_bound}
\kappa_R\E[M_R(t)]
\leq
\frac12D_{X,R}(t)
+
C_{T,R}\int_0^tD_{X,R}(s)\,\dd s,
\end{equation} where $C_{T,R}$ is constant depending on $T,R$. Note that, by \eqref{eq:discrepancy_intensity}
\[
\Xi_t^K
-
\int_0^t
\norm{\lambda_s^K-\lambda_s}_1\,\dd s
\]
is a local martingale. Therefore, from \eqref{eq:discrepancy_intensity} and Fubini's theorem,
\begin{align}
\label{eq:discrepancy_compensator_refined}
\E\left[
    \Xi_{t\wedge\tau_R}^K
\right]
&=
\sum_{i=1}^m
\E\int_{(0,T]}\int_0^\infty
\mathbf 1_{\{s\leq t\wedge\tau_R\}}
\left|
    \mathbf 1_{\{\theta\leq\lambda_s^{K,i}\}}
    -
    \mathbf 1_{\{\theta\leq\lambda_s^i\}}
\right|
\Pi^i(\dd s,\dd\theta)
\nonumber\\
&=
\sum_{i=1}^m
\E\int_0^T\int_0^\infty
\mathbf 1_{\{s\leq t\wedge\tau_R\}}
\left|
    \mathbf 1_{\{\theta\leq\lambda_s^{K,i}\}}
    -
    \mathbf 1_{\{\theta\leq\lambda_s^i\}}
\right|
\dd\theta\,\dd s
\nonumber\\
&=
\E\int_0^{t\wedge\tau_R}
\sum_{i=1}^m
\abs{\lambda_s^{K,i}-\lambda_s^i}
\,\dd s
\nonumber\\
&=
\E\int_0^{t\wedge\tau_R}
\norm{\lambda_s^K-\lambda_s}_1\,\dd s
\nonumber\\
&=
D_{\lambda,R}(t).
\end{align}
Notice that
$\Xi_{t\wedge\tau_R}^K\leq Q_{t\wedge\tau_R}^K\leq R+2m$,
so the stopped discrepancy count is integrable. Taking expectations in
\eqref{eq:finite_jump_amplification}, using
\eqref{eq:stopped_BDG_bound} and
\eqref{eq:discrepancy_compensator_refined}, gives
\[
\begin{aligned}
D_{X,R}(t)
\leq{}&
C_{T,R}\int_0^tD_{X,R}(s)\,\dd s
+
\frac12D_{X,R}(t)
+
C_{T,R}D_{\lambda,R}(t).
\end{aligned}
\]
Absorbing the term $\frac12D_{X,R}(t)$ into the left-hand side
yields
\[
D_{X,R}(t)
\leq
C_{T,R}\int_0^tD_{X,R}(s)\,\dd s
+
C_{T,R}D_{\lambda,R}(t).
\]
This proves \eqref{eq:stopped_x_stability}. Finally, $D_{\lambda,R}$ is nonnegative and nondecreasing.
The inhomogeneous Gronwall inequality therefore gives
\[
D_{X,R}(t)
\leq
C_{T,R}D_{\lambda,R}(t),
\qquad
0\leq t\leq T,
\]
after enlarging $C_{T,R}$ if necessary. This proves
\eqref{eq:stopped_x_stability_closed}.
\end{proof}

}

We now give a fundamental lemma related to Volterra resolvent. 

\begin{lemma}[Volterra-resolvent comparison]
\label{lem:volterra_resolvent_comparison}
Let $k\in L^1([0,T];\mathbb R_+)$, and let $R$ be the solution of the Volterra equation
\[
    R(t)=k(t)+\int_0^t k(t-s)R(s)\,\dd s, \ t\in [0,T].
\]
Suppose $f,F$ are nonnegative integrable functions on $[0,T]$ and
\[
    f(t)
    \le
    F(t)+\int_0^t k(t-s)f(s)\,\dd s,
    \qquad 0\le t\le T.
\]

Then
\[
    f(t)
    \le
    F(t)+\int_0^t R(t-s)F(s)\,\dd s.
\]
Consequently,
\[
    \int_0^t f(u)\,\dd u
    \le
    \left(
        1+\int_0^T R(u)\,\dd u
    \right)
    \int_0^t F(s)\,\dd s,
    \qquad 0\le t\le T.
\]
\end{lemma}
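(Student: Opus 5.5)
The plan is the standard resolvent argument for linear Volterra inequalities. Positivity of $k$ and $R$ will let us iterate the inequality without any sign issues.

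\emph{Step 1: the resolvent.} Exactly as in the proof of Lemma~\ref{lemma:bounds}(ii), choose $\alpha>0$ with $\norm{k}_{1,\alpha}<1$. Then the Neumann series
\[
R=\sum_{j\ge1}k^{*j}
\]
converges in the weighted $L^1$-norm, gives the unique solution of the Volterra equation, and is nonnegative. Both $R=k+k*R$ and $R=k+R*k$ hold, since convolutions of powers of $k$ commute.

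\emph{Step 2: the inequality.} Write $g:=f-F-k*f\le 0$. Since $f,F\in L^1$, the function $g$ is integrable, and $f=F+g+k*f$. The operator $h\mapsto F+g+k*h$ is a contraction in $\norm{\cdot}_{1,\alpha}$, so $f$ is its unique fixed point. That fixed point is given by the resolvent formula
\[
f=(F+g)+R*(F+g).
\]
To check this, substitute the formula and use $R=k+k*R$. Because $g\le0$ and $R\ge0$, we get $g+R*g\le0$, and therefore
\[
f\le F+R*F .
\]
This identity holds a.e. If a pointwise statement is needed, one can plug the a.e. bound back into the original inequality: the right-hand side $F+k*f$ only involves $f$ through an integral, so the bound transfers.

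An alternative that avoids the identification step is direct iteration. Substituting the inequality into itself $n$ times gives
\[
f\le F+\sum_{j=1}^{n-1}k^{*j}*F+k^{*n}*f .
\]
The remainder satisfies $\norm{k^{*n}*f}_{1,\alpha}\le\norm{k}_{1,\alpha}^n\norm{f}_{1,\alpha}\to0$. Monotone convergence of the nonnegative partial sums $\sum_{j<n}k^{*j}*F$ to $R*F$ then yields $f\le F+R*F$ a.e. I would present this version, since it uses only positivity and the weighted estimate \eqref{eq:weighted-convolution}.

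\emph{Step 3: the integrated bound.} Integrate over $[0,t]$ and apply Fubini to the nonnegative convolution term:
\[
\int_0^t\!\int_0^u R(u-s)F(s)\,\dd s\,\dd u=\int_0^t F(s)\int_0^{t-s}R(v)\,\dd v\,\dd s\le \Big(\int_0^T R\Big)\int_0^t F .
\]
Adding $\int_0^t F$ to both sides gives the stated bound.

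\emph{Main obstacle.} The only delicate point is controlling the remainder $k^{*n}*f$, where $f$ is merely $L^1$. The weighted norm with $\norm{k}_{1,\alpha}<1$ handles this. The reason to use it is that the unweighted $L^1$ norm of $k$ could exceed $1$, which would break convergence. A pointwise rather than a.e. conclusion is a minor technicality.
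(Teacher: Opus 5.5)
Your proof is correct, but it handles the key step differently from the paper. The paper sets $G:=F+R*F$ and uses $R=k+k*R$ to check that $G$ solves the \emph{equality} $G=F+k*G$. It then invokes a Volterra comparison principle (citing Beesack) to pass from $f\le F+k*f$ to $f\le G$; its integration step is the same as your Step~3. You prove that comparison yourself, in one of two ways. Either you solve $f=(F+g)+k*f$ exactly with the nonpositive defect $g=f-F-k*f$ and use that $h\mapsto h+R*h$ is order-preserving because $R\ge0$. Or you iterate the inequality and kill the remainder $k^{*n}*f$ with the weighted estimate \eqref{eq:weighted-convolution} and $\|k\|_{1,\alpha}<1$. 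The paper's route is shorter, but it outsources exactly the step where $k\ge0$ and $f\in L^1$ are used. It also does not check the hypotheses of the cited principle, nor say whether its conclusion holds pointwise or almost everywhere. Your route is self-contained and reuses the contraction machinery already built in Lemma~\ref{lemma:bounds}. It also makes explicit that the bound holds a.e., which is all the integrated estimate used in Theorem~\ref{thm:convergence} requires, and then at every $t$ after reinsertion into $F+k*f$. In the iteration version, you can pass to the limit without extracting subsequences. Since $f-F-R*F\le k^{*n}*f$ and the right side is nonnegative, you get $\|(f-F-R*F)_+\|_{1,\alpha}\le\|k\|_{1,\alpha}^n\|f\|_{1,\alpha}\to0$.
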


\begin{proof}
Define
\[
    G(t)
    :=
    F(t)+\int_0^t R(t-s)F(s)\,\dd s.
\]
Using the resolvent identity $R=k+k*R$, we have
\[
    G=F+R*F=F+k*F+k*R*F=F+k*G.
\]
Since $f\le F+k*f$ and $G=F+k*G$, the Volterra comparison principle (see \cite{Beesack1969}) gives $f\le G$. Hence
\[
    f(t)
    \le
    F(t)+\int_0^t R(t-s)F(s)\,\dd s.
\]

Integrating over $[0,t]$ gives
\[
\begin{aligned}
    \int_0^t f(u)\,\dd u
    &\le
    \int_0^t F(u)\,\dd u
    +
    \int_0^t\int_0^u R(u-s)F(s)\,\dd s\,\dd u \\
    &=
    \int_0^t F(s)
    \left[
        1+\int_s^t R(u-s)\,\dd u
    \right]\dd s \\
    &\le
    \left(
        1+\int_0^T R(r)\,\dd r
    \right)
    \int_0^t F(s)\,\dd s.
\end{aligned}
\]
\end{proof}

{\begin{theorem}[Convergence of the Markov approximation]
\label{thm:convergence}
Suppose Assumptions~\ref{ass:standing},
\ref{ass:kernel_approx_strong}, and
\ref{ass:approx_uniform_moments} hold. For every
$K\geq1$ and $a\in\mathcal A$, let
\[
(X^a,N^a,\lambda^a)
\qquad\text{and}\qquad
(X^{K,a},N^{K,a},\lambda^{K,a})
\]
be the respective solutions of
\eqref{eq:poisson-embedded-hawkes-sde} and
\eqref{eq:Markov-poisson-embedded-hawkes-sde}, constructed with the
same Brownian motion, the same Poisson random measures, and the same
admissible control process $a=(a_t)_{0\leq t\leq T}$. Then
\begin{equation}
\label{eq:process_convergence}
\lim_{K\to\infty}
\sup_{a\in\mathcal A}
\left\{
\E\left[
    \sup_{0\leq t\leq T}
    \norm{X_t^{K,a}-X_t^a}
\right]
+
\E\int_0^T
\norm{\lambda_t^{K,a}-\lambda_t^a}_1\,\dd t
\right\}
=0.
\end{equation}
Moreover,
\begin{equation}
\label{eq:counting_process_convergence}
\lim_{K\to\infty}
\sup_{a\in\mathcal A}
\E\left[
    \sum_{i=1}^m
    \sup_{0\leq t\leq T}
    \abs{N_t^{K,a,i}-N_t^{a,i}}
\right]
=0.
\end{equation} If, in addition, Assumption~\ref{ass:reward} holds with its growth
exponent $q$ satisfying
\[
p_\star>q+1,
\]
where $p_\star$ is the exponent in
Assumption~\ref{ass:approx_uniform_moments}, then
\[
\sup_{a\in\mathcal A}\abs{J_K(a)-J(a)}\longrightarrow0.
\]
Consequently, for
\[
V_0:=\inf_{a\in\mathcal A}J(a),
\qquad
V_0^K:=\inf_{a\in\mathcal A}J_K(a),
\]
we have
\[
\abs{V_0^K-V_0}\longrightarrow0.
\]
\end{theorem}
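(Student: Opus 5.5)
The proof works first on the stopped interval $[0,\tau_R^{K,a}]$ of Lemma~\ref{lem:stopped_physical_stability}. The key step is a Volterra-type inequality for the stopped intensity discrepancy. This inequality is closed with Lemma~\ref{lem:volterra_resolvent_comparison} and the uniform resolvent bound of Lemma~\ref{lemma:bounds}(ii). Localization is then removed using the uniform moments of Assumptions~\ref{ass:standing}(S4) and~\ref{ass:approx_uniform_moments}. Throughout, $K$ and $a$ are fixed and the superscript $a$ is suppressed.

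\textbf{Step 1: intensity discrepancy.} The map $u\mapsto u_+$ is $1$-Lipschitz componentwise, and the original memory $\int\Phi\,dN$ is nonnegative, so $(\cdot)_+$ leaves it unchanged. Hence
\[
\|\lambda^K_s-\lambda_s\|_1\le L\|X^K_{s-}-X_{s-}\|+\Big\|\int_{(0,s)}(\Phi_K-\Phi)(s-r,a_s)\,dN_r\Big\|_1+\Big\|\int_{(0,s)}\Phi_K(s-r,a_s)\,d(N^K_r-N_r)\Big\|_1 .
\]
The second term is bounded by $\int_{(0,s)}\bar\varepsilon_K(s-r)\,d\|N_r\|_1$. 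The third is bounded by $\int_{(0,s)}\bar\phi_K(s-r)\,d\Xi^K_r$, where $\Xi^K$ is the discrepancy count from the proof of Lemma~\ref{lem:stopped_physical_stability}; this works because $|dN^K-dN|$ is exactly $d\Xi^K$ componentwise.

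\textbf{Step 2: closing the stopped inequality.} Set $f(t):=\E[\|\lambda^K_t-\lambda_t\|_1\mathbf 1_{\{t\le\tau_R\}}]$. Take expectations in Step~1, replace $d\Xi^K$ by its compensator $\|\lambda^K-\lambda\|_1\,dr$ via \eqref{eq:discrepancy_intensity}, and replace $dN$ by its compensator $\|\lambda\|_1\,dr$. The kernels are deterministic, so this substitution is legitimate for predictable integrands. Since $\{t\le\tau_R\}\subset\{r<\tau_R\}$ for $r<t$, this gives
\[
f(t)\le L\,\E\|\Delta X_{(t\wedge\tau_R)-}\|+\int_0^t\bar\varepsilon_K(t-r)\,\E\|\lambda_r\|_1\,dr+\int_0^t\bar\phi_K(t-r)f(r)\,dr .
\]
Lemma~\ref{lem:volterra_resolvent_comparison} applies with $k=\bar\phi_K$, and its constant $1+\int_0^T R_K$ is bounded uniformly in $K$ by Lemma~\ref{lemma:bounds}(ii). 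Integrating then gives
\[
D_{\lambda,R}(t)\le C\Big(\int_0^t D_{X,R}(s)\,ds+\E\int_0^T\!\!\int_0^T\bar\varepsilon_K(u-r)\mathbf 1_{\{r<u\}}\,dN_r\,du\Big).
\]
The last term is at most $\delta_K(T)\,\E\|N_T\|_1\le C\delta_K(T)$ by (S4). Combining with \eqref{eq:stopped_x_stability_closed} produces the coupled inequality $D_{X,R}(t)\le C_{T,R}\int_0^tD_{X,R}+C_{T,R}\delta_K(T)$. Gronwall then gives $D_{X,R}(T)+D_{\lambda,R}(T)\le C_{T,R}\delta_K(T)$ uniformly in $a$.

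The main obstacle is that Step~2 must close jointly: the constant in \eqref{eq:stopped_x_stability_closed} involves $D_\lambda$, while $D_\lambda$ involves $D_X$. I would handle this by substituting the un-closed form \eqref{eq:stopped_x_stability}, which bounds $D_X$ by $\int D_X$ plus $D_\lambda$, and running Gronwall on the combined quantity $D_X+D_\lambda$. One must also check that (S3)'s predictable version justifies the compensator substitution.

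\textbf{Step 3: removing the localization.} By Markov's inequality and the uniform moments, $\sup_{a,K}\Prob(\tau_R<T)\le C/R$. On the event $\{\tau_R<T\}$, Hölder's inequality with $p_\star>1$ bounds $\E[\sup_t\|\Delta X_t\|\mathbf 1_{\{\tau_R<T\}}]$ and $\E[\int_0^T\|\lambda^K-\lambda\|_1\,dt\,\mathbf 1_{\{\tau_R<T\}}]$ by $C R^{-(1-1/p_\star)}$. Choosing $R$ large and then $K$ large proves \eqref{eq:process_convergence}.

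\textbf{Step 4: counting processes and values.} For \eqref{eq:counting_process_convergence}, note that $\sup_t|N^{K,i}_t-N^i_t|\le\Xi^{K,i}_T$. Its expectation equals $\E\int_0^T\|\lambda^K-\lambda\|_1$ by \eqref{eq:discrepancy_intensity}, after localization and monotone convergence. For the values, Assumption~\ref{ass:reward} gives
\[
|J_K(a)-J(a)|\le C\,\E\big[(1+\sup\|X^K\|^q+\sup\|X\|^q)\sup_t\|\Delta X_t\|\big].
\]
Here continuity of $c$ in the action is not needed, since the same $a_t$ is used in both systems. Split on $\{\sup_t\|\Delta X_t\|\le 1\}$, or use Hölder with exponents $p_\star/q$ and its conjugate; since $p_\star>q+1$, the moments control the polynomial factor and uniform integrability of $\sup_t\|\Delta X_t\|$ lets the first-moment convergence upgrade. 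This gives $\sup_a|J_K-J|\to0$. Finally, $|\inf_a J_K-\inf_a J|\le\sup_a|J_K-J|$.
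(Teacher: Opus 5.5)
Your proposal is correct and follows essentially the same route as the paper. It builds the stopped Volterra inequality for $\E[\mathbf 1_{\{t\le\tau_R\}}\|\lambda^K_t-\lambda_t\|_1]$, closes it with Lemma~\ref{lem:volterra_resolvent_comparison} and the uniform resolvent bound, couples it with Lemma~\ref{lem:stopped_physical_stability} and Gronwall, removes the localization by Markov/H\"older, bounds the counting processes via the discrepancy count, and upgrades to $\sup_a|J_K-J|\to0$ by H\"older interpolation. The ``obstacle'' you flag is not a real one: substituting the closed bound $D_{X,R}\le C_{T,R}D_{\lambda,R}$ already closes the argument, as the paper does; the paper runs Gronwall on $D_{\lambda,R}$ rather than on $D_{X,R}$, which is an inessential difference.
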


\begin{proof}
Fix $K\geq1$ and $a\in\mathcal A$. Throughout the
process-convergence part of the proof, suppress the superscript
$a$. Let
$\tau_R:=\tau_R^{K,a}$
be the stopping time introduced in
Lemma~\ref{lem:stopped_physical_stability}. Thus,
\[
\begin{aligned}
\tau_R
=
\inf\Bigl\{
t\in[0,T]:\;&
\norm{N_t^K}_1+\norm{N_t}_1>R
&\text{or }\norm{X_t^K}>R
\text{ or }\norm{X_t}>R
\Bigr\}
\wedge T.
\end{aligned}
\] Define
\[
D_{X,R}(t)
:=
\E\left[
    \sup_{0\leq u\leq t}
    \norm{
        X_{u\wedge\tau_R}^K-X_{u\wedge\tau_R}
    }
\right],\quad D_{\lambda,R}(t)
:=
\E\int_0^{t\wedge\tau_R}
\norm{\lambda_s^K-\lambda_s}_1\,\dd s.
\]
Also set
\[
d_{\lambda,R}(t)
:=
\E\left[
    \mathbf 1_{\{t\leq\tau_R\}}
    \norm{\lambda_t^K-\lambda_t}_1
\right].
\]
Since the singleton $\{\tau_R\}$ has zero Lebesgue measure, we have
\[
D_{\lambda,R}(t)
=
\int_0^t d_{\lambda,R}(s)\,\dd s.
\] For each component $i=1,\ldots,m$, recall the discrepancy counting
process is defined as
\[
\Xi_t^{K,i}
:=
\int_{(0,t]}\int_0^\infty
\left|
    \mathbf 1_{\{\theta\leq\lambda_s^{K,i}\}}
    -
    \mathbf 1_{\{\theta\leq\lambda_s^i\}}
\right|
\Pi^i(\dd s,\dd\theta)\quad\mbox{and}\quad
    \Xi_t^K:=\sum_{i=1}^m\Xi_t^{K,i}.
\] For every $i$,
\[
\int_0^\infty
\left|
    \mathbf 1_{\{\theta\leq\lambda_s^{K,i}\}}
    -
    \mathbf 1_{\{\theta\leq\lambda_s^i\}}
\right|
\,\dd\theta
=
\abs{\lambda_s^{K,i}-\lambda_s^i}.
\]
Moreover,
\[
(s,\theta)
\longmapsto
\mathbf 1_{\{s\leq t\wedge\tau_R\}}
\left|
    \mathbf 1_{\{\theta\leq\lambda_s^{K,i}\}}
    -
    \mathbf 1_{\{\theta\leq\lambda_s^i\}}
\right|
\]
is nonnegative and predictable. 

\paragraph{Step 1: global control of the kernel-approximation error.}

Define
\[
A_K(t)
:=
\E\left\|
    \int_{(0,t)}
    \left[
        \Phi_K(t-s,a_t)-\Phi(t-s,a_t)
    \right]\dd N_s
\right\|_1.
\]
By the definition of
$\bar\varepsilon_K$,
\[
A_K(t)
\leq
\E\int_{(0,t)}
\bar\varepsilon_K(t-s)\,
\dd\norm{N_s}_1.
\]
Consequently, Fubini's theorem and the compensator of $N$ give
\begin{align*}
\int_0^T A_K(t)\,\dd t
&\leq
\E\int_0^T\int_{(0,t)}
\bar\varepsilon_K(t-s)\,
\dd\norm{N_s}_1\,\dd t
\nonumber\\
&=
\E\int_{(0,T)}
\left(
    \int_s^T
    \bar\varepsilon_K(t-s)\,\dd t
\right)
\dd\norm{N_s}_1
\nonumber\\
&=
\E\int_0^T
\left(
    \int_s^T
    \bar\varepsilon_K(t-s)\,\dd t
\right)
\norm{\lambda_s}_1\,\dd s
\nonumber\\
&\leq
\delta_K(T)
\E\int_0^T\norm{\lambda_s}_1\,\dd s
\nonumber\\
&\leq
C_T\delta_K(T).
\end{align*}
The last estimate is uniform over $a\in\mathcal A$ by
Assumption~\ref{ass:standing}(S4).

\paragraph{Step 2: intensity stability before $\tau_R$.}

Define the predictable raw memories
\[
\widehat H_t^K
:=
\int_{(0,t)}
\Phi_K(t-s,a_t)\,\dd N_s^K,
\qquad
H_t
:=
\int_{(0,t)}
\Phi(t-s,a_t)\,\dd N_s.
\]
The two intensities satisfy
\[
\lambda_t^K
=
\mu(t,X_{t-}^K,a_t)+(\widehat H_t^K)_+,
\]
and, since $\Phi$ is entrywise nonnegative,
\[
\lambda_t
=
\mu(t,X_{t-},a_t)+H_t
=
\mu(t,X_{t-},a_t)+(H_t)_+.
\]
The componentwise positive-part map is $1$-Lipschitz in the
$\ell^1$-norm. Hence, by
Assumption~\ref{ass:standing}(S2),
\begin{equation}
\label{eq:lambda_raw_bound_refined}
\norm{\lambda_t^K-\lambda_t}_1
\leq
L\norm{X_{t-}^K-X_{t-}}
+
\norm{\widehat H_t^K-H_t}_1.
\end{equation} Decompose
\begin{align}
\label{eq:memory_decomposition_refined}
\widehat H_t^K-H_t
=
\int_{(0,t)}
\Phi_K(t-s,a_t)\,
\dd(N_s^K-N_s)
+
\int_{(0,t)}
\left[
    \Phi_K(t-s,a_t)-\Phi(t-s,a_t)
\right]\dd N_s.
\end{align} For the first term, define
\[
B_{K,R}(t)
:=
\E\left[
\mathbf 1_{\{t\leq\tau_R\}}
\left\|
    \int_{(0,t)}
    \Phi_K(t-s,a_t)\,
    \dd(N_s^K-N_s)
\right\|_1
\right].
\]
Pathwise, the total variation of
$\dd(N^K-N)$ is bounded by the discrepancy count. Therefore,
\begin{align}
B_{K,R}(t)
&\leq
\E\left[
\mathbf 1_{\{t\leq\tau_R\}}
\sum_{i=1}^m
\int_{(0,t)}
\bar\phi_K(t-s)\,
\dd\Xi_s^{K,i}
\right]
\nonumber\\
&\leq
\E\sum_{i=1}^m
\int_{(0,t)}
\mathbf 1_{\{s\leq\tau_R\}}
\bar\phi_K(t-s)\,
\dd\Xi_s^{K,i}
\nonumber\\
&=
\int_0^t
\bar\phi_K(t-s)
\E\left[
    \mathbf 1_{\{s\leq\tau_R\}}
    \norm{\lambda_s^K-\lambda_s}_1
\right]\dd s
\nonumber\\
&=
\int_0^t
\bar\phi_K(t-s)
d_{\lambda,R}(s)\,\dd s.
\label{eq:BK_refined_bound}
\end{align}
In the second line, we used $\mathbf 1_{\{t\leq\tau_R\}}
\leq
\mathbf 1_{\{s\leq\tau_R\}}$ for any $s<t,$
and the third line follows from the compensation formula. Multiplying \eqref{eq:lambda_raw_bound_refined} by
$\mathbf 1_{\{t\leq\tau_R\}}$, taking expectations, and using
\eqref{eq:memory_decomposition_refined},
\eqref{eq:BK_refined_bound}, and the global quantity $A_K(t)$,
gives
\[
d_{\lambda,R}(t)
\leq
L D_{X,R}(t)
+
A_K(t)
+
\int_0^t
\bar\phi_K(t-s)
d_{\lambda,R}(s)\,\dd s.
\] Apply Lemma~\ref{lem:volterra_resolvent_comparison} with $ f=d_{\lambda,R},
    \;  F=L D_{X,R}+A_K,
    \;
    k=\bar\phi_K.$
Using the uniform resolvent bound gives
\begin{align}
D_{\lambda,R}(t)
&\leq
C_T
\left[
    \int_0^tD_{X,R}(s)\,\dd s
    +
    \int_0^tA_K(s)\,\dd s
\right]
\nonumber\\
&\leq
C_T\int_0^tD_{X,R}(s)\,\dd s
+
C_T\delta_K(T),
\qquad 0\leq t\leq T.
\label{eq:stopped_lambda_stability_refined}
\end{align}
Here $C_T$ is independent of $K$, $a$, and $R$. By Lemma~\ref{lem:stopped_physical_stability},
\[
D_{X,R}(t)
\leq
C_{T,R}D_{\lambda,R}(t).
\]
Substituting this estimate into
\eqref{eq:stopped_lambda_stability_refined} gives
\[
D_{\lambda,R}(t)
\leq
C_{T,R}
\int_0^tD_{\lambda,R}(s)\,\dd s
+
C_T\delta_K(T).
\]
Gronwall's inequality implies
\[
D_{\lambda,R}(T)
\leq
C_{T,R}\delta_K(T).
\]
Applying Lemma~\ref{lem:stopped_physical_stability} once more gives
\[
D_{X,R}(T)
\leq
C_{T,R}\delta_K(T).
\]
Therefore, for every fixed $R\geq1$,
\begin{equation}
\label{eq:fixed_R_refined_convergence}
\lim_{K\to\infty}
\sup_{a\in\mathcal A}
\left[
    D_{X,R}^{K,a}(T)
    +
    D_{\lambda,R}^{K,a}(T)
\right]
=0.
\end{equation}

\paragraph{Step 3: removal of the state/total-count localization.}

Let $p:=p_\star.$ By Markov's inequality,
\begin{align}
\Prob(\tau_R^{K,a}<T)
&\leq
\frac{1}{R^p}
\E\left[
    \sup_{0\leq t\leq T}
    \norm{X_t^{K,a}}^p
\right]
+
\frac{1}{R^p}
\E\left[
    \sup_{0\leq t\leq T}
    \norm{X_t^a}^p
\right]
\nonumber\\
&\quad+
\frac{1}{R^p}
\E\left[
    \left(
        \norm{N_T^{K,a}}_1+\norm{N_T^a}_1
    \right)^p
\right].
\label{eq:localization_probability_intermediate}
\end{align}
Since
\[
(x+y)^p
\leq
2^{p-1}(x^p+y^p),
\qquad x,y\geq0,
\]
Assumptions~\ref{ass:standing}(S4) and
\ref{ass:approx_uniform_moments} imply
\begin{equation}
\label{eq:refined_localization_probability}
\sup_{K\geq1}\sup_{a\in\mathcal A}
\Prob(\tau_R^{K,a}<T)
\leq
\frac{C_{p,T}}{R^p}.
\end{equation} For the physical states,
\begin{align}
&\E\left[
    \sup_{0\leq t\leq T}
    \norm{X_t^{K,a}-X_t^a}
\right]
\leq
D_{X,R}^{K,a}(T)
+
\E\left[
\left(
    \sup_{0\leq t\leq T}\norm{X_t^{K,a}}
    +
    \sup_{0\leq t\leq T}\norm{X_t^a}
\right)
\mathbf 1_{\{\tau_R^{K,a}<T\}}
\right].
\label{eq:state_remove_localization}
\end{align}
By Hölder's inequality,
\begin{align*}
&
\E\left[
\left(
    \sup_{0\leq t\leq T}\norm{X_t^{K,a}}
    +
    \sup_{0\leq t\leq T}\norm{X_t^a}
\right)
\mathbf 1_{\{\tau_R^{K,a}<T\}}
\right]
\\
&\quad\leq
\left\|
    \sup_{0\leq t\leq T}\norm{X_t^{K,a}}
    +
    \sup_{0\leq t\leq T}\norm{X_t^a}
\right\|_{L^p}
\Prob(\tau_R^{K,a}<T)^{1-\frac1p}.
\end{align*}
Using the uniform moment bounds and
\eqref{eq:refined_localization_probability}, we obtain
\begin{equation}
\label{eq:state_refined_tail}
\sup_{K\geq1}\sup_{a\in\mathcal A}
\E\left[
\left(
    \sup_{0\leq t\leq T}\norm{X_t^{K,a}}
    +
    \sup_{0\leq t\leq T}\norm{X_t^a}
\right)
\mathbf 1_{\{\tau_R^{K,a}<T\}}
\right]
\leq
\frac{C_{p,T}}{R^{p-1}}.
\end{equation} Similarly,
\begin{align}
&\E\int_0^T
\norm{\lambda_t^{K,a}-\lambda_t^a}_1\,\dd t \leq
D_{\lambda,R}^{K,a}(T)
+
\E\left[
\left(
    \int_0^T
    \bigl(
        \norm{\lambda_t^{K,a}}_1
        +
        \norm{\lambda_t^a}_1
    \bigr)\,\dd t
\right)
\mathbf 1_{\{\tau_R^{K,a}<T\}}
\right].
\label{eq:intensity_remove_localization}
\end{align}
Hölder's inequality, the moment assumptions, and
\eqref{eq:refined_localization_probability} give
\begin{equation}
\label{eq:intensity_refined_tail}
\sup_{K\geq1}\sup_{a\in\mathcal A}
\E\left[
\left(
    \int_0^T
    \bigl(
        \norm{\lambda_t^{K,a}}_1
        +
        \norm{\lambda_t^a}_1
    \bigr)\,\dd t
\right)
\mathbf 1_{\{\tau_R^{K,a}<T\}}
\right]
\leq
\frac{C_{p,T}}{R^{p-1}}.
\end{equation} Combining
\eqref{eq:fixed_R_refined_convergence},
\eqref{eq:state_remove_localization},
\eqref{eq:state_refined_tail},
\eqref{eq:intensity_remove_localization}, and
\eqref{eq:intensity_refined_tail}, we obtain
\[
\begin{aligned}
\limsup_{K\to\infty}
\sup_{a\in\mathcal A}
\Bigg\{
&
\E\left[
    \sup_{0\leq t\leq T}
    \norm{X_t^{K,a}-X_t^a}
\right]+
\E\int_0^T
\norm{\lambda_t^{K,a}-\lambda_t^a}_1\,\dd t
\Bigg\}
\leq
\frac{C_{p,T}}{R^{p-1}}.
\end{aligned}
\]
Letting $R\to\infty$ proves
\eqref{eq:process_convergence}.

\paragraph{Step 4: convergence of the counting processes.}

For each component $i$, the common Poisson embedding gives the
pathwise total-variation bound
\[
\sup_{0\leq t\leq T}
\abs{N_t^{K,a,i}-N_t^{a,i}}
\leq
\Xi_T^{K,a,i}.
\]
By \eqref{eq:discrepancy_intensity},
\[
\begin{aligned}
\E\left[
    \sum_{i=1}^m
    \sup_{0\leq t\leq T}
    \abs{N_t^{K,a,i}-N_t^{a,i}}
\right]
\leq
\E\left[
    \sum_{i=1}^m\Xi_T^{K,a,i}
\right]=
\E\int_0^T
\norm{\lambda_t^{K,a}-\lambda_t^a}_1\,\dd t.
\end{aligned}
\]
Taking the supremum over $a\in\mathcal A$ and using
\eqref{eq:process_convergence} proves
\eqref{eq:counting_process_convergence}.

\paragraph{Step 5: objective and value convergence.}

For every $a\in\mathcal A$, define
\[
    \Delta_K^a
    :=
    \sup_{0\le t\le T}
    \norm{X_t^{K,a}-X_t^a},
    \qquad
    U_K^a
    :=
    \max\left\{
        \sup_{0\le t\le T}\norm{X_t^{K,a}},
        \sup_{0\le t\le T}\norm{X_t^a}
    \right\}.
\]
By the weighted Lipschitz condition on $c$ and $g$,
\begin{align}
    \abs{J_K(a)-J(a)}
    &\le
    C
    \E\left[
        \int_0^T
        \left(
            1+\norm{X_t^{K,a}}^q+\norm{X_t^a}^q
        \right)
        \norm{X_t^{K,a}-X_t^a}
        \,\dd t
    \right]
    \notag\\
    &\quad
    +
    C
    \E\left[
        \left(
            1+\norm{X_T^{K,a}}^q+\norm{X_T^a}^q
        \right)
        \norm{X_T^{K,a}-X_T^a}
    \right]
    \notag\\
    &\le
    C_T
    \E\left[
        \left(
            1+(U_K^a)^q
        \right)
        \Delta_K^a
    \right].
    \label{eq:cost_error_interpolation}
\end{align} If $q=0$, then \eqref{eq:process_convergence} directly gives
\[
    \sup_{a\in\mathcal A}
    \abs{J_K(a)-J(a)}
    \le
    C_T
    \sup_{a\in\mathcal A}
    \E[\Delta_K^a]
    \longrightarrow0.
\] Suppose now that $q>0$, and set
$    s:=\frac{p_\star}{q},$ and $
    r:=\frac{p_\star}{p_\star-q}.$
Then $r,s>1$ and $
    \frac1r+\frac1s=1.$
Moreover, since $
    p_\star>q+1$ then
    $p_\star-q>1$ and so $
    r<p_\star.$ First note that Assumptions~\ref{ass:standing}(S4) and
\ref{ass:approx_uniform_moments} imply
\begin{equation}
\label{eq:uniform-U-Delta-moments}
    \sup_{K\ge1}\sup_{a\in\mathcal A}
    \E\left[
        (U_K^a)^{p_\star}
        +
        (\Delta_K^a)^{p_\star}
    \right]
    <\infty.
\end{equation}
Indeed,
\[
    (U_K^a)^{p_\star}
    \le
    \sup_{0\le t\le T}\norm{X_t^{K,a}}^{p_\star}
    +
    \sup_{0\le t\le T}\norm{X_t^a}^{p_\star},
\]
and
\[
    (\Delta_K^a)^{p_\star}
    \le
    2^{p_\star-1}
    \left(
        \sup_{0\le t\le T}\norm{X_t^{K,a}}^{p_\star}
        +
        \sup_{0\le t\le T}\norm{X_t^a}^{p_\star}
    \right).
\]  Define
\[
    \vartheta
    :=
    \frac{p_\star-q-1}{p_\star-1}.
\]
Since $q>0$ and $p_\star>q+1$, we have
\[
    0<\vartheta<1,
    \qquad
    1-\vartheta
    =
    \frac{q}{p_\star-1}.
\]
Furthermore,
\begin{align*}
    \vartheta+\frac{1-\vartheta}{p_\star}
    &=
    \frac{p_\star-q-1}{p_\star-1}
    +
    \frac{q}{p_\star(p_\star-1)}\\
    &=
    \frac{p_\star-q}{p_\star}
    =
    \frac1r.
\end{align*} Define the conjugate Hölder exponents
\[
    \kappa_1
    :=
    \frac{1}{r\vartheta},
    \qquad
    \kappa_2
    :=
    \frac{p_\star}{r(1-\vartheta)}.
\]
Then we have
\[
    \frac1{\kappa_1}+\frac1{\kappa_2}
    =
    r\vartheta
    +
    \frac{r(1-\vartheta)}{p_\star}
    =
    1,
\]
and therefore $\kappa_1,\kappa_2>1$. Applying Hölder's
inequality yields
\begin{align*}
    \E\left[(\Delta_K^a)^r\right]
    &=
    \E\left[
        (\Delta_K^a)^{r\vartheta}
        (\Delta_K^a)^{r(1-\vartheta)}
    \right]\\
    &\le
    \E\left[
        (\Delta_K^a)^{r\vartheta\kappa_1}
    \right]^{1/\kappa_1}
    \E\left[
        (\Delta_K^a)^{r(1-\vartheta)\kappa_2}
    \right]^{1/\kappa_2}\\
    &=
    \E[\Delta_K^a]^{r\vartheta}
    \E\left[
        (\Delta_K^a)^{p_\star}
    \right]^{r(1-\vartheta)/p_\star}.
\end{align*}
Taking the $r$-th root gives the inequality
\begin{equation}
\label{eq:explicit-interpolation}
    \E\left[(\Delta_K^a)^r\right]^{1/r}
    \le
    \E[\Delta_K^a]^\vartheta
    \E\left[
        (\Delta_K^a)^{p_\star}
    \right]^{(1-\vartheta)/p_\star}.
\end{equation}
Equivalently,
\[
    \norm{\Delta_K^a}_{L^r}
    \le
    \norm{\Delta_K^a}_{L^1}^{\vartheta}
    \norm{\Delta_K^a}_{L^{p_\star}}^{1-\vartheta}.
\] Taking the supremum over $a\in\mathcal A$ in
\eqref{eq:explicit-interpolation}, we obtain
\begin{align*}
    \sup_{a\in\mathcal A}
    \norm{\Delta_K^a}_{L^r}
    \le
    \left(
        \sup_{a\in\mathcal A}
        \E[\Delta_K^a]
    \right)^\vartheta \times
    \left(
        \sup_{a\in\mathcal A}
        \E\left[
            (\Delta_K^a)^{p_\star}
        \right]
    \right)^{(1-\vartheta)/p_\star}.
\end{align*}
The first factor converges to zero by
\eqref{eq:process_convergence}, while the second factor is uniformly
bounded by \eqref{eq:uniform-U-Delta-moments}. Hence
\begin{equation}
\label{eq:delta_interpolation}
    \sup_{a\in\mathcal A}
    \norm{\Delta_K^a}_{L^r}
    \longrightarrow0.
\end{equation}
We next estimate the other factor in
\eqref{eq:cost_error_interpolation}. Since $qs=p_\star$,
\begin{align*}
    \norm{1+(U_K^a)^q}_{L^s}^s
    &=
    \E\left[
        \left(
            1+(U_K^a)^q
        \right)^s
    \right]\\
    &\le
    2^{s-1}
    \left(
        1+\E\left[(U_K^a)^{qs}\right]
    \right)\\
    &=
    2^{s-1}
    \left(
        1+\E\left[(U_K^a)^{p_\star}\right]
    \right).
\end{align*}
Therefore, by \eqref{eq:uniform-U-Delta-moments},
\begin{equation}
\label{eq:U-weight-bound}
    \sup_{K\ge1}\sup_{a\in\mathcal A}
    \norm{1+(U_K^a)^q}_{L^s}
    <\infty.
\end{equation} Applying Hölder's inequality with the conjugate exponents $r$ and
$s$ in \eqref{eq:cost_error_interpolation}, and using
\eqref{eq:delta_interpolation} and \eqref{eq:U-weight-bound}, gives
\begin{align*}
    \sup_{a\in\mathcal A}
    \abs{J_K(a)-J(a)}
    &\le
    C_T
    \sup_{a\in\mathcal A}
    \norm{1+(U_K^a)^q}_{L^s}
    \sup_{a\in\mathcal A}
    \norm{\Delta_K^a}_{L^r}\\
    &\longrightarrow0.
\end{align*}
Thus, $\sup_{a\in\mathcal A}
    \abs{J_K(a)-J(a)}
    \longrightarrow0.$ Finally, define
$ \varepsilon_K
    :=
    \sup_{a\in\mathcal A}
    \abs{J_K(a)-J(a)}.
$
For every $a\in\mathcal A$,
\[
    J(a)-\varepsilon_K
    \le
    J_K(a)
    \le
    J(a)+\varepsilon_K.
\]
Taking the infimum over $a\in\mathcal A$ yields
\[
    V_0-\varepsilon_K
    \le
    V_0^K
    \le
    V_0+\varepsilon_K.
\]
Consequently,
\[
    \abs{V_0^K-V_0}
    \le
    \varepsilon_K
    =
    \sup_{a\in\mathcal A}
    \abs{J_K(a)-J(a)}
    \longrightarrow0.
\]
\end{proof}

}

\subsection{Online Markov-state update}\label{sec:onlinemarkov}

We now describe how the finite-dimensional Markov state can be
constructed online from observable data, in preparation for the
reinforcement-learning method developed in the next section. The decay
scale $\beta>0$ and the number of filters $K$ are user-chosen design
parameters, while $m$ is the number of observed event types. At each
decision time, the learner only needs the current physical state and the
times and component labels of the events observed since the preceding
decision time. These observations determine the exponential filters
through the recursion
\[
    Z^k_{t_{n+1}}
    =
    e^{-\beta k\Delta_n}Z^k_{t_n}
    +
    \sum_{(\tau_\ell,j_\ell)\in\mathcal E_n}
    e^{-\beta k(t_{n+1}-\tau_\ell)}e_{j_\ell},
    \qquad k=1,\ldots,K.
\]
Thus, the lifted state can be updated without knowing the kernel
coefficients $Q_k(\cdot)$, the baseline $\mu$, the intensity
$\lambda$, the original kernel $\Phi$, or the state-dynamics
coefficients $b,\sigma,\gamma$. The effect of the applied action on
the unknown dynamics is learned from the observed transitions and
rewards rather than computed from an explicit model.
Algorithm~\ref{alg:state_update} summarizes the resulting update.

\begin{algorithm}[H]
\caption{Online Markov-state update for current-action readout}
\label{alg:state_update}
\begin{algorithmic}[1]
\Require Decay scale $\beta>0$; number of filters $K$; Hawkes dimension $m$; grid $0=t_0<t_1<\cdots<t_N=T$.
\Require Observed state $X_{t_n}$ and event times/component labels $(\tau_\ell,j_\ell)$, where $j_\ell\in\{1,\dots,m\}$.
\Ensure Markov state $S_{t_n}^K=(t_n,X_{t_n},Z_{t_n}^1,\dots,Z_{t_n}^K)$.
\State Initialize $Z_0^k\gets 0\in\R^m$ for $k=1,\dots,K$.
\For{$n=0,1,\dots,N-1$}
    \State Form $S_{t_n}^K\gets (t_n,X_{t_n},Z_{t_n}^1,\dots,Z_{t_n}^K)$.
    \State Choose action $a_n$ using the actor, for example $a_n=\pi_\theta(S_{t_n}^K)+\varepsilon_n$.
    \State Apply $a_n$ on $[t_n,t_{n+1})$.
    \State Observe reward sample $R_n$, next state $X_{t_{n+1}}$, and events
    \[
        \mathcal E_n:=\{(\tau_\ell,j_\ell):t_n<\tau_\ell\le t_{n+1}\}.
    \]
    \State Set $\Delta_n\gets t_{n+1}-t_n$.
    \For{$k=1,\dots,K$}
        \State $Z_{t_{n+1}}^k\gets e^{-\beta k\Delta_n}Z_{t_n}^k$.
        \ForAll{$(\tau_\ell,j_\ell)\in\mathcal E_n$}
            \State $Z_{t_{n+1}}^k\gets Z_{t_{n+1}}^k+e^{-\beta k(t_{n+1}-\tau_\ell)}e_{j_\ell}$.
        \EndFor
    \EndFor
    \State Form $S_{t_{n+1}}^K\gets(t_{n+1},X_{t_{n+1}},Z_{t_{n+1}}^1,\dots,Z_{t_{n+1}}^K)$.
    \State Store $(S_{t_n}^K,a_n,R_n,S_{t_{n+1}}^K,\Delta_n)$ in the replay buffer.
\EndFor
\end{algorithmic}
\end{algorithm}

\begin{remark}[Binned observations]\label{rem:binned}
If exact event times inside $[t_n,t_{n+1}]$ are unavailable and only counts $\Delta N_n=N_{t_{n+1}}-N_{t_n}$ are observed, one may use the endpoint approximation
\[
    Z_{t_{n+1}}^k\approx e^{-\beta k\Delta_n}Z_{t_n}^k+\Delta N_n.
\]
The event-exact update in Algorithm~\ref{alg:state_update} is preferable whenever timestamps are available.
\end{remark}

\section{Controlled Hawkes and CT-DDPG}
\label{sec:ctddpg}
\subsection{Hawkes-Markov Decision Process and value function}
Throughout this section we fix an integer $K$. We consider a (time augmented) Markov Decision Process with state space given at any time $t$ by
\[
    Y_t^K=(t,X_t^{K,a},\mathbf Z^K_t)
    \in
\mathcal Y_K:=[0,T]\times \R^{d_x}\times(\R^m)^K,
\]
where $\mathbf Z^K_t:=Z_t^{K,1},\dots,Z_t^{K,K}$,
the random variable $X^K_t$ represents the observed value of $X^K$ solving \eqref{eq:Markov-poisson-embedded-hawkes-sde} at time $t$ while $Z^{K,i}$ are derived from the observation of the event time, see Section \ref{sec:onlinemarkov} and Remark \ref{rem:binned}. For the sake of simplicity, we remove the superscript $K$ in this section to alleviate the notations. 

We equip $(\mathbb R^m)^K$ with the product norm
\[
    \norm{\mathbf z^K}_{\mathcal Z_K}
    :=
    \sum_{k=1}^K\norm{z^k}_1.
\]
For each $i=1,\ldots,m$, define
\[
    \mathbf e_i^K
    :=
    (e_i,\ldots,e_i)
    \in(\mathbb R^m)^K,
\]
so that, at an event of type $i$, the memory vector changes from
$\mathbf z^K$ to $\mathbf z^K+\mathbf e_i^K$.

\begin{definition}[Deterministic policy and neural network parametrization]
    \label{def:policy}
    We denote by $\pi_\zeta:\mathcal Y_K\longrightarrow A$ a deterministic Markov policy as a neural network parametrized by the weight vector $\zeta\in \mathcal Z\subset \mathbb R^p$ for some $p\geq 1$. We denote by $\mathcal U$ the set of admissible policies $\pi^\zeta$  that Assumption \ref{ass:approx_uniform_moments} is satisfied by choosing $a=\pi^\zeta$ and 
    there exists a locally bounded function $\varpi_1 : [0, \infty) \to [0, \infty)$ such that for all $\zeta \in \mathcal Z$, $t \in [0, T]$, and $(t,x,\mathbf z^K),(t',x',{\mathbf z^K}') \in [0,T]\times\R^{d_x}\times(\R^m)^K$,
\[
    |\pi_\zeta(t,x,\mathbf z^K) - \pi_\zeta(t',x',{\mathbf z^K}')| \le \varpi_1(|\zeta|)\left(|t-t'|+\norm{x-x'}+\norm{\mathbf z-{\mathbf z^K}'}_{\mathcal Z_K}\right)\] and $|\pi_\zeta(t, 0,0)| \le \varpi_1(|\zeta|)$.

\end{definition}

We denote by  $\rho\ge0$ the reward discount rate. The fixed policy objective is given for any $t\in [0,T],\; \mathbf z^K=(z_1,\dots,z_K)\in (\mathbb R^m)^K$ by
\[
    J^\zeta_K(t,x,\mathbf z^K)
    :=
    \E_{t,x,\mathbf z^K;\pi^\zeta}\left[
        \int_t^T e^{-\rho (s-t)}c(s,X_s^K,\pi_\zeta(Y_s^K))\,\dd s
        +e^{-\rho (T-t)}g(X_T^K)
    \right].
\]
 The value function at time $t$ starting at the point $x,z$ is thus given by

\[V_K(t,x,\mathbf z^K)=\inf_{\pi^\zeta\in\mathcal U} J_K^\zeta(t,x,\mathbf z^K).\]
We recall the infinitesimal operator defined above by using the policy $\pi^\zeta$,
\begin{align*}
\label{eq:generator}
    \mathcal L_K^{\pi^\zeta(t,x,z)} f(t,x,z)
    &=
    \partial_t f(t,x,z)
    +
    b(t,x,\pi^\zeta(t,x,z))\cdot \nabla_x f(t,x,z)
    +
    \frac12\Tr\left[\sigma\sigma^\top(t,x,\pi^\zeta(t,x,z))\nabla_{xx}^2 f(t,x,z)\right]
    \nonumber\\
    &\quad
    -
    \sum_{k=1}^K \beta k\, z^k\cdot \nabla_{z^k}f(t,x,z)
    \nonumber\\
    &\quad
    +
    \sum_{i=1}^m
    \lambda_i^K(t,x,z,\pi^\zeta(t,x,z))
    \Big[
        f(t,x+\gamma_i(t,x,\pi^\zeta(t,x,z)),z^1+e_i,\dots,z^K+e_i)
        -f(t,x,z)
    \Big].
\end{align*}

\begin{theorem}
\label{thm:fixed_policy_bellman}
Fix $K\ge1$ and a deterministic Markov policy
$ \pi^\zeta\in\mathcal U$.
Suppose that
\[
    J_K^\zeta
    \in
    \mathcal C^{1,2,1}
    \left(
        [0,T]\times
        \mathbb R^{d_x}\times(\mathbb R^m)^K
    \right)
\]
where $\mathcal C^{1,2,1}$ means once continuously differentiable in time,
twice continuously differentiable in $x$, and once continuously
differentiable in each memory variable $\mathbf z^k$. Assume moreover that there exist $C>0$ and $\ell \in [1,p^*-2]$ such that,

\begin{align}
    &\norm{J_K^\zeta(u,x,\mathbf z^K)}
    +
    \norm{
        \partial_t
        J_K^\zeta(u,x,\mathbf z^K)
    }
    +
    \norm{\nabla_x J_K^\zeta(u,x,\mathbf z^K)} + \norm{\nabla_{\mathbf z^K} J_K^\zeta(u,x,\mathbf z^K)} \notag \\
    &+ \norm{\nabla_{xx} J_K^\zeta(u,x,\mathbf z^K)}\le
    C
    \left(
        1+\norm{x}^{\ell}
        +\norm{\mathbf z^K}_{\mathcal Z_K}^{\ell}
    \right),
    \label{eq:bellman_growth_condition}
\end{align}
 Then $J_K^\zeta$ satisfies
\begin{align}
    &
    \mathcal L_K^{
        \pi^\zeta(t,x,\mathbf z^K)
    }
    J_K^\zeta(t,x,\mathbf z^K)
    -
    \rho J_K^\zeta(t,x,\mathbf z^K)
    +
    c\left(
        t,x,
        \pi^\zeta(t,x,\mathbf z^K)
    \right)
    =
    0,
    \label{eq:fixed_policy_bellman_bold}
\end{align}
for $(t,x,\mathbf z^K)
    \in
    [0,T)\times
    \mathbb R^{d_x}\times(\mathbb R^m)^K,$
with terminal condition
\begin{equation}
\label{eq:fixed_policy_terminal_bold}
    J_K^\zeta(T,x,\mathbf z^K)
    =
    g(x).
\end{equation} Conversely, any function $v$ satisfying the same smoothness,
growth, and integrability conditions and solving
\eqref{eq:fixed_policy_bellman_bold}--%
\eqref{eq:fixed_policy_terminal_bold}
coincides with $J_K^\zeta$.
\end{theorem}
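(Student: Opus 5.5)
We combine the Markov property of the lifted state $Y_t=(t,X_t^{K},\mathbf Z_t^K)$ under the feedback control $a_t=\pi^\zeta(Y_{t-})$ with Itô's formula for jump-diffusions applied to the discounted process $e^{-\rho(s-t)}J_K^\zeta(s,X_s,\mathbf Z_s)$.

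\textbf{Direct part.} Fix $(t,x,\mathbf z^K)$ and $h\in(0,T-t]$. The Markov property and the tower property give the dynamic programming identity
\[
J_K^\zeta(t,x,\mathbf z^K)=\E_{t,x,\mathbf z^K;\pi^\zeta}\Big[\int_t^{t+h}e^{-\rho(s-t)}c(s,X_s,\pi_\zeta(Y_s))\,\dd s+e^{-\rho h}J_K^\zeta(t+h,X_{t+h},\mathbf Z_{t+h})\Big].
\]
Since $J_K^\zeta\in\mathcal C^{1,2,1}$, Itô's formula applies. The $x$-part is driven by $b,\sigma$ and the jumps $\gamma_i\,\dd N^i$. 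The $\mathbf z$-part follows the linear drift $-\beta k z^k$ and the jumps $+\mathbf e_i^K$ from \eqref{eq:memory-factor-ode}. We then compensate each $\Pi^i$ by $\dd s\,\dd\theta$, noting that $\int_0^\infty\mathbf 1_{\{\theta\le\lambda^i_s\}}\dd\theta=\lambda_i^K(s,X_{s-},\mathbf Z_{s-},\pi_\zeta(Y_{s-}))$. This yields
\[
e^{-\rho h}J_K^\zeta(t+h,\cdot)-J_K^\zeta(t,\cdot)=\int_t^{t+h}e^{-\rho(s-t)}\big(\mathcal L_K^{\pi^\zeta}J_K^\zeta-\rho J_K^\zeta\big)(Y_s)\,\dd s+\mathcal M_{t+h}-\mathcal M_t,
\]
where $\mathcal M$ collects a $\dd W$ integral and a $\widetilde\Pi$ integral.

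\textbf{True martingale: the main obstacle.} We must show that $\mathcal M$ is a true martingale, not merely a local one. For the Brownian term we use the growth bound \eqref{eq:bellman_growth_condition} on $\nabla_x J_K^\zeta$, the linear growth of $\sigma$, and $\norm{\mathbf Z_s}_{\mathcal Z_K}\le K\norm{N_T}_1$. The quadratic variation is then bounded by $C(1+\sup\norm{X}^{2\ell+2}+\norm{N_T}_1^{2\ell})$. For the jump term, the integrand $J(\cdot+\text{jump})-J(\cdot)$ is polynomially bounded of degree $\ell$, uniformly in $\sup\norm{X}$ and $\norm{N_T}_1$. It therefore suffices that $\E\int\lambda_s\,(1+\sup\norm{X}^\ell+\norm{N_T}^\ell)\,\dd s<\infty$. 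By Hölder's inequality and the moments of order $p_\star$ in Assumption~\ref{ass:approx_uniform_moments}, this holds because $\ell+1\le p_\star-1<p_\star$. If the BDG exponent is too tight, we fall back on localization: take stopping times $\tau_n$ at which $\norm{X}$ or $\norm{N}$ exceeds $n$, stop the identity there, and pass $n\to\infty$ by dominated convergence using the same polynomial bounds and $p_\star$-moments. This is exactly where the restriction $\ell\le p_\star-2$ enters.

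\textbf{Deriving the PDE.} Taking expectations and using the dynamic programming identity, we get
\[
0=\E\int_t^{t+h}e^{-\rho(s-t)}\big[(\mathcal L_K^{\pi^\zeta}J_K^\zeta-\rho J_K^\zeta)(Y_s)+c(s,X_s,\pi_\zeta(Y_s))\big]\dd s.
\]
We divide by $h$ and let $h\downarrow0$. The integrand is continuous in the state, because $b,\sigma,\gamma,\mu,Q_k,c$ are continuous in $(t,a)$ and Lipschitz in $x$, and $\pi_\zeta$ is Lipschitz. Paths are right-continuous at $t$, the probability of a jump in $[t,t+h]$ is $O(h)$, and the polynomial bounds give uniform integrability. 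Hence the limit is the integrand evaluated at $(t,x,\mathbf z^K)$, which gives \eqref{eq:fixed_policy_bellman_bold}. The terminal condition \eqref{eq:fixed_policy_terminal_bold} is immediate from the definition of $J_K^\zeta$ at $t=T$.

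\textbf{Converse.} Let $v$ satisfy the stated smoothness and growth conditions and solve \eqref{eq:fixed_policy_bellman_bold}--\eqref{eq:fixed_policy_terminal_bold}. Apply the same Itô expansion to $e^{-\rho(s-t)}v(Y_s)$ on $[t,T]$. The martingale argument above shows the stochastic integrals have zero expectation. The PDE replaces the $\dd s$ integrand by $-e^{-\rho(s-t)}c$, and the terminal condition replaces $v(Y_T)$ by $g(X_T)$. Taking expectations then gives $v(t,x,\mathbf z^K)=J_K^\zeta(t,x,\mathbf z^K)$.
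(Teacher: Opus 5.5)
Your proposal is correct and takes essentially the same route as the paper: the dynamic programming identity from the Markov property of the lifted state, the jump--diffusion It\^o formula for $e^{-\rho(s-t)}J_K^\zeta(Y_s)$, elimination of the stochastic integrals, a small-time limit to obtain the Bellman PDE, and the same It\^o argument with the terminal condition for the converse. The only difference is minor. You check the true-martingale property directly from the polynomial growth and the $p_\star$-moments, where the degree-$(\ell+2)$ term $\sigma\sigma^\top\nabla_{xx}^2 J_K^\zeta$ is what forces $\ell\le p_\star-2$; the paper instead localizes with $\tau_n$ and concludes by dominated convergence.
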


\begin{remark}\label{rem:LK}
   Note that \eqref{eq:bellman_growth_condition} together with Assumptions \ref{ass:standing}(S1)-(S2) and \ref{ass:kernel_approx_strong} there exists a constant $\tilde C>0$ such that $|\mathcal L_K^{
        \pi^\zeta(t,x,\mathbf z^K)
    }
    J_K^\zeta(t,x,\mathbf z^K)|\leq \tilde C(1+|x|^{p_\star}+\|\mathbf z^K\|^{p_\star}).$
\end{remark}

\begin{proof}[Proof of Theorem~\ref{thm:fixed_policy_bellman}]
Fix
$ (t,x,\mathbf z^K)
    \in
    [0,T]\times
    \mathbb R^{d_x}\times(\mathbb R^m)^K.$ For every $s\in[t,T]$, we have 
\begin{align}
    J_K^\zeta(t,x,\mathbf z^K)
    &=
    \E_{t,x,\mathbf z^K;\pi^\zeta}
    \Bigg[
        \int_t^s
        e^{-\rho(u-t)}
        c\left(
            Y_u^K,\pi^\zeta(Y_u^K)
        \right)
        \,\dd u
        +
        e^{-\rho(s-t)}
        J_K^\zeta
        \left(
           Y_s^K
        \right)
    \Bigg].
    \label{eq:fixed_policy_dpp_bold}
\end{align}
For $n\ge1$, define the local time
\[
    \tau_n
    :=
    \inf\left\{
        u\in[t,T]:
        \norm{X_u^K}
        +
        \norm{\mathbf Z_u^K}_{\mathcal Z_K}
        \ge n
    \right\}
    \wedge T.
\]
From the definition of $\mathcal U$, and by Assumption \ref{ass:approx_uniform_moments} we have
\[
    \sup_{t\le u\le T}
    \left(
        \norm{X_u^K}
        +
        \norm{\mathbf Z_u^K}_{\mathcal Z_K}
    \right)
    <\infty
    \qquad\mathbb P-\text{a.s.}
\]
Consequently, $\lim_n\tau_n=T$.
By the jump-diffusion It\^o formula on
$[t,s\wedge\tau_n]$ we get
we obtain
\begin{align*}
    &
    e^{-\rho(s\wedge\tau_n-t)}
    J_K^\zeta
    \left(
        s\wedge\tau_n,
        X_{s\wedge\tau_n}^K,
        \mathbf Z_{s\wedge\tau_n}^K
    \right)
    -
    J_K^\zeta(t,x,\mathbf z^K)+\int_t^{s\wedge\tau_n}
        e^{-\rho(u-t)}
        c\left(
            Y_u^K,\pi^\zeta(Y_u^K)
        \right)
        \,\dd u
    \notag\\
    &=
    \int_t^{s\wedge\tau_n}
    e^{-\rho(u-t)}
    \Bigg[
        \mathcal L_K^{a_u^\zeta}
        J_K^\zeta
        \left(
            u,X_{u-}^K,\mathbf Z_{u-}^K
        \right)
        -
        \rho
        J_K^\zeta
        \left(
            u,X_{u-}^K,\mathbf Z_{u-}^K
        \right) +
        c\left(
            Y_u^K,\pi^\zeta(Y_u^K)
        \right)
    \Bigg] \,\dd u\notag \\
    &\quad +
    \mathcal M_{s\wedge\tau_n}^{K},
\end{align*}
where
\begin{align*}
    &\mathcal M_{s\wedge\tau_n}^{K}
    :=
    \int_t^{s\wedge\tau_n}
    e^{-\rho(u-t)}
    \nabla_xJ_K^\zeta
    \left(
        u,X_{u-}^K,\mathbf Z_{u-}^K
    \right)^\top
    \sigma
    \left(
        u,X_{u-}^K,\pi^\zeta(Y_u^K)
    \right)
    \,\dd W_u
    \notag\\
    &
    +
    \sum_{i=1}^m
    \int_t^{s\wedge\tau_n}
    \!e^{-\rho(u-t)}
    \Bigg[
        J_K^\zeta
        \Big(
            u,
            X_{u-}^K
            +
            \gamma_i
            \left(
                u,X_{u-}^K,\pi^\zeta(Y_u^K)
            \right),
            \mathbf Z_{u-}^K\!+\!\mathbf e_i^K
        \Big)
        -
        J_K^\zeta
        \left(
           Y_u^K
        \right)
    \Bigg]
    \,\dd\widetilde M_u^{K,i},
\end{align*} 
with 
\begin{align*}
    \widetilde M_u^{K,i}
    &:=
    N_u^{K,i}-N_t^{K,i}
    -
    \int_t^u
    \lambda_i^K
    \left(
        s,X_{s-}^K,\mathbf Z_{s-}^K,\pi^\zeta(s,X_s^K,\mathbf Z_{s-}^K)
    \right)
    \,\dd s.
\end{align*}

Then, taking the expectation under condition \eqref{eq:bellman_growth_condition} together with \eqref{eq:fixed_policy_dpp_bold} we get for any $s\in [t,T]$

\[\mathbb E\left[\int_t^{s\wedge\tau_n}
    e^{-\rho(u-t)}
    \Bigg[
        \mathcal L_K^{a_u^\zeta}
        J_K^\zeta
        \left(
            u,X_{u-}^K,\mathbf Z_{u-}^K
        \right)
        -
        \rho
        J_K^\zeta
        \left(
            u,X_{u-}^K,\mathbf Z_{u-}^K
        \right) +
        c\left(
            Y_u^K,\pi^\zeta(Y_u^K)
        \right)
    \Bigg] \,\dd u\right]=0.\] 

By the dominated convergence theorem using Remark \ref{rem:LK} and Assumption \ref{ass:reward} we deduce that 

\[\mathcal L_K^{
        \pi^\zeta(t,x,\mathbf z^K)
    }
    J_K^\zeta(t,x,\mathbf z^K)
    -
    \rho J_K^\zeta(t,x,\mathbf z^K)
    +
    c\left(
        t,x,
        \pi^\zeta(t,x,\mathbf z^K)
    \right)
    =
    0.\]

Conversely, let a function $v$ satisfying 
\eqref{eq:fixed_policy_bellman_bold}--%
\eqref{eq:fixed_policy_terminal_bold}
and the same growth and integrability conditions as in \eqref{eq:bellman_growth_condition}. Applying
It\^o's formula to
\[
    e^{-\rho(u-t)}
    v\left(
        u,X_u^K,\mathbf Z_u^K
    \right)
\]
on $[t,\tau_n]$, and taking
expectations we get
\begin{align}
    v(t,x,\mathbf z^K)
    &=
    \E_{t,x,\mathbf z^K;\pi^\zeta}
    \Bigg[
        \int_t^{\tau_n}
        e^{-\rho(u-t)}
        c\left(
            u,X_u^K,a_u^\zeta
        \right)
        \,\dd u
        +
        e^{-\rho(\tau_n-t)}
        v\left(
            \tau_n,
            X_{\tau_n}^K,
            \mathbf Z_{\tau_n}^K
        \right)
    \Bigg].
    \label{eq:verification_stopped_bold}
\end{align} With $\tau_n\uparrow T$, using the polynomial-growth and the dominated convergence we get
\begin{align*}
    &\lim_{n\to\infty}
    \E
    \int_t^{\tau_n}
    e^{-\rho(u-t)}
    c\left(
        u,X_u^K,\pi^\zeta(Y_u^K)
    \right)
    \,\dd u
    =
    \E
    \int_t^T
    e^{-\rho(u-t)}
    c\left(
        u,X_u^K,\pi^\zeta(Y_u^K)
    \right)
    \,\dd u,
\end{align*}
and
\begin{align*}
    \lim_{n\to\infty}
    \E
    \left[
        e^{-\rho(\tau_n-t)}
        v\left(
            \tau_n,
            X_{\tau_n}^K,
            \mathbf Z_{\tau_n}^K
        \right)
    \right]
    &=
    \E
    \left[
        e^{-\rho(T-t)}
        v\left(
            T,X_T^K,\mathbf Z_T^K
        \right)
    \right]
    \\
    &
    =
    \E
    \left[
        e^{-\rho(T-t)}
        g\left(
            X_T^K
        \right)
    \right].
\end{align*}
Letting $n\to\infty$ in
\eqref{eq:verification_stopped_bold} therefore yields
$v(t,x,\mathbf z^K)
    =
    J_K^\zeta(t,x,\mathbf z^K).$
\end{proof}

\subsection{Advantage rate, performance and critic-advantage update}

\begin{definition}[Advantage rate]
\label{def:advantage}
The advantage-rate function associated with policy $\pi^\zeta$ for any action $a$ is given by
\[
    A_K^\zeta(t,x,\mathbf z^K,a)
    :=
    \mathcal L_K^a J_K^\zeta(t,x,\mathbf z^K)
    -\rho J_K^\zeta(t,x,\mathbf z^K)
    +c(t,x,a).
\]

\end{definition}

\begin{theorem}[Martingale characterization of the value and advantage rate]
\label{thm:martingale}
Fix $K\geq1$, $\zeta\in\mathcal Z$, and
$\pi^\zeta\in\mathcal U$. Let
    $\mathcal V
    \in
    \mathcal C^{1,2,1}
    \left(
        [0,T]\times
        \mathbb R^{d_x}\times(\mathbb R^m)^K
    \right)$ satisfy the same polynomial-growth and integrability conditions as in
Theorem~\ref{thm:fixed_policy_bellman}. Let $\mathcal Q:
    [0,T]\times\mathbb R^{d_x}
    \times(\mathbb R^m)^K\times A
    \longrightarrow\mathbb R$
be continuous with polynomial growth in $x,\mathbf z^K$ of order $p_\star$. 
Suppose further that
\begin{equation}
\label{eq:martingale_terminal_normalization}
    \mathcal V(T,x,\mathbf z^K)=g(x),
    \qquad
    \mathcal Q
    \left(
        t,x,\mathbf z^K,
        \pi^\zeta(t,x,\mathbf z^K)
    \right)
    =0.
\end{equation} For each $(t,x,\mathbf z^K)
    \in
    [0,T)\times
    \mathbb R^{d_x}\times(\mathbb R^m)^K,$
let $\mathcal O^\zeta_{t,x,\mathbf z^K}
    \subset A$
be a neighborhood of
$\pi^\zeta(t,x,\mathbf z^K)$. Assume that, for every $a\in\mathcal O^\zeta_{t,x,\mathbf z^K},$ there exists an admissible $A$-valued
$\mathbb F$-predictable process $\alpha=(\alpha_s)_{s\in[t,T]}$
such that
    $\lim_{s\downarrow t}\alpha_s=a,
    \mathbb P\text{-a.s.}$
Assume moreover that for $s\in[t,T]$,
\begin{align}
\label{eq:discounted_martingale_characterization}
    \mathbf M_s^{t,x,\mathbf z^K;\alpha}
    :=\;&
    e^{-\rho(s-t)}
    \mathcal V
    \left(
        s,X_s^{K,\alpha},\mathbf Z_s^{K,\alpha}
    \right)+
    \int_t^s
    e^{-\rho(u-t)}
    \Big[
        c\left(
            u,X_u^{K,\alpha},\alpha_u
        \right)
        -
        \mathcal Q
        \left(
            u,X_u^{K,\alpha},
            \mathbf Z_u^{K,\alpha},
            \alpha_u
        \right)
    \Big]
    \,\dd u,
\end{align}
is an $\mathbb F$-martingale. Then for every $
    (t,x,\mathbf z^K)
    \in
    [0,T]\times
    \mathbb R^{d_x}\times(\mathbb R^m)^K$ and $a\in\mathcal O^\zeta_{t,x,\mathbf z^K}$,\[
    \mathcal V(t,x,\mathbf z^K)
    =
    J_K^\zeta(t,x,\mathbf z^K),\quad
    \mathcal Q(t,x,\mathbf z^K,a)
    =
    A_K^\zeta(t,x,\mathbf z^K,a).
\]

\end{theorem}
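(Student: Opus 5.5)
The plan has two parts: identify $\mathcal V$ using the policy's own action, then read off $\mathcal Q$ from the compensator of the martingale.

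\textbf{Identifying $\mathcal V$.} I use the process $\alpha_s=\pi^\zeta(Y^{K}_{s-})$, i.e.\ follow the policy itself; it is admissible because $\pi^\zeta\in\mathcal U$. The normalization \eqref{eq:martingale_terminal_normalization} then gives $\mathcal Q(u,X_u^{K},\mathbf Z_u^{K},\alpha_u)=0$ for a.e.\ $u$. Take expectations in the martingale identity $\mathbf M_t=\E[\mathbf M_T]$ and use $\mathcal V(T,\cdot)=g$. This yields $\mathcal V(t,x,\mathbf z^K)=\E_{t,x,\mathbf z^K;\pi^\zeta}[\int_t^T e^{-\rho(u-t)}c\,\dd u+e^{-\rho(T-t)}g(X_T^K)]=J_K^\zeta(t,x,\mathbf z^K)$. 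Integrability of every term follows from the polynomial growth of order at most $p_\star$ together with Assumption~\ref{ass:approx_uniform_moments}. Since $\mathcal V$ is $\mathcal C^{1,2,1}$ with the growth condition \eqref{eq:bellman_growth_condition}, this identification also lets me use $\mathcal V$ in place of $J_K^\zeta$ in Definition~\ref{def:advantage}.

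\textbf{Computing the compensator.} Fix $a\in\mathcal O^\zeta_{t,x,\mathbf z^K}$ and the associated predictable $\alpha$ with $\alpha_s\to a$. I apply the jump-diffusion It\^o formula to $e^{-\rho(s-t)}\mathcal V(s,X_s^{K,\alpha},\mathbf Z_s^{K,\alpha})$ up to the localizing times $\tau_n$ used in the proof of Theorem~\ref{thm:fixed_policy_bellman}, exactly as in that proof. This writes $\mathbf M_s$ as a local martingale $\mathcal M^K$ plus the finite-variation term
\[
\int_t^s e^{-\rho(u-t)}\Big[\mathcal L_K^{\alpha_u}\mathcal V-\rho\mathcal V+c(u,X_u^{K,\alpha},\alpha_u)-\mathcal Q(u,X_u^{K,\alpha},\mathbf Z_u^{K,\alpha},\alpha_u)\Big]\dd u .
\]
Since $\mathbf M$ is a martingale, this continuous finite-variation predictable process is itself a local martingale, hence identically zero. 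Therefore, a.s.\ and for a.e.\ $u$,
\[
\mathcal L_K^{\alpha_u}\mathcal V(u,X_u,\mathbf Z_u)-\rho\mathcal V+c(u,X_u,\alpha_u)=\mathcal Q(u,X_u,\mathbf Z_u,\alpha_u).
\]
Equivalently, dividing the expected integral over $[t,t+h]$ by $h$ gives zero for every $h$.

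\textbf{Passing to the limit.} I let $u\downarrow t$ (or take $h\downarrow0$ in the averaged form). Right-continuity of $X^{K,\alpha}$ and $\mathbf Z^{K,\alpha}$ at $t$ with initial values $(x,\mathbf z^K)$, together with $\alpha_u\to a$, gives convergence of the integrands. The required continuity is as follows: $b,\sigma,\gamma,\mu$ are continuous in the action, $Q_k$ is continuous, the positive part is continuous, $\mathcal V\in\mathcal C^{1,2,1}$, and $\mathcal Q$ is continuous. Hence $\mathcal L_K^{a}\mathcal V(t,x,\mathbf z^K)-\rho\mathcal V(t,x,\mathbf z^K)+c(t,x,a)=\mathcal Q(t,x,\mathbf z^K,a)$. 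With $\mathcal V=J_K^\zeta$ this is exactly $A_K^\zeta(t,x,\mathbf z^K,a)$.

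\textbf{Main obstacle.} The hardest step is the limit exchange in the averaged form. It requires uniform integrability of $\frac1h\int_t^{t+h}|\mathcal L_K^{\alpha_u}\mathcal V|\,\dd u$. I would obtain this from the bound in Remark~\ref{rem:LK} together with the moment bounds of Assumption~\ref{ass:approx_uniform_moments}, localizing with $\tau_n$; on $\{u<\tau_n\}$ everything is bounded, and $\tau_n>t$ a.s. The pathwise version avoids this issue but needs the ``a.e.\ $u$'' set to accumulate at $t$, which holds since it has full measure.
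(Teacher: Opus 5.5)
\textbf{Gap in the first step.} Your first step relies on a hypothesis the theorem does not grant. The martingale assumption is tied to the exploratory process. For each $a\in\mathcal O^\zeta_{t,x,\mathbf z^K}$, the statement only provides \emph{some} admissible $\alpha$ with $\alpha_s\to a$ along which $\mathbf M^{t,x,\mathbf z^K;\alpha}$ is a martingale. It does not say that $\mathbf M$ is a martingale along the on-policy feedback control $\alpha_s=\pi^\zeta(Y^K_{s-})$. Take $a=\pi^\zeta(t,x,\mathbf z^K)$: the guaranteed $\alpha$ only satisfies $\alpha_s\to\pi^\zeta(t,x,\mathbf z^K)$ as $s\downarrow t$. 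In general it differs from $\pi^\zeta(Y^{K,\alpha}_{u-})$ for $u>t$, so the normalization \eqref{eq:martingale_terminal_normalization} does not remove the $\mathcal Q$-term along it. Hence $\mathbf M_t=\E[\mathbf M_T]$ does not produce $J_K^\zeta(t,x,\mathbf z^K)$. If the martingale property were also assumed along the on-policy control, your step would be a valid and more direct identification that avoids any PDE uniqueness argument. But that is a stronger hypothesis than the one stated.

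\textbf{Repair.} The fix uses only what you already prove. The identity from your second and third steps does not depend on the first:
\[
\mathcal Q(t,x,\mathbf z^K,a)=\mathcal L_K^a\mathcal V(t,x,\mathbf z^K)-\rho\mathcal V(t,x,\mathbf z^K)+c(t,x,a),
\]
valid for every $t<T$ and every $a\in\mathcal O^\zeta_{t,x,\mathbf z^K}$. This set is a neighborhood of $\pi^\zeta(t,x,\mathbf z^K)$, so you may take $a=\pi^\zeta(t,x,\mathbf z^K)$. The normalization then shows that $\mathcal V$ solves the fixed-policy Bellman equation with terminal value $g$. The converse part of Theorem~\ref{thm:fixed_policy_bellman} gives $\mathcal V=J_K^\zeta$, and then $\mathcal Q=A_K^\zeta$ by Definition~\ref{def:advantage}. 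This is exactly the paper's order of argument. Equivalently, once the Bellman equation is known, It\^o's formula along the on-policy trajectory shows that $\mathbf M$ is a local martingale there, and a true martingale under the growth and moment bounds. That is precisely what your first step tacitly assumed.

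\textbf{Your ``main obstacle''.} The uniform-integrability issue you flag does not arise. The pathwise argument you mention at the end suffices: a.s.\ the integrand vanishes for a.e.\ $u$ and converges to its value at $(t,x,\mathbf z^K,a)$ as $u\downarrow t$. This is what the paper uses, and it requires no expectations.
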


\begin{proof}
For $(t,x,\mathbf z^K)
    \in
    [0,T)\times
    \mathbb R^{d_x}\times(\mathbb R^m)^K$
and $a\in\mathcal O^\zeta_{t,x,\mathbf z^K}$.
Let $\alpha$ be an exploratory control satisfying the assumptions
of the theorem. To simplify notation, write $X_s:=X_s^{K,\alpha},
    \mathbf Z_s:=\mathbf Z_s^{K,\alpha}.$
By the jump--diffusion It\^o formula, for every $s\in[t,T]$,
\begin{align}
\label{eq:martingale_characterization_ito}
    &
    e^{-\rho(s-t)}
    \mathcal V(s,X_s,\mathbf Z_s)
    -
    \mathcal V(t,x,\mathbf z^K)
    \notag\\
    &=
    \int_t^s
    e^{-\rho(u-t)}
    \Big[
        \mathcal L_K^{\alpha_u}
        \mathcal V
        \left(
            u,X_{u-},\mathbf Z_{u-}
        \right)
        -
        \rho
        \mathcal V
        \left(
            u,X_{u-},\mathbf Z_{u-}
        \right)
    \Big]
    \,\dd u
    +
    \mathcal N_s,
\end{align}
where $\mathcal N=(\mathcal N_s)_{s\in[t,T]}$ is the local martingale
\begin{align*}
    \mathcal N_s
    ={}&
    \int_t^s
    e^{-\rho(u-t)}
    \nabla_x\mathcal V
    \left(
        u,X_{u-},\mathbf Z_{u-}
    \right)^\top
    \sigma
    \left(
        u,X_{u-},\alpha_u
    \right)
    \,\dd W_u
    \\
    &+
    \sum_{i=1}^m
    \int_t^s
    e^{-\rho(u-t)}
    \left(\mathcal V
    \left(
        u,
        X_{u-}+\gamma_i(u,X_{u-},\alpha_u),
        \mathbf Z_{u-}+\mathbf e_i^K
    \right)-
    \mathcal V(u,X_{u-},\mathbf Z_{u-})\right)
    \,\dd\widetilde M_u^{K,\alpha,i},
\end{align*}
with
\[
    \widetilde M_s^{K,\alpha,i}
    :=
    N_s^{K,\alpha,i}
    -
    N_t^{K,\alpha,i}
    -
    \int_t^s
    \lambda_i^K
    \left(
        u,X_{u-},\mathbf Z_{u-},\alpha_u
    \right)
    \,\dd u.
\] Combining
\eqref{eq:martingale_characterization_ito} with
\eqref{eq:discounted_martingale_characterization} yields
\[
    \mathbf M_s^{t,x,\mathbf z^K;\alpha}
    -
    \mathbf M_t^{t,x,\mathbf z^K;\alpha}
    =
    \int_t^s
    e^{-\rho(u-t)}
    F_{\mathcal V,\mathcal Q}
    \left(
        u,X_{u-},\mathbf Z_{u-},\alpha_u
    \right)
    \,\dd u+\mathcal N_s,
\] where 
\begin{equation*}
    F_{\mathcal V,\mathcal Q}
    (t,x,\mathbf z^K,a):=
    \mathcal L_K^a
    \mathcal V(t,x,\mathbf z^K)
    -
    \rho\mathcal V(t,x,\mathbf z^K)
    +
    c(t,x,a)
    -
    \mathcal Q(t,x,\mathbf z^K,a).
\end{equation*}
By assumption,
$\mathbf M^{t,x,\mathbf z^K;\alpha}$ is a martingale, while
$\mathcal N$ is a local martingale. Therefore, $$\left(\int_t^s
    e^{-\rho(u-t)}
    F_{\mathcal V,\mathcal Q}
    \left(
        u,X_{u-},\mathbf Z_{u-},\alpha_u
    \right)
    \,\dd u\right)_{s\in[t,T]}$$ is a local martingale, which has the continuous paths with finite-variation. Hence, we have
\begin{equation}
\label{eq:zero_residual_integral}
    \int_t^s
    e^{-\rho(u-t)}
    F_{\mathcal V,\mathcal Q}
    \left(
        u,X_{u-},\mathbf Z_{u-},\alpha_u
    \right)
    \,\dd u
    =
    0,
    \qquad
    s\in[t,T],
    \quad
    \mathbb P\text{-a.s.}
\end{equation} Since the process is c\`adl\`ag,  $\displaystyle \lim_{u\downarrow t}
    \left(
        X_{u-},\mathbf Z_{u-}
    \right)
    =
    \left(
        x,\mathbf z^K
    \right),
    \mathbb P\text{-a.s.}$
Together with $\displaystyle  \lim_{u\downarrow t}\alpha_u= a,$ and the continuity of $F_{\mathcal V,\mathcal Q}$, we get
$$\begin{aligned}
    \lim_{u\downarrow t} e^{-\rho(u-t)}
    F_{\mathcal V,\mathcal Q}
    \left(
        u,X_{u-},\mathbf Z_{u-},\alpha_u
    \right)
    \longrightarrow
    F_{\mathcal V,\mathcal Q}
    (t,x,\mathbf z^K,a),\quad 
    \mathbb P\text{-a.s.}
\end{aligned}$$ Setting $s = t+h$ with $h>0$, dividing \eqref{eq:zero_residual_integral} by $h$, and taking
$h\downarrow0$, we obtain
\begin{align*}
    0
    &=
    \lim_{h\downarrow0}
    \frac1h
    \int_t^{t+h}
    e^{-\rho(u-t)}
    F_{\mathcal V,\mathcal Q}
    \left(
        u,X_{u-},\mathbf Z_{u-},\alpha_u
    \right)
    \,\dd u
    \\
    &=
    F_{\mathcal V,\mathcal Q}
    (t,x,\mathbf z^K,a).
\end{align*}
Consequently, for every
$a\in\mathcal O^\zeta_{t,x,\mathbf z^K}$,
\begin{equation}
\label{eq:q_equals_residual}
\begin{aligned}
    \mathcal Q(t,x,\mathbf z^K,a)
    ={}&
    \mathcal L_K^a
    \mathcal V(t,x,\mathbf z^K)
    -
    \rho\mathcal V(t,x,\mathbf z^K)+
    c(t,x,a).
\end{aligned}
\end{equation} Taking
    $a=\pi^\zeta(t,x,\mathbf z^K)$
in \eqref{eq:q_equals_residual} and using
\eqref{eq:martingale_terminal_normalization}, we obtain
\[
\begin{aligned}
    &
    \mathcal L_K^{
        \pi^\zeta(t,x,\mathbf z^K)
    }
    \mathcal V(t,x,\mathbf z^K)
    -
    \rho\mathcal V(t,x,\mathbf z^K)
    +
    c\left(
        t,x,\pi^\zeta(t,x,\mathbf z^K)
    \right)
    =
    0,
\end{aligned}
\]
with terminal condition
\[
    \mathcal V(T,x,\mathbf z^K)=g(x).
\]
Then Theorem~\ref{thm:fixed_policy_bellman} yields
\[
    \mathcal V(t,x,\mathbf z^K)
    =
    J_K^\zeta(t,x,\mathbf z^K).
\]
Substituting this identity into \eqref{eq:q_equals_residual} and using
Definition~\ref{def:advantage} gives
\[
    \mathcal Q(t,x,\mathbf z^K,a)
    =
    A_K^\zeta(t,x,\mathbf z^K,a).
\]
\end{proof}

\paragraph{Connection with critic-advantage learning.}

Theorem~\ref{thm:martingale} provides the identification principle
used to train the value and advantage-rate critics. For a fixed policy
$\pi^\zeta$, we approximate $J_K^\zeta$ by a value network parametrized by a vector of real numbers $\theta$
\[
    V_\theta:
    [0,T]\times\R^{d_x}\times(\R^m)^K
    \longrightarrow\R,
\]
and introduce an action-dependent network parametrized by a vector of real numbers $\psi$
\[
    \overline {\mathcal Q}_\psi:
    [0,T]\times\R^{d_x}\times(\R^m)^K\times A
    \longrightarrow\R.
\]
The corresponding normalized advantage-rate approximator is
\begin{equation}
\label{eq:normalized_advantage_critic}
    {\mathcal Q}_{\psi,\zeta}(t,x,\mathbf z^K,a)
    :=
    \overline {\mathcal Q}_\psi(t,x,\mathbf z^K,a)
    -
    \overline {\mathcal Q}_\psi
    \left(
        t,x,\mathbf z^K,
        \pi^\zeta(t,x,\mathbf z^K)
    \right).
\end{equation}
It satisfies
\[
    {\mathcal Q}_{\psi,\zeta}
    \left(
        t,x,\mathbf z^K,
        \pi^\zeta(t,x,\mathbf z^K)
    \right)
    =0,
\]
by construction, consistently with the normalization of
$A_K^\zeta$. For the exact pair
$(J_K^\zeta,A_K^\zeta)$, the discounted process in
Theorem~\ref{thm:martingale} is a martingale under locally exploratory
controls with candidates $\mathcal V=V_\theta$ and $\mathcal Q={\mathcal Q}_{\psi,\zeta}$. We therefore train $(V_\theta,{\mathcal Q}_{\psi,\zeta})$ by driving
discrete empirical increments of this process toward zero. The
resulting multi-step martingale residual and advantage-critic loss are defined
in Subsection~\ref{sec:hawkes_ctddpg_training}.

\subsection{Policy loss and actor update.}

From Theorem \ref{thm:fixed_policy_bellman} we recall that
\[
    A_K^\zeta(t,x,\mathbf z^K,\pi^\zeta(t,x,\mathbf z^K))=0.
\]

\begin{lemma}
\label{lem:performance_difference}
Let $\pi^\zeta,\pi^{\zeta'}$ be two deterministic Markov policies parameterized by $\zeta,\zeta'\in \mathcal Z$ and let the growth conditions in Theorem \ref{thm:fixed_policy_bellman} be satisfied for $J^\zeta_K$. Then 
\[
    J_K^{\zeta'}(t,x,\mathbf z^K)-J_K^\zeta(t,x,\mathbf z^K)
    =
    \E_{t,x,z;\pi^{\zeta'}}\left[
        \int_t^T e^{-\rho(u-t)}
        A_K^\zeta(Y_u^{\zeta'},\pi^{\zeta'}(Y_u^{\zeta'}))\,\dd u
    \right].
\]
\end{lemma}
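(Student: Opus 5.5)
Apply It\^o's formula to $J_K^\zeta$ along the trajectory generated by the other policy $\pi^{\zeta'}$. The Bellman identity of Theorem~\ref{thm:fixed_policy_bellman} will then turn the drift into the advantage rate.

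Fix $(t,x,\mathbf z^K)$ and let $Y_u^{\zeta'}=(u,X_u^{K},\mathbf Z_u^{K})$ be the lifted state driven by $a_u=\pi^{\zeta'}(u,X_{u-}^K,\mathbf Z_{u-}^K)$, started from $(x,\mathbf z^K)$ at time $t$. Since $J_K^\zeta\in\mathcal C^{1,2,1}$, the jump--diffusion It\^o formula for $e^{-\rho(u-t)}J_K^\zeta(Y_u^{\zeta'})$ on $[t,T\wedge\tau_n]$ applies. Here $\tau_n$ is the localizing sequence used in the proof of Theorem~\ref{thm:fixed_policy_bellman}. It gives
\[
e^{-\rho(\tau_n-t)}J_K^\zeta(Y_{\tau_n}^{\zeta'})-J_K^\zeta(t,x,\mathbf z^K)
=\int_t^{\tau_n}e^{-\rho(u-t)}\bigl[\mathcal L_K^{a_u}J_K^\zeta-\rho J_K^\zeta\bigr](Y_{u-}^{\zeta'})\,\dd u+\mathcal M_{\tau_n},
\]
where $\mathcal M$ is the Brownian-plus-compensated-jump local martingale. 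The generator appearing here is $\mathcal L_K^{a}$ evaluated at the action $a=\pi^{\zeta'}(\cdot)$, with $J_K^\zeta$ as the test function. By Definition~\ref{def:advantage},
\[
\mathcal L_K^{a}J_K^\zeta-\rho J_K^\zeta=A_K^\zeta(\cdot,a)-c(\cdot,a).
\]
Substituting this into the It\^o expansion and taking expectations, with the stopped martingale having zero mean, gives
\[
J_K^\zeta(t,x,\mathbf z^K)=\E\Bigl[\int_t^{\tau_n}e^{-\rho(u-t)}c(u,X_u,\pi^{\zeta'}(Y_u))\,\dd u+e^{-\rho(\tau_n-t)}J_K^\zeta(Y_{\tau_n})\Bigr]-\E\int_t^{\tau_n}e^{-\rho(u-t)}A_K^\zeta(Y_u,\pi^{\zeta'}(Y_u))\,\dd u.
\]

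Next let $n\to\infty$. Since $\tau_n\uparrow T$, the terminal term converges to $e^{-\rho(T-t)}g(X_T)$ by the terminal condition~\eqref{eq:fixed_policy_terminal_bold}. The first two terms together then converge to $J_K^{\zeta'}(t,x,\mathbf z^K)$ by the definition of $J_K^{\zeta'}$. Rearranging yields the claimed identity.

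The main obstacle is justifying dominated convergence and the zero mean of the stopped martingale. The key observation is that the growth bound \eqref{eq:bellman_growth_condition} is assumed only for $J_K^\zeta$, and that this is enough: the derivatives and increments appearing under $\pi^{\zeta'}$ are all those of $J_K^\zeta$, and only the state process changes. I would handle the three terms as follows.

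\begin{itemize}
\item \emph{Advantage term.} Remark~\ref{rem:LK}, applied with action $\pi^{\zeta'}$, bounds $|\mathcal L_K^{a}J_K^\zeta|$ by $\tilde C(1+\|x\|^{p_\star}+\|\mathbf z^K\|^{p_\star})$. Together with Assumption~\ref{ass:reward}, this gives a polynomial bound on $|A_K^\zeta|$.
\item \emph{Terminal term.} The bound $|J_K^\zeta|\le C(1+\|x\|^\ell+\|\mathbf z^K\|^\ell)$ with $\ell\le p_\star-2$ controls $e^{-\rho(\tau_n-t)}J_K^\zeta(Y_{\tau_n})$.
\item \emph{Moments of the state.} Assumption~\ref{ass:approx_uniform_moments} holds under $\pi^{\zeta'}\in\mathcal U$. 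The memory factors satisfy $\|\mathbf Z_u\|_{\mathcal Z_K}\le \|\mathbf z^K\|_{\mathcal Z_K}+K\|N_T\|_1$, so their moments are controlled by the moments of $N_T$.
\end{itemize}

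The delicate point is that Remark~\ref{rem:LK} yields order $p_\star$, so integrability of the advantage term sits exactly at the available moment. If that is insufficient, I would instead use the sharper bound of order $\ell+1\le p_\star-1$, which comes from the jump term: the intensity is linear in $\mathbf z^K$, and it multiplies an increment of $J_K^\zeta$ of order $\ell$. This is integrable uniformly in $u$ under Assumption~\ref{ass:approx_uniform_moments}. The stopped martingale is a true martingale because its integrands are bounded on $[t,\tau_n]$.
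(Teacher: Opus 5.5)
Your proposal is correct and follows the paper's proof. Both apply It\^o's formula to $e^{-\rho(u-t)}J_K^\zeta$ along the $\pi^{\zeta'}$-trajectory with the localization of Theorem~\ref{thm:fixed_policy_bellman}, use the terminal condition $J_K^\zeta(T,\cdot)=g$, and exploit $\mathcal L_K^{a}J_K^\zeta-\rho J_K^\zeta=A_K^\zeta-c$; the paper phrases this last step as adding the expected running cost under $\pi^{\zeta'}$ to both sides. Your primary integrability argument already suffices: the order-$p_\star$ bound from Remark~\ref{rem:LK} against the $p_\star$-moments. The fallback order-$(\ell+1)$ bound is therefore unnecessary, and as stated it is not right, because the term $\frac12\Tr[\sigma\sigma^\top\nabla^2_{xx}J_K^\zeta]$ has order $\ell+2$ when $\sigma$ grows linearly, which is why Remark~\ref{rem:LK} uses order $p_\star\ge\ell+2$.
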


\begin{proof}
 Using a similar localization in the proof of Theorem~\ref{thm:fixed_policy_bellman}, we get
\[
\begin{aligned}
    &\quad\E_{t,x,z;\pi^{\zeta'}}
    \left[
        e^{-\rho(T-t)}g(Y_T^{\zeta'})
    \right]
    -
    J_K^\zeta(t,x,\mathbf z^K)\\
    &=
    \E_{t,x,\mathbf z^K;\pi^{\zeta'}}\Big[
    \int_t^T e^{-\rho(u-t)}
    \left\{
        \mathcal L_K^{\pi^{\zeta'}(Y_u^{\zeta'})}J_K^\zeta(Y_u^{\zeta'})
        -
        \rho J_K^\zeta(Y_u^{\zeta'})
    \right\}\dd u\Big].
\end{aligned}\]
Therefore, by adding $
\E_{t,x,z;\pi^{\zeta'}}\Big[
    \int_t^T e^{-\rho(u-t)}c(Y_u^{\zeta'},\pi^{\zeta'}(Y_u^{\zeta'}))\,\dd u\Big]$
to both sides we get the equality. 
\end{proof}

\begin{assumption}
\label{ass:actor-advantage-regularity}
Fix $K \geq 1$ and $\zeta \in \mathcal{Z}$. There exist an open
neighborhood $\mathcal O_\zeta \subset \mathcal{Z}$ of $\zeta$, a constant
$C_{\zeta}>0$, and exponents $q_{\pi},q_A\geq 0$ such that  $q_{\pi}+q_A<p_\star$ and the following
conditions hold.

\begin{enumerate}
    \item 
   $
        (\zeta,y)\longmapsto \pi^{\zeta}(y)
$
    is continuous on $\mathcal O_{\zeta}\times\mathcal{Y}_K$; for every
    $y\in\mathcal{Y}_K$, the map
$
        \zeta\longmapsto \pi^{\zeta}(y)
$
    is continuously differentiable, and
 $  
        (\zeta,y)\longmapsto D_{\zeta}\pi^{\zeta}(y)
$
    is continuous on $\mathcal O_{\zeta}\times\mathcal{Y}_K$. Moreover,
$
        \pi^{\zeta}(y)\in\operatorname{int}(A)
$
    and
    \[
        \bigl\|D_{\zeta}\pi^{\zeta}(y)\bigr\|
        \leq
        C_{\zeta}\bigl(1+\|y\|^{q_{\pi}}\bigr),
        \qquad
        (\xi,y)\in \mathcal O_{\zeta}\times\mathcal{Y}_K.
    \]

    \item 
   
       $ Y^{\zeta_\varepsilon}\longrightarrow Y^{\zeta}$
        in $(\mathbb{P}\otimes dt)\text{-measure on }
        \Omega\times[0,T],
        \text{ for any }\zeta_\varepsilon\longrightarrow\zeta$ when $\varepsilon\longrightarrow 0$  

    \item 
    For every $y\in\mathcal{Y}_K$, the map
        $a\longmapsto A_K^{\zeta}(y,a)$
    admits a continuously differentiable extension to an open
    neighborhood of
      $  \bigl\{\pi^{\zeta}(y):\zeta\in \mathcal O_{\zeta}\bigr\}.
$ Moreover, 
    the map
$
        (y,a)\longmapsto\nabla_a A_K^{\zeta}(y,a)
$
    is continuous on the corresponding domain and 
    \[
        \left\|
        \nabla_a A_K^{\zeta}
        \bigl(y,\pi^{\zeta}(y)\bigr)
        \right\|
        \leq
        C_{\zeta}\bigl(1+\|y\|^{q_A}\bigr),
        \qquad
        (\zeta,y)\in \mathcal O_{\zeta}\times\mathcal{Y}_K.
    \]

\end{enumerate}

\end{assumption}

\begin{theorem}
\label{thm:dpg}
Under Assumption \ref{ass:actor-advantage-regularity}
\[
    \nabla_\zeta J_K^{\zeta}(x,\mathbf{z}^K)
    =
    \E\left[
        \int_0^T e^{-\rho t}
        D_\zeta\pi^\zeta(Y_t^\zeta)^\top
        \nabla_a A_K^\zeta(Y_t^\zeta,\pi^\zeta(Y_t^\zeta))\,\dd t
    \right].
\]
\end{theorem}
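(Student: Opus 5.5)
The plan is to differentiate the performance-difference identity of Lemma~\ref{lem:performance_difference} along a direction in parameter space. Fix a direction $h\in\mathbb R^p$ and set $\zeta_\varepsilon:=\zeta+\varepsilon h$, which lies in $\mathcal O_\zeta$ for $\varepsilon$ small. Applying Lemma~\ref{lem:performance_difference} with $\zeta'=\zeta_\varepsilon$ at the initial point $(0,x,\mathbf z^K)$ gives
\[
    \frac{J_K^{\zeta_\varepsilon}(x,\mathbf z^K)-J_K^\zeta(x,\mathbf z^K)}{\varepsilon}
    =
    \E\left[
        \int_0^T e^{-\rho t}\,
        \frac{A_K^\zeta\bigl(Y_t^{\zeta_\varepsilon},\pi^{\zeta_\varepsilon}(Y_t^{\zeta_\varepsilon})\bigr)}{\varepsilon}\,\dd t
    \right].
\]
Since $A_K^\zeta(y,\pi^\zeta(y))=0$ for every $y$ by Theorem~\ref{thm:fixed_policy_bellman}, the numerator can be rewritten as $A_K^\zeta(Y_t^{\zeta_\varepsilon},\pi^{\zeta_\varepsilon}(Y_t^{\zeta_\varepsilon}))-A_K^\zeta(Y_t^{\zeta_\varepsilon},\pi^{\zeta}(Y_t^{\zeta_\varepsilon}))$. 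The difference is then only in the action variable, with the state frozen at $Y_t^{\zeta_\varepsilon}$, and this is the key point of the argument.

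Next I will use the mean value theorem twice, relying on Assumption~\ref{ass:actor-advantage-regularity}(1),(3). The policy takes values in $\operatorname{int}(A)$, and $a\mapsto A_K^\zeta(y,a)$ is $C^1$ near $\{\pi^{\xi}(y):\xi\in\mathcal O_\zeta\}$. Hence the quotient equals
\[
\int_0^1 \nabla_a A_K^\zeta\bigl(y,\pi^{\zeta+s\varepsilon h}(y)\bigr)^\top\Bigl(\int_0^1 D_\zeta\pi^{\zeta+r\varepsilon h}(y)h\,dr\Bigr)\dd s
\]
evaluated at $y=Y_t^{\zeta_\varepsilon}$, with the inner path written more carefully so that both factors are taken along the segment $\zeta+[0,\varepsilon]h$. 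As $\varepsilon\to0$, Assumption~\ref{ass:actor-advantage-regularity}(2) gives $Y^{\zeta_\varepsilon}\to Y^\zeta$ in $\mathbb P\otimes dt$-measure. Joint continuity of $(\xi,y)\mapsto\pi^\xi(y)$, of $D_\zeta\pi$, and of $\nabla_aA_K^\zeta$ then shows that the integrand converges in measure to $e^{-\rho t}D_\zeta\pi^\zeta(Y_t^\zeta)^\top\nabla_aA_K^\zeta(Y_t^\zeta,\pi^\zeta(Y_t^\zeta))\cdot h$. This uses the continuous mapping theorem for convergence in measure, after passing to an a.e.\ convergent subsequence.

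The main obstacle is passing the limit inside the expectation, which needs uniform integrability. The growth bounds give the domination $C_\zeta^2(1+\|Y_t^{\zeta_\varepsilon}\|^{q_\pi})(1+\|Y_t^{\zeta_\varepsilon}\|^{q_A})\lesssim 1+\|Y_t^{\zeta_\varepsilon}\|^{q_\pi+q_A}$. I will be careful that the bound in (3) is stated at $(y,\pi^\xi(y))$ for $\xi\in\mathcal O_\zeta$, which covers the intermediate points on the segment. Because $q_\pi+q_A<p_\star$, Assumption~\ref{ass:approx_uniform_moments}, valid uniformly over admissible policies (and so uniformly in $\varepsilon$ since $\pi^{\zeta_\varepsilon}\in\mathcal U$), bounds the $L^{p_\star/(q_\pi+q_A)}(\mathbb P\otimes dt)$ norm of this dominating family for the $X$ component. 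For the $\mathbf Z$ component I use $\|\mathbf Z_t\|_{\mathcal Z_K}\le K\|N_T\|_1$. This ratio exceeds $1$, so de la Vallée-Poussin gives uniform integrability, and Vitali's theorem then yields convergence of the expectations along every subsequence, hence of the whole family.

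Identifying the limit as the directional derivative $\nabla_\zeta J_K^\zeta\cdot h$ for every $h$ gives the formula. Continuity of the right-hand side in $\zeta$ follows by the same Vitali argument, which upgrades Gâteaux differentiability to the gradient identity stated.
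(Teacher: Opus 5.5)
Your proposal follows essentially the paper's proof: apply the performance-difference lemma with $\zeta'=\zeta+\varepsilon h$, subtract $A_K^\zeta(y,\pi^\zeta(y))=0$, expand to first order in the action variable, then get uniform integrability from the growth bounds with $q_\pi+q_A<p_\star$ and conclude with Vitali's theorem. You are in fact more careful than the paper on two points: you explicitly invoke Assumption~\ref{ass:actor-advantage-regularity}(2) for convergence in measure of the integrand evaluated along $Y^{\zeta_\varepsilon}$, and you control the memory component via $\|\mathbf Z_t\|_{\mathcal Z_K}\le \|\mathbf z^K\|_{\mathcal Z_K}+K\|N_T\|_1$.

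Two small corrections. First, the expansion should be the single chain-rule integral $\int_0^1\nabla_aA_K^\zeta\bigl(y,\pi^{\zeta+s\varepsilon h}(y)\bigr)^\top D_\zeta\pi^{\zeta+s\varepsilon h}(y)h\,\dd s$, not a product of two separate averages; the paper's display contains the same slip, and your limit and domination argument go through unchanged with the correct form. Second, your closing claim that continuity of the right-hand side in $\zeta$ follows ``by the same Vitali argument'' is unsupported, because the assumptions control $\nabla_aA_K^{\zeta}$ only for the fixed $\zeta$ and say nothing about $\nabla_aA_K^{\xi}$ for nearby $\xi$. That step is also unnecessary: the directional derivative you obtain is linear in $h$, which is exactly the (G\^ateaux) identity the paper establishes.
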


\begin{proof}
Fix $\zeta\in\mathcal{Z}$, $\varepsilon>0$ and a direction $\eta\in\R^r$ where $\varepsilon>0$ is sufficiently small so that $\zeta_\varepsilon:=\zeta+\varepsilon\eta$ is in $ \mathcal O_\zeta$.
From Lemma~\ref{lem:performance_difference}, we compute
\begin{equation}\label{eq:diffquotient}
    \frac{J_K^{\zeta_\varepsilon}(x,\mathbf{z}^K)-J_K^\zeta(x,\mathbf{z}^K)}{\varepsilon}
    =
    \E_{0,x,\mathbf{z}^K,;\pi^{\zeta_\varepsilon}}\Big[
    \int_0^T e^{-\rho t}
    \frac{
        A_K^\zeta(Y_t^{\zeta_\varepsilon},
        \pi_{\zeta_\varepsilon}(Y_t^{\zeta_\varepsilon}))
    }{\varepsilon}
    \,\dd t\Big].
\end{equation}
Recall that $A_K^\zeta(y,\pi_\zeta(y))=0$
holds for every state $y$. Therefore
\[
    A_K^\zeta(Y_t^{\zeta_\varepsilon},
    \pi_\zeta(Y_t^{\zeta_\varepsilon}))=0,
\]
and hence
\[
\begin{aligned}
\frac{
        A_K^\zeta(Y_t^{\zeta_\varepsilon},
        \pi_{\zeta_\varepsilon}(Y_t^{\zeta_\varepsilon}))
    }{\varepsilon}=
    \frac{
        A_K^\zeta(Y_t^{\zeta_\varepsilon},
        \pi_{\zeta_\varepsilon}(Y_t^{\zeta_\varepsilon}))
        -
        A_K^\zeta(Y_t^{\zeta_\varepsilon},
        \pi_\zeta(Y_t^{\zeta_\varepsilon}))
    }{\varepsilon}.
\end{aligned}
\]
Since $a\longmapsto A_K^\zeta(y,a)$ is continuously differentiable from Assumption \ref{ass:actor-advantage-regularity}, by the fundamental theorem of analysis with respect to the action variable, for any $y$, we have
\[
\begin{aligned}
    \frac{
        A_K^\zeta(y,\pi_{\zeta_\varepsilon}(y))
        -
        A_K^\zeta(y,\pi_\zeta(y))
    }{\varepsilon} = \int_0^1 
    \nabla_a A_K^\zeta(y, \pi_{\zeta+\varepsilon r \eta}(y))^\top \,\Big(
   \frac{
\pi^{\zeta_\varepsilon}(y)-\pi^\zeta(y)
}{\varepsilon}\Big) \; dr,
\end{aligned}
\]
where the derivative with respect to $a$ is understood with respect to the neural network parameterizing the action variable. Hence,

\begin{align*}
&
\frac{J^{\zeta_\varepsilon}_K(x,\mathbf{z}^K)-J^\zeta_K(x,\mathbf{z}^K)}{\varepsilon}\\
&=
\mathbb{E}_{0,x,\mathbf{z}^K;\pi^{\zeta_\varepsilon}}
\Bigg[
\int_0^T
e^{-\rho t}
\int_0^1
\nabla_aA^\zeta_K
\!\left(
Y_t^{\zeta_\varepsilon},
\pi^{\zeta+r\varepsilon\eta}
(Y_t^{\zeta_\varepsilon})
\right)^\top\times
\frac{
\pi^{\zeta_\varepsilon}(Y_t^{\zeta_\varepsilon})
-
\pi^\zeta(Y_t^{\zeta_\varepsilon})
}{\varepsilon}
\,dr\,dt
\Bigg].
\end{align*}

Recall from Assumption \ref{ass:actor-advantage-regularity} that
\[
\frac{
\pi^{\zeta_\varepsilon}(y)-\pi^\zeta(y)
}{\varepsilon}
\underset{\varepsilon\to 0}{\longrightarrow}
D_\zeta\pi^\zeta(y)\eta,\; \text{ with }\; \pi^{\zeta+r\varepsilon\eta}(y)
\underset{\varepsilon\to 0}{\longrightarrow}
\pi^\zeta(y)
\]
uniformly on compact sets. Moreover,
there exists some constant $C_\zeta'>0$ such that
\[\|\nabla_aA^\zeta_K
\!\left(
y,
\pi^{\zeta+r\varepsilon\eta}
(y)
\right)^\top\times
\frac{
\pi^{\zeta_\varepsilon}(y)
-
\pi^\zeta(y)
}{\varepsilon}\| \leq C'_\zeta(1+\|y\|^{q_\pi+q_A}).\]
Since $q_\pi+q_A<p^\star$, one can choose $\tilde q>1$ such that $\tilde q(q_\pi+q_A)\leq p^\star$. Therefore,

\[\sup_{\varepsilon>0, \zeta_\varepsilon\in \mathcal O_\zeta,\; r\in [0,1]} \mathbb E\Big[ \int_0^T \|\nabla_aA^\zeta_K
\!\left(
Y_t^{\zeta_\varepsilon},
\pi^{\zeta+r\varepsilon\eta}
(Y_t^{\zeta_\varepsilon})
\right)^\top\times
\frac{
\pi^{\zeta_\varepsilon}(Y_t^{\zeta_\varepsilon})
-
\pi^\zeta(Y_t^{\zeta_\varepsilon})
}{\varepsilon}\|^{\tilde q}\Big]dt<\infty.\]
Then, the family $\Big(\nabla_aA^\zeta_K
\!\left(
Y_t^{\zeta_\varepsilon},
\pi^{\zeta+r\varepsilon\eta}
(Y_t^{\zeta_\varepsilon})
\right)^\top\times
\frac{
\pi^{\zeta_\varepsilon}(Y_t^{\zeta_\varepsilon})
-
\pi^\zeta(Y_t^{\zeta_\varepsilon})
}{\varepsilon}\Big)_{\varepsilon>0, \zeta_\varepsilon\in \mathcal O_\zeta,\; r\in [0,1]}$ is uniformly integrable. 
By Vitali's convergence theorem, taking the limit when $\varepsilon\longrightarrow 0$ in  \eqref{eq:diffquotient} we deduce that \[
\nabla_{\zeta}J_K^{\zeta}(x,\mathbf{z}^K)
=
\mathbb{E}_{0,x,\mathbf{z}^K;\pi^{\zeta}}
\left[
    \int_0^T
    e^{-\rho t}
    D_{\zeta}\pi^{\zeta}(Y_t^{\zeta})^{\top}
    \nabla_a A_K^{\zeta}
    \left(
        Y_t^{\zeta},
        \pi^{\zeta}(Y_t^{\zeta})
    \right)
    dt
\right].
\]
\end{proof}

\paragraph{Connection with actor learning.}

Theorem~\ref{thm:dpg} provides the theoretical basis for the actor
update. Once the advantage-rate critic has been trained locally around
the current policy action, we use
\[
    \nabla_a\overline {\mathcal Q}_\psi
    \left(
        t,x,\mathbf z^K,
        \pi^\zeta(t,x,\mathbf z^K)
    \right)
    \approx
    \nabla_a A_K^\zeta
    \left(
        t,x,\mathbf z^K,
        \pi^\zeta(t,x,\mathbf z^K)
    \right).
\]
From
\eqref{eq:normalized_advantage_critic} we get
\[
    \nabla_a {\mathcal Q}_{\psi,\zeta}(t,x,\mathbf z^K,a)
    =
    \nabla_a\overline {\mathcal Q}_\psi(t,x,\mathbf z^K,a).
\]
Substituting these results yields a sample-based approximation of
$\nabla_\zeta J_K^\zeta$ and the precise actor loss and its joint
implementation with the critic update are given in
Subsection~\ref{sec:hawkes_ctddpg_training}.

\subsection{Full Hawkes CT-DDPG Algorithm}
\label{sec:hawkes_ctddpg_training}
 The numerical implementation follows the CT-DDPG method of
\cite{ChengGuoZhang2026} extended to Hawkes processes with additional terms in the advantage rate function. We extend the algorithm to the lifted Hawkes state
$
    Y_n^K
    :=
    \left(
        t_n,X_{t_n}^K,\mathbf Z_{t_n}^K
    \right),
$
where the memory filters $\mathbf Z_{t_n}^K$ are updated from the
observed event times according to
Algorithm~\ref{alg:state_update}. We therefore record only the main
critic and actor updates. Let $V_\theta:
    \mathcal Y_K\longrightarrow\R$
be the value network and let
$\overline {\mathcal Q}_\psi:
    \mathcal Y_K\times A\longrightarrow\R$
be the raw action-dependent advantage-rate network. For a fixed actor
$\pi^\zeta$, let ${\mathcal Q}_{\psi,\zeta}(y,a)$ be the normalized advantage-rate critic defined in \eqref{eq:normalized_advantage_critic}
\begin{equation*}
    {\mathcal Q}_{\psi,\zeta}(y,a)
    :=
    \overline {\mathcal Q}_\psi(y,a)
    -
    \overline {\mathcal Q}_\psi
    \left(
        y,\pi^\zeta(y)
    \right),
\end{equation*}
with ${\mathcal Q}_{\psi,\zeta}
    \left(
        y,\pi^\zeta(y)
    \right)
    =0,$
consistently with the normalization of
$A_K^\zeta$. For simplicity, we consider a time-discretization $(t_k)_{k\geq 0}$ with $t_0=0$ and path $h>0$, that is $t_{n+1}-t_n=h.$
Given a contiguous \(L\)-step trajectory segment sampled from the
replay buffer, where \(L\ge1\) denotes the multi-step TD horizon,
define
\begin{align*}
    \delta_{n,L}^{\theta,\psi,\zeta}
    :={}&
    e^{-\rho Lh}\widehat V_{n+L}
    -
    V_\theta(Y_n^K)+
    h\sum_{\ell=0}^{L-1}
    e^{-\rho\ell h}
    \Big[
        c\left(
            t_{n+\ell},
            X_{t_{n+\ell}}^K,
            a_{n+\ell}
        \right)
        -
        {\mathcal Q}_{\psi,\zeta}
        \left(
            Y_{n+\ell}^K,
            a_{n+\ell}
        \right)
    \Big],
\end{align*}
where
\[
    \widehat V_{n+L}
    :=
    \begin{cases}
        g(X_T^K),
        & t_{n+L}=T,\\
        V_{\theta^-}(Y_{n+L}^K),
        & t_{n+L}<T,
    \end{cases}
\]
and $V_{\theta^-}$ denotes a target value network. If the replay
buffer stores the cost accumulated over an interval rather than a cost
rate, the corresponding interval cost replaces $h\,c(t_n,X_{t_n}^K,a_n)$. For a batch $\mathcal B$ of replayed path segments, the critic loss is
\begin{align*}
    \mathcal L_{\mathrm{critic}}(\theta,\psi)
    :={}&
    \frac{1}{|\mathcal B|}
    \sum_{n\in\mathcal B}
    \left|
        \delta_{n,L}^{\theta,\psi,\zeta}
    \right|^2+
    \lambda_T
    \frac{1}{|\mathcal B_T|}
    \sum_{b\in\mathcal B_T}
    \left|
        V_\theta
        \left(
            T,X_T^{K,b},\mathbf Z_T^{K,b}
        \right)
        -
        g(X_T^{K,b})
    \right|^2,
\end{align*}
where $\mathcal B_T$ is a batch of terminal states. The critic
parameters are updated by
\[
    \theta
    \leftarrow
    \theta
    -
    \eta_\theta
    \nabla_\theta
    \mathcal L_{\mathrm{critic}},
    \qquad
    \psi
    \leftarrow
    \psi
    -
    \eta_\psi
    \nabla_\psi
    \mathcal L_{\mathrm{critic}}.
\]
During this update, the actor and target-network parameters are held
fixed. On the other hand, Theorem~\ref{thm:dpg} motivates the actor loss
\begin{equation}
\label{eq:actor_loss}
    \mathcal L_{\mathrm{actor}}(\zeta)
    :=
    \frac{1}{|\mathcal B_A|}
    \sum_{b\in\mathcal B_A}
    e^{-\rho t_b}
    \overline {\mathcal Q}_\psi
    \left(
        Y_b^K,
        \pi^\zeta(Y_b^K)
    \right),
\end{equation}
where $\mathcal B_A$ is a batch of nonterminal lifted states. Indeed,
holding the sampled states and $\psi$ fixed,
\[
\begin{aligned}
    \nabla_\zeta
    \mathcal L_{\mathrm{actor}}(\zeta)
    =
    \frac{1}{|\mathcal B_A|}
    \sum_{b\in\mathcal B_A}
    e^{-\rho t_b}
    D_\zeta\pi^\zeta(Y_b^K)^\top
    \nabla_a\overline {\mathcal Q}_\psi
    \left(
        Y_b^K,
        \pi^\zeta(Y_b^K)
    \right).
\end{aligned}
\]
The raw network $\overline {\mathcal Q}_\psi$ is used in
\eqref{eq:actor_loss}, since fully differentiating the normalized
quantity
${\mathcal Q}_{\psi,\zeta}(y,\pi^\zeta(y))$ would give zero. Because the present
problem is formulated as cost minimization, the actor update is
\[
    \zeta
    \leftarrow
    \zeta
    -
    \eta_\zeta
    \nabla_\zeta
    \mathcal L_{\mathrm{actor}}(\zeta).
\]

Replay sampling, exploration noise, target networks, and Polyak
averaging are implemented as in
\cite{ChengGuoZhang2026}. The complete procedure is summarized in
Algorithm~\ref{alg:hawkes_ctddpg}.

\begin{algorithm}[H]
\caption{Hawkes CT-DDPG}
\label{alg:hawkes_ctddpg}
\begin{algorithmic}[1]

\Require Actor $\pi^\zeta$, value critic $V_\theta$, raw advantage
critic $\overline {\mathcal Q}_\psi$, target networks, replay buffer
$\mathcal D$, and filter parameters $(\beta,K)$.

\For{each episode}

    \State Initialize $X_0^K$ and $\mathbf Z_0^K=0$.

    \For{$n=0,\ldots,N-1$}

        \State Form
        \[
            Y_{t_n}^K
            =
            (t_n,X_{t_n}^K,\mathbf Z_{t_n}^K).
        \]

        \State Select an exploratory action
        \[
            a_n
            =
            \operatorname{Proj}_A
            \bigl(
                \pi^\zeta(Y_{t_n}^K)+\varepsilon_n
            \bigr).
        \]

        \State Apply $a_n$, and observe the interval cost,
        $X_{t_{n+1}}^K$, and the events occurring on
        $(t_n,t_{n+1}]$.

        \State Update $\mathbf Z_{t_{n+1}}^K$ using
        Algorithm~\ref{alg:state_update}, form $Y_{n+1}^K$, and
        store the transition in $\mathcal D$.

        \State Sample replayed path segments and update
        $(\theta,\psi)$ by minimizing
        $\mathcal L_{\mathrm{critic}}$.

        \State Sample nonterminal lifted states and update
        $\zeta$ by minimizing
        $\mathcal L_{\mathrm{actor}}$.

        \State Update the target networks by Polyak averaging.

    \EndFor

\EndFor

\end{algorithmic}
\end{algorithm}

\section{Numerical illustrations}
\label{sec:numerics}
We now illustrate our result by comparing the Hawkes CT-DDPG efficiency in three settings for the Hawkes kernels. For the sake of simplicity, we work in the one-dimensional setting, that is $m=d_x=d_W=1$. We thus set a generative model to test the efficiency of our algorithm. The dynamics of the outcome $X$ is given by \[  dX_t = \bigl(b_0 - \kappa X_t - b_a a_t\bigr)\,dt
   + \bigl[\sigma_0 + \sigma_x X_t + \sigma_a a_t\bigr]_+ dW_t
   + \operatorname{clip}\!\bigl(\gamma_0 + \gamma_x X_{t-} + \gamma_a a_t,
   \gamma_{\min},\gamma_{\max}\bigr) dN_t,\]
where $N$ is a Hawkes process with intensity \[
 \lambda_t = \mu(t,X_{t-},a_t)
 + \alpha\,Q(a_t)\int_0^{t-}\phi(t-s)\,dN_s,\]
 with \[\mu(t,X,a) = \operatorname{clip}\!\bigl(\mu_0 + \mu_x X + \mu_a a,
 \mu_{\min},\mu_{\max}\bigr),
\text{ and }
 Q(a) = 1 - \frac{c_{\mathrm{eff}}\,a}{a_{\mathrm{half}} + a}.
\]
$Q$ represents a saturating Hill-type attenuation kernel. The control $a$ reduces the excitation gain, with half-effect at $a=a_{\mathrm{half}}$, and maximal reduction parameter $c_{\mathrm{eff}}$. Regarding the stochastic control optimization parameters, we choose \[
 c_t = c_x X_t^2 + c_a a_t^2,
 \qquad
 g(X_T) = \,\tilde c X_T^2 .
\] The numerical setup with choices of values for each parameter is given in Appendix \ref{app:setup}.
We recall that these dynamics and all the introduced parameters remain unknown for the Hawkes CT-DDPG method and are only set for data generation purposes and comparison with model-dependent methods (named the Oracle when Markovian solutions are available). The three scenarios tested are described below.
\begin{itemize}
    \item Single-exponential kernel. In this case, $\phi^{\exp}(\tau)=e^{-\rho_{\mathrm{exp}}\tau},$ with $\rho_{\mathrm{exp}}>0$, and the system is Markovian so that our Markovianization procedure is exact. We compare Hawkes CT-DDPG with an Oracle reducing the stochastic control problem to an HJB equation with state variable $X,\lambda$ similarly to \cite{Bensoussan2024,callegaro2025stochastic}.
    \item Erlang Kernel. We consider $\phi^{\mathrm{Erlang}}(\tau)=\rho_{\mathrm{er}}\tau e^{-\rho_\mathrm{er}\tau},$ with $\rho_\mathrm{er}>0.$
    In particular, the system can still be Markovianized introducing two auxiliary processes $ L^1, L^2$ with dynamics

     \[dL^1_t = -\rho_\mathrm{er}\, L^1_t\,dt + dN_t,\quad  dL^2_t = \rho_\mathrm{er}\bigl(L^1_t - L^2_t\bigr)\,dt.\]
Then,
\[ \lambda^{a}_t = \mu(t,X_{t-}^a,a_t) + \alpha\,Q(a_t)\,L^2_{t-}.\]
Here \(\alpha\) is the excitation-scale parameter in the general intensity
specification above; it is not generated by the Erlang Markovianization.
We refer to  \cite{duarte2019stability} for more details. 
\item Power-law kernel. This fully non-Markovian kernel is given by $\phi(\tau) = h\eta^h(\tau+\eta)^{-(1+h)},$
where $\tau, \eta, h>0.$ This kernel has infinite memory and is not Markovian.
\end{itemize} \paragraph{Known-parameter HJB benchmarks.}

To separate the error arising from model-free learning from the
difficulty of the underlying control problem, we compare the learned
policies with model-based benchmarks that are given full knowledge of
the environment parameters, including the coefficients
\(b,\sigma,\gamma,\mu\), the Hawkes kernel and its control dependence,
and the cost functions. For the single-exponential kernel, the Hawkes
memory admits an exact one-factor Markov representation. For the Erlang
kernel, it admits an exact two-phase Markov representation with
auxiliary states \((L^1,L^2)\). In these two experiments, we formulate
the corresponding finite-dimensional HJB equation and solve it
numerically using the Deep Galerkin Method introduced in \cite{Sirignano_2018}. We refer to the
resulting feedback policies as the known-parameter DGM/HJB oracles,
up to the numerical approximation error of the HJB solver. The power-law kernel does not admit an exact finite-dimensional Markov
representation. In this case, we first approximate the true kernel by
a finite exponential mixture,
\[
    \varphi^{\mathrm{mix}}(\tau)
    =
    \sum_{j=1}^{M}
    w_j e^{-\beta_j\tau},
\]
construct the corresponding finite-dimensional Markov state, and solve
the HJB equation of this approximating model using DGM. Since this
benchmark contains both kernel-approximation error and numerical HJB
error, it is referred to below as the \emph{mixture DGM/HJB benchmark},
rather than an exact oracle for the original power-law problem. All
benchmark and learned policies are evaluated using the same Monte
Carlo testing protocol.

All these kernels are tested with Hawkes CT-DDPG combining Algorithm \ref{alg:state_update} (Markovianization) and Algorithm \ref{alg:hawkes_ctddpg} (Hawkes CT-DDPG). We compare our results with an Oracle, the best static strategy and two other reinforcement learning methods in discrete-time: Soft Actor-Critic (SAC), see \cite{haarnoja2018soft} and Deep Deterministic Policy Gradient (DDPG), see \cite{silver2014deterministic}. Finally, we also emphasize the effect of the Markovianization Algorithm \ref{alg:state_update}. We compare the Hawkes CT-DDPG algorithm without taking into account the memory (current state, Algorithm \ref{alg:hawkes_ctddpg} only) with the Hawkes CT-DDPG Markovianized (Algorithms \ref{alg:state_update} and \ref{alg:hawkes_ctddpg}, named filtered) for Erlang and power-law kernels. The exact numerical values for each experiment with confidence intervals are given in Appendix \ref{app:setup}.

\paragraph{Single-exponential Figure \ref{fig:single}: sanity check with Oracle comparison.}
The true exponential kernel is represented exactly by one member of the
preselected filter. We note that CT-DDPG attains cost $0.1744$, reducing cost
by $46.3\%$ relative to the validation-selected static control $a\equiv0.39$,
whose cost is $0.3247$. The known-parameter DGM/HJB oracle, constructed from the exact
one-factor Markov representation, remains better at
$0.1658$, as expected. This experiment
therefore checks that the observed-filter implementation can exploit an exact
finite-dimensional memory state; it is not a comparison against SAC or
standard DDPG, and it does not contain a current-state ablation.

\begin{figure}
    \centering
    \includegraphics[width=0.5\linewidth]{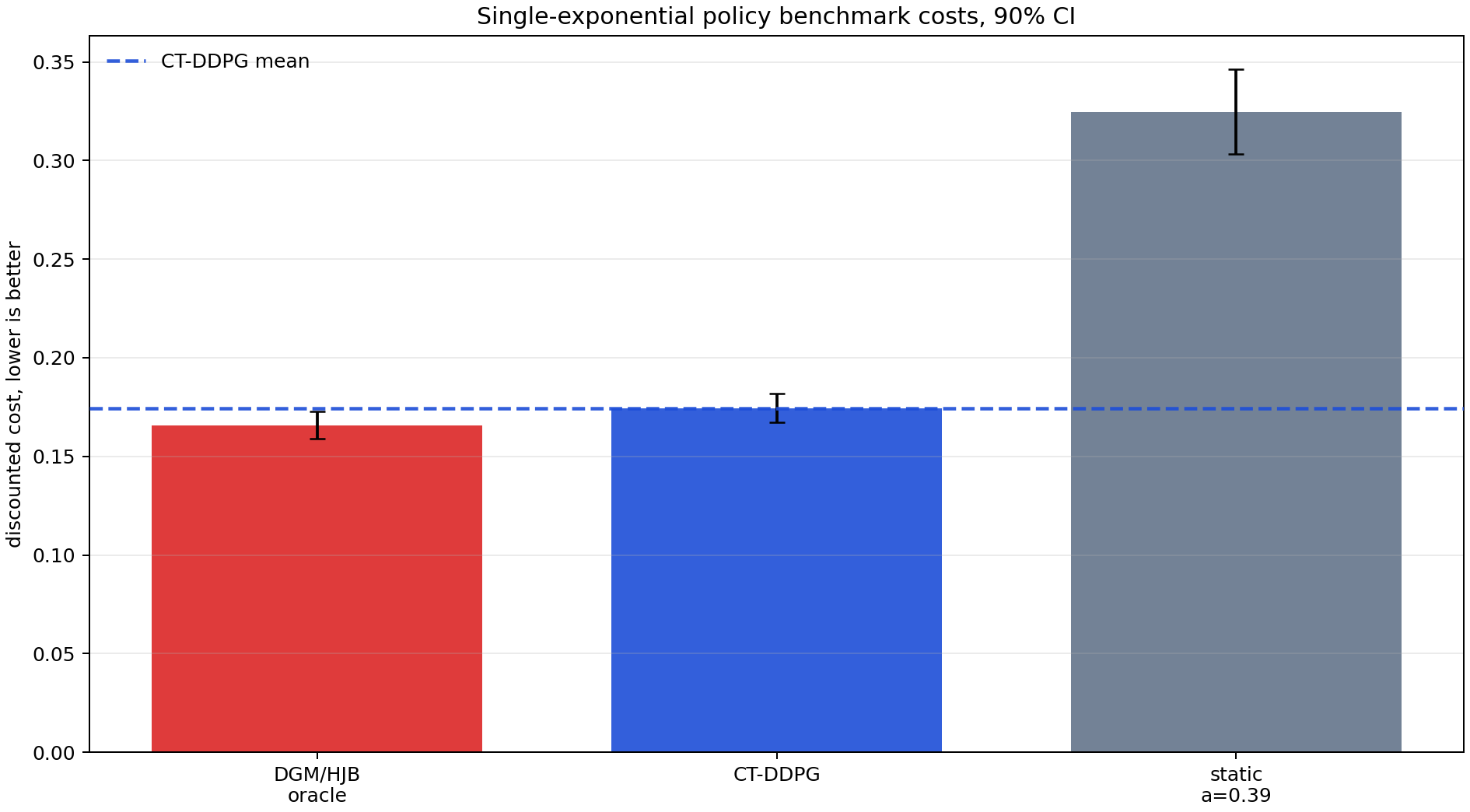}
    \caption{Single Kernel Environment:  Costs Comparison and Sanity Check. Exact-Oracle (red, left), CT-DDPG (blue, center), and validation-selected static strategy (grey, right). }
    \label{fig:single}
\end{figure}

\paragraph{Erlang kernel Figure \ref{fig:erlang}: Oracle \textit{vs.} Hawkes CT-DDPG \textit{vs} discrete time reinforcement learning.}
The oracle exploits the two-phase Markovianization \cite{duarte2019stability} reducing the problem to a standard stochastic control optimization with state $(X,L^1,L^2)$, while our Markovianization in Algorithm \ref{alg:hawkes_ctddpg} is applied to a 12-dimensional
exponential filter bank. The
DGM/HJB oracle remains as expected the best overall policy, with cost $0.1037$ and is set as benchmark for the efficiencies of Hawkes CT-DDPG. The filtered Hawkes CT-DDPG (with cost 0.1056) performs better than static, SAC and DDPG and remains the closest to the oracle. Relative to current-state observations, filtering
reduces mean cost by $13.4\%$ for CT-DDPG; $2.55\%$ for SAC, and $12.9\%$
for DDPG. Filtered CT-DDPG is numerically the best learned policy, improving on
filtered SAC and filtered DDPG by $0.89\%$ and $4.31\%$, respectively. 

\begin{figure}
    \centering
    \begin{subfigure}[b]{0.7\textwidth}
        \includegraphics[width=\linewidth]{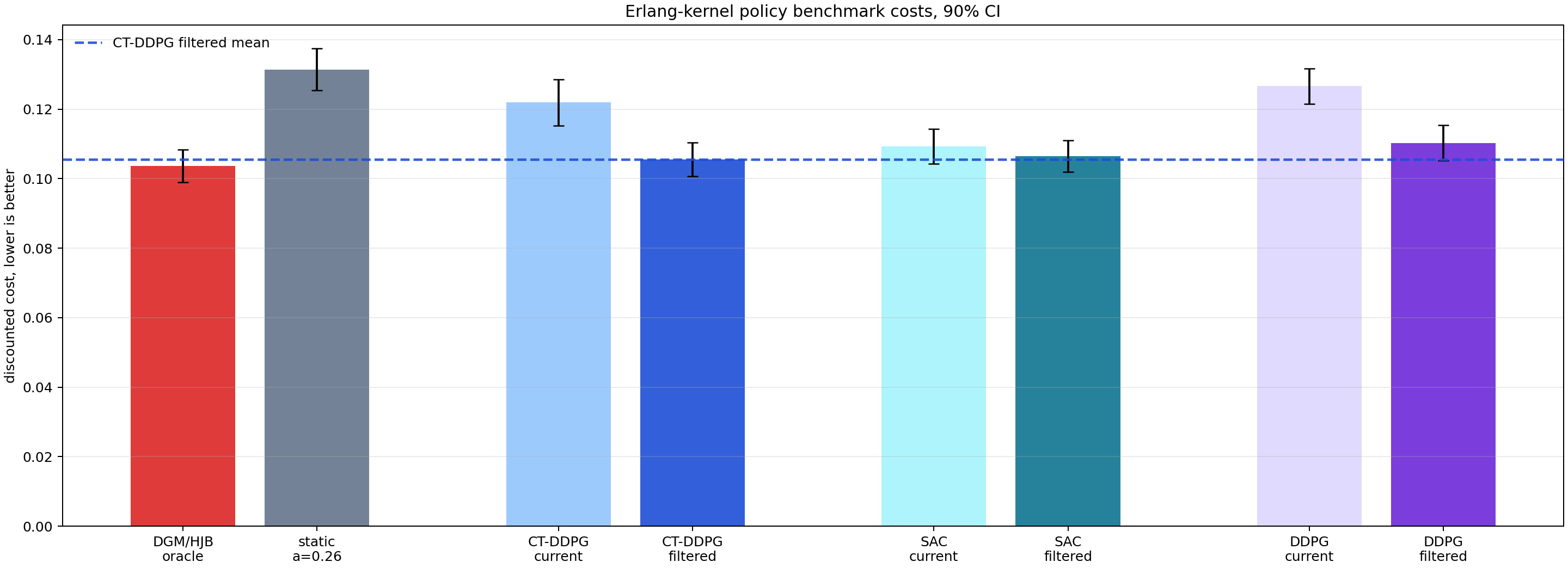}
    \end{subfigure}
    \begin{subfigure}[b]{0.7\textwidth}
        \includegraphics[width=\linewidth]{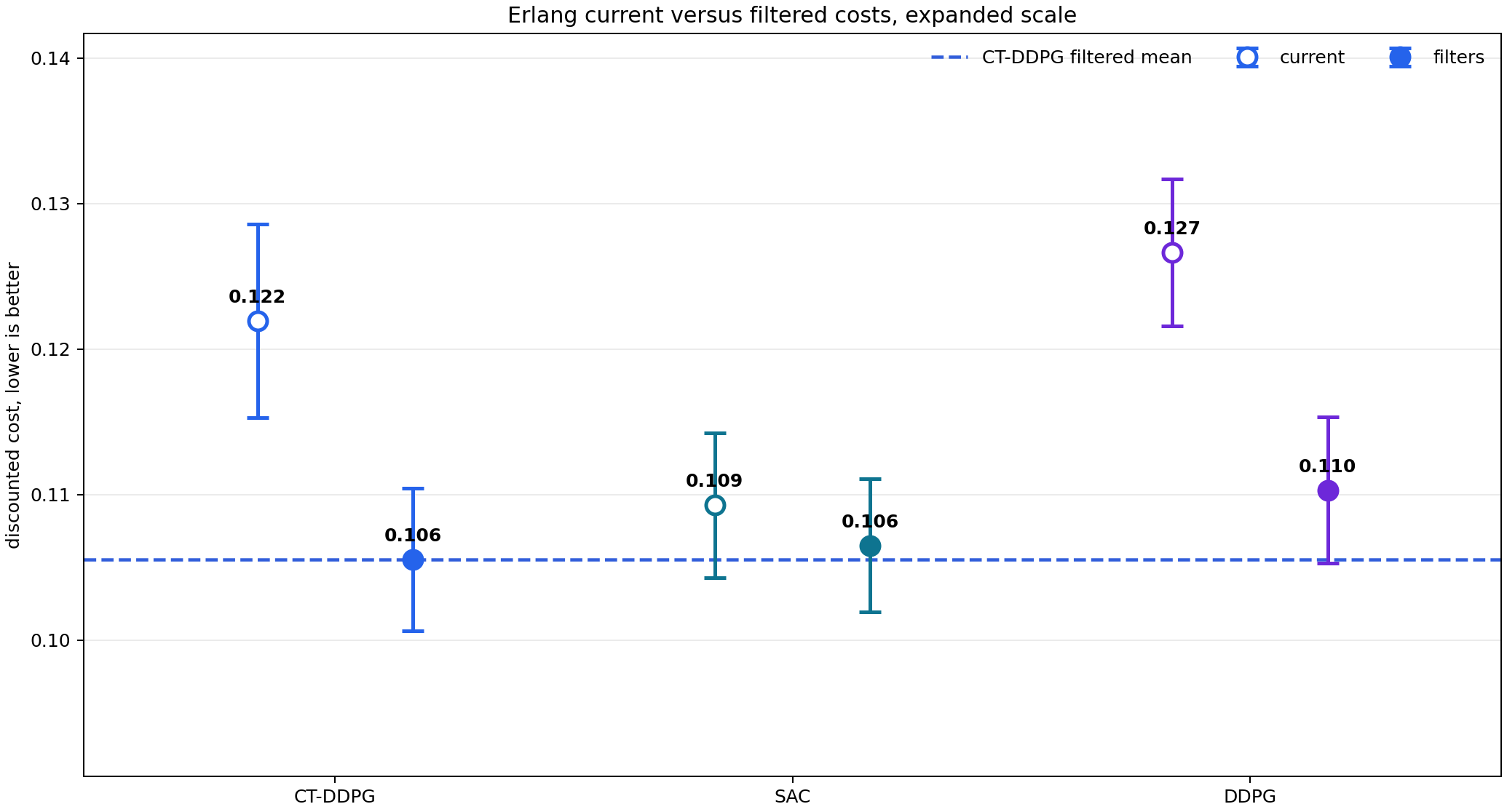}
    \end{subfigure}
    \caption{Erlang Kernel Environment Costs Comparison. Top from left to right: oracle (red), best static, CT-DDPG no Markovianization (current) and with Markovianization (filtered), SAC current and filtered and DDPG current and filtered. Bottom: value and confidence intervals for CT-DDPG current and filtered, SAC current and filtered, DDPG current and filtered.}
    \label{fig:erlang}
\end{figure}

\paragraph{Power-law kernel Figure \ref{fig:power}: Hawkes CT-DDPG \textit{vs.} discrete time reinforcement learning.}
Because the power-law model has no exact finite-dimensional Markov
representation, no exact HJB oracle is available. We instead report the
mixture DGM/HJB benchmark obtained by first approximating the
power-law kernel with a finite exponential mixture and then solving the
corresponding finite-dimensional HJB equation by DGM. The learning methods observe a 20-dimensional exponential filter approximation. Filtering reduces
mean cost by $4.67\%$ for CT-DDPG, $4.82\%$ for SAC, and $1.47\%$ for
DDPG. Filtered CT-DDPG again has the lowest cost among learned policies. Its
cost is $6.96\%$ below filtered SAC and $8.85\%$ below filtered DDPG, while
the mixture DGM/HJB oracle remains lower at $1.5696$.

\begin{figure}
    \centering
    \begin{subfigure}[b]{0.7\textwidth}
        \includegraphics[width=\linewidth]{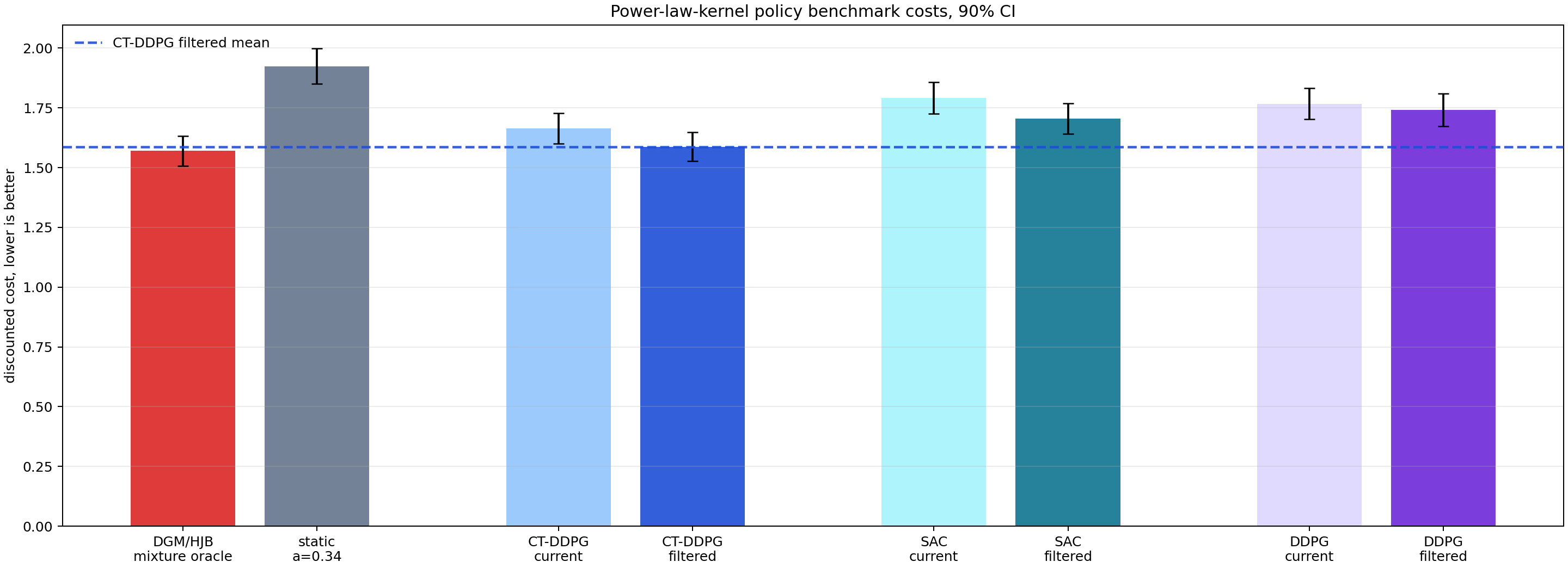}
    \end{subfigure}
    \begin{subfigure}[b]{0.7\textwidth}
        \includegraphics[width=\linewidth]{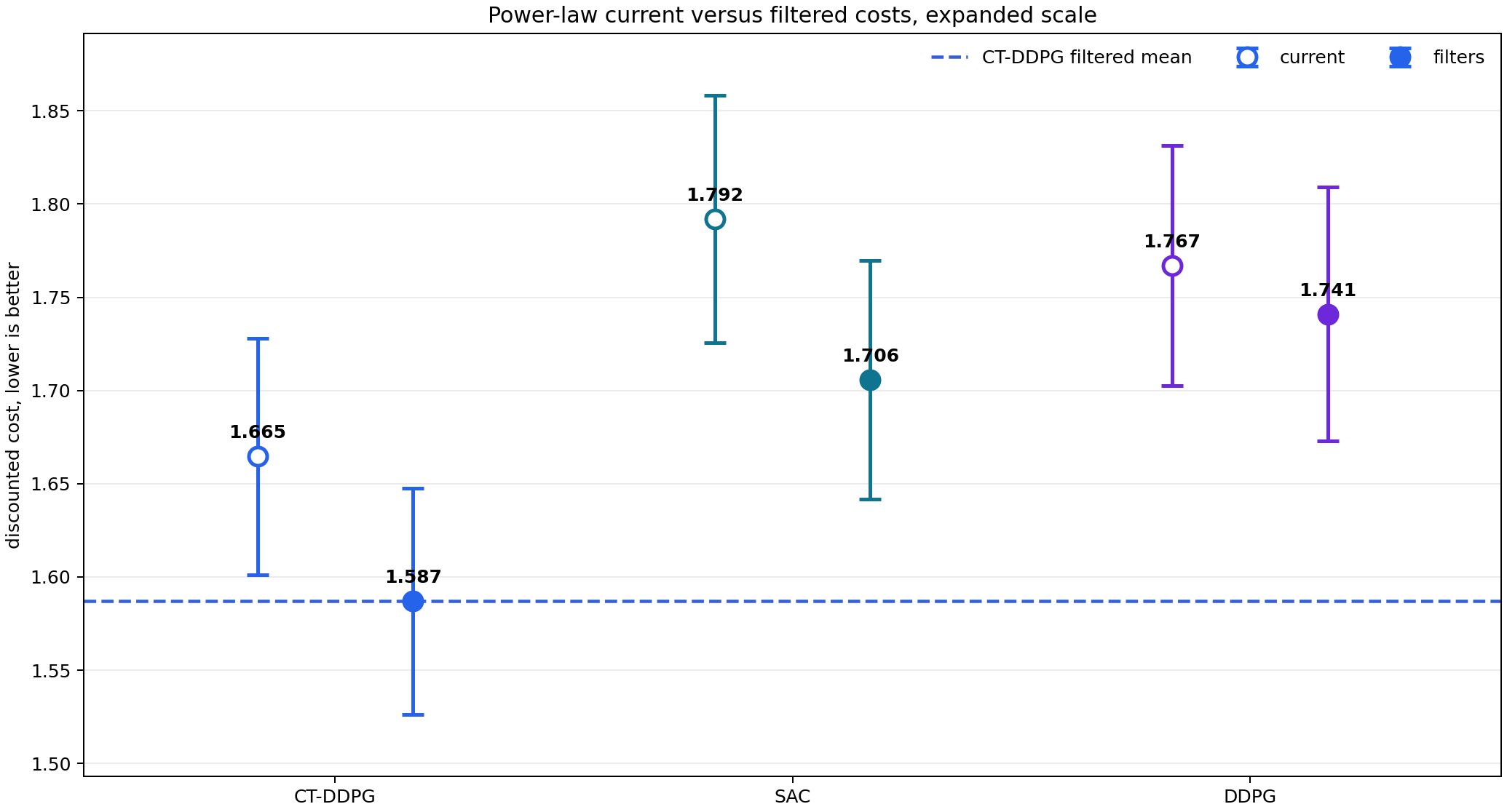}
    \end{subfigure}
    \caption{Power Kernel Environment Costs Comparison. Top from left to right: mixture oracle (red), best static, CT-DDPG no Markovianization (current) and with Markovianization (filtered), SAC current and filtered and DDPG current and filtered. Bottom: value and confidence intervals for CT-DDPG current and filtered, SAC current and filtered, DDPG current and filtered.}
    \label{fig:power}
\end{figure}

\paragraph{Cross-experiment conclusion.}
The single-exponential experiment serves as a sanity check, confirming that Hawkes CT-DDPG can effectively exploit a finite-dimensional Markov representation. The Erlang and power-law experiments provide more demanding tests: both models are non-Markovian in the physical state alone, although the Erlang kernel admits an exact finite-dimensional Markov lift, whereas the power-law kernel requires an exponential-mixture approximation. In both settings, augmenting the physical state with observable exponential filters improves the performance of CT-DDPG, SAC, and DDPG relative to their state-only counterparts. Moreover, when all model-free methods are supplied with the same preselected filter bank and filter-update rule, Hawkes CT-DDPG achieves the lowest mean cost in both experiments. These results provide consistent evidence that the proposed method is well suited to the control of Hawkes-driven systems with unknown dynamics and kernel parameters.

\bibliographystyle{plain}
				%
\bibliography{ref}

\appendix

\section{Numerical Setup}
\label{app:setup}

\newcolumntype{L}[1]{>{\raggedright\arraybackslash}p{#1}}

The simulator reports discounted reward internally, whereas the tables and
figures below report the equivalent positive discounted cost. Thus, lower
values indicate better policies. All Monte Carlo comparisons use common test
seeds within an experiment.

\begin{table}[h!]
\centering
\caption{High-level experiment definitions.}
\label{tab:experiment_definitions}
\small
\setlength{\tabcolsep}{3pt}
\renewcommand{\arraystretch}{1.12}
\begin{tabular}{@{}L{0.16\linewidth}L{0.26\linewidth}L{0.26\linewidth}L{0.26\linewidth}@{}}
\toprule
Item & Single exponential & Erlang kernel & Power-law kernel \\
\midrule
True kernel &
\(\phi(u)=e^{-1.30u}\) &
\(\phi(u)=\rho u e^{-\rho u}\), \(\rho=1.15\) (the scale is carried by \(\alpha\)) &
\(\phi(u)=b\eta^b(u+\eta)^{-(1+b)}\), \(\eta=0.12,\ b=0.80\) \\
True Markovization &
Exact, one exponential memory is included in the filter bank &
Exact, two Erlang auxiliary states \(Y^1,Y^2\) &
No finite exact Markovization; simulator uses event-history power-law memory \\
CT-DDPG memory state &
Exponential filter bank \(Z^k\), \(K=8\) &
Exponential filter bank \(Z^k\), \(K=12\) &
Exponential filter bank \(Z^k\), \(K=20\) \\
DGM/HJB oracle &
Known-parameter exact single-exponential Markov state &
Known-parameter exact Erlang Markov state &
Known-parameter exponential-mixture approximation with positive memory link \\
Generic RL baselines &
Not used in this sanity-check experiment &
SAC and DDPG with current-state and filtered observations &
SAC and DDPG with current-state and filtered observations \\
\bottomrule
\end{tabular}
\end{table}

\begingroup
\small
\setlength{\tabcolsep}{3pt}
\renewcommand{\arraystretch}{1.10}
\begin{longtable}{@{}L{0.19\linewidth}L{0.25\linewidth}L{0.25\linewidth}L{0.25\linewidth}@{}}
\caption{Environment and objective parameters.}
\label{tab:environment_parameters}\\
\toprule
Parameter & Single exponential & Erlang kernel & Power-law kernel \\
\midrule
\endfirsthead
\toprule
Parameter & Single exponential & Erlang kernel & Power-law kernel \\
\midrule
\endhead
\bottomrule
\endfoot
Horizon \(T\) & \(5.0\) & \(5.0\) & \(8.0\) \\
Time step \(\Delta t\) & \(0.02\) & \(0.02\) & \(0.02\) \\
Discount rate \(\beta_{\rm disc}\) & \(0.02\) & \(0.02\) & \(0.02\) \\
Baseline \(\mu(t,x,a)\) &
\(\operatorname{clip}(2.05+0.10x-0.08a,10^{-6},6)\) &
\(\operatorname{clip}(2.00+0.03x-0.04a,10^{-6},5)\) &
\(\operatorname{clip}(0.60+0.002x-0.002a,10^{-6},5)\) \\
Excitation scale \(\alpha\) & \(1.25\) & \(1.05\) & \(0.99\) \\
Control effect \(q(a)\) &
\(c_a=1.25,\ a_{1/2}=0.35\) &
\(c_a=1.30,\ a_{1/2}=0.35\) &
\(c_a=1.08,\ a_{1/2}=0.10\)\\
Filter decays &
\(0.325k,\ k=1,\ldots,8\) &
\(0.23k,\ k=1,\ldots,12\)&
\(1.00k,\ k=1,\ldots,20\) \\
Drift \(b(x,a)\) &
\(0.02-0.45x-0.72a\) &
\(0.02-0.45x-0.70a\) &
\(-1.00x-0.02a\) \\
Volatility \(\sigma(x,a)\) &
\([0.05+0.015x+0.08a]_+\) &
\([0.05+0.012x+0.08a]_+\) &
\([0.03+0.002x+0.005a]_+\) \\
Jump size \(\gamma(x,a)\) &
\(\operatorname{clip}(0.10+0.015x-0.015a,10^{-4},0.25)\) &
\(\operatorname{clip}(0.085+0.005x-0.010a,10^{-4},0.20)\) &
\(\operatorname{clip}(0.50+0.002x-0.05a,10^{-4},0.90)\) \\
State running-cost coefficient \(c_x\) & \(0.80\) & \(0.3000\) & \(1.00\) \\
Control running-cost coefficient \(c_a^{\rm cost}\) & \(0.18\) & \(0.2250\) & \(0.50\) \\
Terminal-cost coefficient \(c_T\) & \(0.60\) & \(0.2250\) & \(0.50\) 
\end{longtable}
\endgroup

For a transparent sufficient-condition check, let
\(\bar q=\sup_{a\in[0,1]}q(a)=1\). On \([0,T]\), the true excitation masses are
\[
\begin{aligned}
M_{\rm exp}
 &=1.25\int_0^5 e^{-1.30u}\,du=0.96009,\\
M_{\rm Erlang}
 &=1.05\int_0^5 1.15u e^{-1.15u}\,du=0.89343,\\
M_{\rm power}
 &=0.99\int_0^8 0.80(0.12)^{0.80}(u+0.12)^{-1.80}\,du=0.95601.
\end{aligned}
\]
All are strictly below one. The signed exponential approximations are checked
through their positive-link envelope, whose corresponding masses are also
below one as reported in the table. Together with compact controls, bounded
baseline and jump coefficients, finite horizon, and the displayed Lipschitz
coefficients, this gives sufficient conditions in Proposition~\ref{prop:sufficient_approx_uniform_moments}, which verify the
moment requirement in Assumption~\ref{ass:approx_uniform_moments} (with the numerical diagnostic using
\(p_*=4\)). Since the quadratic costs have growth index \(q=1\), this also
gives \(p_*>q+1=2\).

\begin{table}[h!]
\centering
\caption{Mean discounted cost with 90\% Monte Carlo intervals. Lower is better.
The intervals measure rollout uncertainty conditional on one trained policy;
bold values identify the best model-free learned policy in each comparison.}
\label{tab:numerical_results}
\small
\begin{tabular}{p{0.25\linewidth}ccc}
\toprule
Policy & Single exponential & Erlang kernel & Power-law kernel \\
\midrule
DGM/HJB oracle
& $0.1658\pm0.0070$
& $0.1037\pm0.0046$
& $1.5696\pm0.0617$ \\
CT-DDPG current
& \NA
& $0.1219\pm0.0067$
& $1.6647\pm0.0634$ \\
CT-DDPG filtered
& $0.1744\pm0.0073$
& $\mathbf{0.1056}\pm0.0049$
& $\mathbf{1.5870}\pm0.0607$ \\
SAC current
& \NA
& $0.1093\pm0.0050$
& $1.7920\pm0.0665$ \\
SAC filtered
& \NA
& $0.1065\pm0.0046$
& $1.7056\pm0.0641$ \\
DDPG current
& \NA
& $0.1266\pm0.0051$
& $1.7669\pm0.0644$ \\
DDPG filtered
& \NA
& $0.1103\pm0.0050$
& $1.7410\pm0.0680$ \\
Validation-selected static
& $0.3247\pm0.0214$
& $0.1314\pm0.0060$
& $1.9242\pm0.0734$ \\
\bottomrule
\end{tabular}
\end{table}

In the single-exponential column, CT-DDPG filtered denotes the proposed
CT-DDPG supplied with the eight-filter bank. Because the fourth decay equals
the true decay, this bank contains an exact Markov representation. SAC and
standard DDPG are not included in this sanity-check experiment.

\paragraph{Implementation and Reporting Notes.}
All neural policies are trained using normalized observations, bounded controls, minibatch optimization, gradient clipping, and a replay buffer following an initial data-collection period. Slowly updated target value networks are used for stability, with target actor networks additionally employed in the Erlang and power-law experiments, and actor updates are made more conservatively than critic updates when necessary. The reported actor skip terms are direct affine contributions from the current state and selected summaries of the observed filter bank to the actor logit; they provide a stable policy initialization but introduce neither additional observations nor knowledge of the true kernel. SAC and DDPG are trained with replay-based off-policy updates, vectorized simulation, exploration noise where appropriate, and exactly the same current-state or filtered observations used in the corresponding comparison. The DGM oracle is trained by sampling the HJB domain, with additional reachable-state sampling for the power-law approximation. During training, policies are periodically evaluated without exploration on held-out trajectories, and the checkpoint with the lowest validation cost is restored; validation episodes are used only for model selection and are separate from the final Monte Carlo sample. Warm starts, observation normalization, and reward rescaling are used only as numerical stabilization devices, while all reported results are evaluated on independent common-random trajectories and expressed in the original, unscaled cost units.

\end{document}